\UseRawInputEncoding

\documentclass[letterpaper, 10 pt]{article}
\usepackage{style}
\usepackage{cancel}
\usepackage{blindtext}
\newcommand{\E}{\mathbb{E}}

\usepackage{pdflscape}
\usepackage{mathtools}
\usepackage{amsmath,amssymb}
\usepackage{hyperref}
\usepackage{amsthm}
\usepackage{arydshln}
\usepackage{enumitem}
\usepackage{xcolor}
\usepackage{threeparttable}
\usepackage{tikz}
\usetikzlibrary{shapes.arrows}

\usepackage{pgfplots}
\pgfplotsset{compat=1.15}
\usepackage{adjustbox}
\usepackage{bm}
\usepackage{gincltex}
\usepackage{caption}
\usepackage{subcaption}
\usepgfplotslibrary{fillbetween}
\usepgfplotslibrary{groupplots}
\usetikzlibrary{plotmarks}
\usetikzlibrary{patterns}
\pgfplotsset{
tick label style={font=\footnotesize},
label style={font=\footnotesize},
legend style={font=\footnotesize},
}

\newcommand{\ty}{\tilde{y}}

\newcommand{\Prob}{\mathbbm{P}}

\allowdisplaybreaks

\usepackage{multirow}
\RequirePackage{tikz}

\usepackage{pifont}
\usepackage{xcolor,colortbl}
\usepackage{authblk}

\newcommand{\equalcontrib}{\textsuperscript{*}}

\title{\LARGE\bf
  A General-Purpose Theorem for High-Probability Bounds of
  Stochastic Approximation with Polyak Averaging
  \footnotetext[0]{A version of this paper was first submitted to a conference in May 2025.}
  }

\author[1]{Sajad Khodadadian\equalcontrib}
\author[2]{Martin Zubeldia\equalcontrib}

\affil[1]{Virginia Polytechnic Institute and State University\\
          \href{mailto:sajadk@vt.edu}{sajadk@vt.edu}}

\affil[2]{University of Minnesota\\
          \href{mailto:zubeldia@umn.edu}{zubeldia@umn.edu}}

\date{}

\begin{document}
\maketitle

\begingroup
  \renewcommand\thefootnote{\(*\)}
  \footnotetext{Equal contribution between Sajad Khodadadian and Martin Zubeldia.}  
\endgroup

\setlength{\abovedisplayskip}{5pt}
\setlength{\belowdisplayskip}{5pt}

\begin{abstract}
Polyak–Ruppert averaging is a widely used technique to achieve the optimal asymptotic variance of stochastic approximation (SA) algorithms, yet its high-probability performance guarantees remain underexplored in general settings. In this paper, we present a general framework for establishing non-asymptotic concentration bounds for the error of averaged SA iterates. Our approach assumes access to individual concentration bounds for the unaveraged iterates and yields a sharp bound on the averaged iterates. We also construct an example, showing the tightness of our result up to constant multiplicative factors. As direct applications, we derive tight concentration bounds for contractive SA algorithms and for algorithms such as temporal difference learning and $Q$-learning with averaging, obtaining new bounds in settings where traditional analysis is challenging.
\end{abstract}

\section{Introduction}
Stochastic approximation (SA) algorithms are central to modern machine learning and optimization, serving as the foundation for methods ranging from stochastic gradient descent to Reinforcement Learning (RL) algorithms such as temporal difference (TD) learning and $Q$-learning. The general form of an SA algorithm is given by
\begin{align}\label{eq:SA_rec}
    x_{k+1} = (1 - \alpha_k)x_k + \alpha_k F(x_k, w_{k+1}),
\end{align}
where $\alpha_k=\alpha/(k+h)^\xi$ for some $h>1$, $\alpha>0$, and $\xi\in[0,1]$ is the step size, and $w_{k+1}$ represents random noise, possibly arising from sampling or environmental interactions. A significant body of work has been devoted to analyzing the convergence of such algorithms, providing both mean-square and high-probability error bounds under various assumptions (e.g., \cite{srikant2019finite, chen2020finite, durmus2021tight}).

A key technique for improving the asymptotic variance and robustness of SA algorithms is \emph{Polyak-Ruppert averaging}, where instead of using the iterates $x_k$ directly, one uses the averaged sequence 
\begin{align}\label{eq:Polyak_rec}
    y_k = \frac{1}{k+1} \sum_{i=0}^k x_i.
\end{align}
This averaging has been shown to achieve optimal asymptotic behavior in a variety of settings, including stochastic gradient descent \cite{polyak1992acceleration, moulines2011non, bach2013non}, contractive SA \cite{chandak2025finite}, and RL algorithms \cite{mou2020linear, li2023polyak}. Despite its empirical and theoretical effectiveness, sharp finite-time high-probability bounds for averaged SA in general settings remain underexplored. 

\subsection{Our Contributions}

In this paper, we develop a general-purpose framework for deriving \emph{finite-time high-probability bounds} for averaged SA iterates. Specifically, we assume that the base SA iterates $x_i$, with probability at least $1-\delta'$, satisfies a high-probability bound of the form $\|x_i - x^*\|_c^2 \leq \alpha_i f(\delta',k)$ for all $\delta'\in(0,1)$ and $0\leq i\leq k$, where $x^*$ is the limit point (e.g., the fixed point or optimum), and $f$ is a known function of $\delta '$ and $k$. Building on this assumption, we establish that for any $\delta\in(0,1)$, with probability at least $1-\delta$, the averaged iterates satisfy the bound
\begin{align*}
    \|y_k - x^*\|_c^2 \leq c \frac{\log(1/\delta) + d}{k+1} \bar{\sigma}^2 + o\left(\frac{\log(1/\delta)}{k}\right),
\end{align*}
where $d$ is the dimension of the variables, $\bar{\sigma}^2$ is a variance proxy, and $c$ is a universal constant. We also construct an example to show that our result is tight up to the constant factor $c$.

Our general theorem has several important consequences. First, we provide a principled way to select the step size schedule $\{\alpha_k\}_{k\geq 0}$ to optimize the concentration of averaged SA under a variety of operators $F$. Second, we apply our framework to \emph{contractive SA algorithms}, establishing sharp concentration bounds in these settings. Third, we leverage our results to derive new high-probability bounds on the convergence of averaged TD-learning improving upon the prior result. Furthermore, we establish concentration bounds on the convergence of averaged $Q$-learning and off-policy TD-learning, which to the best of our knowledge are the first in the literature. Finally, the general nature of our framework allows it to be used to obtain high-probability bounds of averaged SA iterates based on future refinements of non-average SA bounds (e.g., if high-probability bounds for SA under more general assumptions are developed in the future).

\section{Related Work} \label{sec:related_work}

\paragraph{Finite-Sample Guarantees for Contractive SA.}

Recent advances have significantly improved finite-time analyses of  SA algorithms in contractive and strongly-monotone settings, often driven by applications in RL and optimization. Early finite-sample mean-square error bounds were established for specific SA instances such as TD-learning \cite{Bhandari2018, Dalal2018a, Dalal2018b} and $Q$-learning \cite{EvenDar2003, Beck2012Q}, but lacked tail probability guarantees. Subsequent work provided high-probability concentration results. For instance, \cite{srikant2019finite} introduced non-asymptotic bounds for unprojected linear SA, and \cite{QuWierman2020} analyzed asynchronous $Q$-learning. Using martingale concentration techniques \cite{Freedman1975, Hoeffding1994}, \cite{ThoppeBorkar2019} obtained an $\mathcal{O}(k^{-1/2})$ tail bound, while \cite{chen2020finite} utilized Lyapunov methods with convex envelopes to achieve finite-time results. This research culminated in general frameworks for Markovian SA: \cite{Chen2023Lyap} provides polynomial tail bounds applicable to nonlinear contractive SA with Markovian noise. In particular, \cite{chen2023concentration} (also \cite{Chandak2022, Chandak2023TD0}) showed subgaussian concentration under minimal noise assumptions via exponential supermartingales, extending results beyond the i.i.d. setting. Sharper concentration bounds emerged for linear SA (LSA). \cite{durmus2021tight} derived tight $\mathcal{O}(1/k)$ error rates for constant step size LSA. Concurrently, \cite{mou2020linear} provided detailed non-asymptotic bounds and asymptotic variances for averaged LSA (see also \cite{Lakshmi2020}). In nonlinear SA, notably $Q$-learning, averaging enhances performance. \cite{Wainwright2019} established exponential tail bounds for averaged $Q$-learning, while \cite{Li2021AISTATS} (also \cite{Li2022TIT}) derived functional CLTs and optimal non-asymptotic error bounds for synchronous averaged $Q$-learning. Similarly, \cite{KhamaruWainwright2021} demonstrated minimax-optimal complexity using averaging in variance-reduced $Q$-learning. Addressing practical concerns, \cite{Samsonov2024finite} showed universal step sizes combined with tail-averaging yield near-optimal guarantees for linear TD-learning without projections or feature knowledge. 

\paragraph{Polyak-Ruppert Averaging in Stochastic Approximation.}
Averaging is a classical variance-reduction technique in SA, known to improve the asymptotic efficiency of iterates \cite{polyak1992acceleration}. For linear SA, which includes TD-learning with linear function approximation, \cite{DurmusMOR2024} provide finite-time high-probability bounds for averaging. Their bounds match the minimax asymptotic variance and are instance-optimal. \cite{samsonov2024improved} strengthen this by showing that tail-averaged TD-learning with constant step sizes achieves optimal-order bias and variance, without requiring step size tuning. In $Q$-learning and other nonlinear SA settings, \cite{li2024isqlearning} establish tight sample complexity bounds. Follow-up work shows Polyak-averaged $Q$-learning achieves asymptotic efficiency \cite{li2023polyak}. \cite{li2023polyak} also provides a statistical analysis showing convergence to the minimax lower bound for tabular $Q$-learning. \cite{srikant2024clt} derive a non-asymptotic CLT using Stein’s method, applied to TD-learning. \cite{samsonov2024gaussian} prove Berry–Esseen bounds and establish bootstrap validity for linear SA. These enable rigorous confidence intervals in reinforcement learning. In actor-critic and other two-time-scale SA, \cite{kong2024nonasymptotic} prove the first non-asymptotic CLT under averaging. \cite{chandak2024finite} and \cite{chandak2025finite} obtain finite-time bounds, showing that averaging improves rates from $\mathcal{O}(k^{-2/3})$ to $\mathcal{O}(k^{-1})$. 

\paragraph{High probability bounds vs bound on the distribution: } Most of the prior work on the concentration of SA considers a step size which depends on the target probability $\delta$ \cite{li2023polyak, samsonov2024improved, li2024isqlearning}. Such analysis only provides a bound on a single point in the distribution of error. In contrast, in this paper, our step size does not depend on $\delta$, and we establish a bound on the entire distribution of the error.

\section{Stochastic Approximation and Polyak-Ruppert Averaging}

In this section we review the classical SA framework, and the averaging technique. 

\smallskip

\textbf{Classical Stochastic Approximation}: Let $\bar{F} : \mathbb{R}^d \to \mathbb{R}^d$ be a (deterministic) operator, which has at least one fixed point $x^*$ (i.e., $\bar{F}(x^*)=x^*$). The goal is to find this fixed point. If we have access to the deterministic operator $\bar{F}$, under certain conditions (for instance, contractiveness of $\bar{F}$ or $\bar{F}(x)=x-\nabla h(x)$ for some non-convex function $h$) the iteration $x_{k+1}=\bar{F}(x_k)$ converges to a fixed point. However, in many applications we do not have access to the deterministic operator $\bar{F}$, and we only have access to noisy evaluations $F(\cdot,w_{k+1})$ of this operator such that $\bar{F}(\cdot)=\mathbb{E}[F(\cdot, w_{k+1})\mid \mathcal{F}_k]$, where $\mathcal{F}_k$ is the sigma algebra generated by $\{w_1,w_2,\dots,w_k\}$. The Robbins--Monro recursion \cite{robbins1951stochastic} defines a sequence $\{x_k\}_{k\ge 0}$ as in Equation~\eqref{eq:SA_rec}, where $\{\alpha_k\}_{k\ge 0}$ is a step size sequence. Under some regularity conditions on the operator and the noise, and with the step size conditions $\sum_{k=0}^{\infty} \alpha_k \;=\; \infty$ and $\sum_{k=0}^{\infty} \alpha_k^{\,2} \;<\; \infty$, it is known that the iterates converge to $x^*$ almost surely \cite{bertsekas1996neuro}. SA recursions of the form of Equation~\eqref{eq:SA_rec} underpin a large class of algorithms in RL \cite{sutton1988learning, watkins1992q} and optimization \cite{harold1997stochastic}. 

\smallskip

\textbf{Polyak-Ruppert Averaging}: This method was proposed independently by Ruppert and by Polyak and Juditsky~\cite{ruppert1988efficient, polyak1992acceleration} to sharpen the statistical performance of SA. Given the raw iterates $\{x_k\}_{k\geq0}$ defined by Equation~\eqref{eq:SA_rec}, Polyak-Ruppert averaging defines the averaged iterates $\{y_k\}_{k\geq0}$ as in Equation~\eqref{eq:Polyak_rec}. It can be shown that $\sqrt{k}(y_k-x^*)$ converges asymptotically to a normal distribution which has a covariance matrix that matches the Cramér--Rao lower bound  \cite{polyak1992acceleration}. An advantage of this averaging technique is that we can achieve the best asymptotic covariance with a robust choice of step size $\alpha_k$ \cite{nemirovski2009robust}. Besides the asymptotic results, there has been a long literature on the finite time analysis of SA algorithms, some of which are described in Section \ref{sec:related_work}. Generally speaking, finite time bounds can be categorized into moment bounds, i.e., for some $m\geq 1$, finding a bound on $\E[\|x_k-x^*\|^m]$ or $\E[\|y_k-x^*\|^m]$ as a function of $k$, and high-probability bounds, i.e., finding a bound on $\mathbb{P}(\|x_k-x^*\|\geq \epsilon)$ or $\mathbb{P}(\|y_k-x^*\|\geq \epsilon)$ as a function of $\epsilon>0$ and $k$.

\section{Main Result}\label{sec:main_result}

In this section, we present our main result. Throughout this section, we fix a given norm $\|\cdot\|_c$ which satisfies the following assumption.

\begin{assumption}\label{ass:norm_smoothness}
    The function $\|\cdot\|_c^2$ is $M$-smooth with respect to the $\|\cdot\|_c$ norm, i.e., for all $a,b\in\mathbb{R}^d$, we have
    \begin{align*}
        \|a+b\|_c^2 \leq \|a\|_c^2 + \langle \nabla \|a\|_c^2,b\rangle + \frac{M}{2}\|b\|_c^2.
    \end{align*}
\end{assumption}

\begin{remark}
    If the function $\|\cdot\|_c^2$ is non-smooth (e.g., if $\|\cdot\|_c=\|\cdot\|_\infty$), we can employ the machinery of the Moreau envelope \cite{chen2020finite} to obtain arbitrarily close smooth approximations of $\|\cdot\|_c^2$. Therefore, Assumption \ref{ass:norm_smoothness} is without loss of generality.
\end{remark}

Next, we impose an assumption on the operator $\bar{F}$ and its noisy version $F$, and on their Jacobians
\[ J_{\bar{F}}(x) := \frac{\partial \bar{F}(x)}{\partial x}, \qquad \text{and} \qquad J_{F_w}(x,w):=\frac{\partial F(x,w)}{\partial x}. \]

\begin{assumption}\label{ass:operators}
We assume the following.
\begin{enumerate}[label={\textbf{(\roman*):}},
  ref={~\theassumption(\roman*)}]
    \item\label{ass:op_i} The operator $\bar{F}$ admits at least one fixed point $x^*$, i.e., $\bar{F}(x^*) = x^*$
    \item\label{ass:op_ii} The operator $\bar{F}$ is differentiable, and the matrix $J_{\bar{F}}(x^*)-I$ is invertible. Hence, we have
    \[ \nu := \min_{z\in\mathbb{R}^d} \left\{\frac{\|(J_{\bar{F}}(x^*)-I)z\|_c}{\|z\|_c} \right\} >0. \]
    \item\label{ass:op_iii} The operator $F(x,w)-x$ is $(N,R)$-locally psuedo smooth with respect to $\|\cdot\|_c$-norm. That is, there exists a radius $R > 0$ such that, for all $x$ satisfying $\|x - x^*\|_c \leq R$, we have:
    \begin{align*}
        \|(J_{F_w} (x^*,w)-I)(x-x^*)-(F(x,w)-x) + (F(x^*,w)-x^*)\|_c\leq N\|x^*-x\|_c^2.
    \end{align*}
\end{enumerate}
\end{assumption}

Note that Assumption \ref{ass:op_i} is relatively weak and it is satisfied for a wide range of operators. For instance, this assumption is satisfied for any contractive operator. Moreover, Lemma \ref{lem:nu_gammac} implies that  Assumption \ref{ass:op_ii} is also satisfied for contractive operators. Furthermore, it can be readily verified that the linear operator $\bar{F}(x) = Ax + b$ satisfies Assumption \ref{ass:op_ii}, provided that $A$ is Hurwitz. Also, note that Assumption \ref{ass:op_iii} generalizes the notion of smoothness \cite{beck2017first}. In fact, any smooth operator satisfies this assumption with $R=\infty$.

\smallskip

Next, we impose an assumption on the noise of the operator at any fixed point $x^*$.

\begin{assumption}\label{ass:sub-Gaussian}
There exists $\bar{\sigma}^2 >0$ such that, for any $d$-dimensional, $\mathcal{F}_k$-measurable random vectors $u,v$, and for every fixed point $x^*$, we have
\begin{align}
       \mathbb{E} \left[\exp\left(\lambda \langle F(x^*,w_{k+1}) - \bar{F}(x^*), v \rangle \right) \middle| \mathcal{F}_k \right] \leq& \exp\left(\frac{\lambda^2\bar{\sigma}^2  \|v\|_{c}^2}{2}\right), \qquad \forall\, \lambda \geq 0, \label{eq:sub-Gaussian_11}
    \end{align}
and
    \begin{align} \label{eq:sub_gaussian_ass}
        \E[ \exp \lambda \langle(J_{F_{w}}(x^*, w_{k+1})- J_{\bar{F}}(x^*))u, v\rangle|\mathcal{F}_k] \leq \exp\left(\frac{\lambda^2\hat{\sigma}^2\|u\|^2_{c}\|v\|^2_{c}}{2}\right), \qquad \forall \, \lambda \geq 0.
    \end{align}
\end{assumption}

\begin{remark}
Assumption \ref{ass:sub-Gaussian} is equivalent to assuming the noise in the operator and the Jacobian are subgaussian \cite{Wainwright2019}. An example that satisfies Assumption \ref{ass:sub-Gaussian} is when $F(x_k,w_{k+1}) = \bar{F}(x_k)+w_{k+1}$, where $\{w_i\}_{i\geq 1}$ are $i.i.d.$ subgaussian random variables. Moreover, note that this assumption only needs to be satisfied at the fixed points $x^*$, and not for every $x\in\mathbb{R}^d$. 
\end{remark}

We now state the main result of the paper.

\begin{theorem}\label{thm:main}
    Fix $k\geq 1$ and $\xi<1$. Suppose that, for any $\delta'\in (0,1)$, with probability at least $1-\delta'$, we have $\|x_i-x^*\|^2_c \leq \alpha_i f_\xi(\delta',k)$ for all $0\leq i\leq k$, for some function $f_\xi$. Then, under assumptions \ref{ass:norm_smoothness}, \ref{ass:operators}, and \ref{ass:sub-Gaussian}, for all $\delta\in (0,1)$, with probability at least $1-\delta$, we have
    \begin{align*}
    \|y_k-x^*\|_c^2\leq & \left( \sqrt{\tilde{\epsilon}(k,\delta/2)} + \sqrt{\bar{\epsilon}(k,\delta/2)}\right)^2,
    \end{align*}
where
\begin{align*}
    \tilde{\epsilon}(k,\delta/2) &= \frac{g(\delta,k)}{\nu^2(k+1)}\log (2/\delta) + \frac{3u_{c2}^4d \bar{\sigma}^2}{\nu^2(k+1)} + \frac{9 h^\xi(1+M) f_\xi(\delta/2,k)}{2\alpha (k+1)^{2-\xi}} \left( 3 +\frac{\xi^2}{(1-\xi/2)^2} \right) \nonumber\\
    &\quad~+ \frac{3h^{2-2\xi}\alpha^2 N^2 (f_\xi(\delta/2,k))^2(1+M)}{2\nu^2(k+1)^{2\xi}(1-\xi)^2} + \frac{3h^{1-\xi}u_{c2}^4\alpha d(1+M)\hat{\sigma}^2f_\xi(\delta/2,k)}{(k+1)^{1+\xi}(1-\xi)},\\
    \bar{\epsilon}(k,\delta/2) =& \, \frac{\alpha f_\xi(\delta,k_0(\delta/2,k)-1)\left[(k_0(\delta/2,k)-1+h)^{1-\xi/2}-(h-1)^{1-\xi/2}\right]^2}{(1-\xi/2)^2(k+1)^{2}},
\end{align*}
with \footnote{Given two norms $\|\cdot\|_a$ and $\|\cdot\|_b$, we define the constants $\ell_{ab}$ and $u_{ab}$ such that $\ell_{ab} \|\cdot\|_b \leq \|\cdot\|_a \leq u_{ab} \|\cdot\|_b$.}
\begin{align*}
    g(\delta,k) &= 24u_{c2}^4\max \left\{\bar{\sigma}^2, \frac{\alpha (1+M)\hat{\sigma}^2h^{1-\xi}f_\xi(\delta/2,k)}{(1-\xi)(k+1)^\xi}\right\},
\end{align*} 
and 
\begin{align*}
    k_0(\delta/2,k)=\min\left\{\left\lceil\left(\frac{\alpha f_\xi(\delta/2,k)}{R^2}\right)^{1/\xi}-h\right\rceil^+,k+1\right\}.
\end{align*}
\end{theorem}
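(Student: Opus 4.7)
The plan is to linearize the SA recursion around $x^*$, isolate the martingale that drives the asymptotic variance $\bar\sigma^2$, and control the remaining terms using the supplied high-probability envelope $\|x_i-x^*\|_c^2\leq\alpha_i f_\xi(\delta',k)$. Write $A:=J_{\bar F}(x^*)-I$, invertible with $\|A^{-1}z\|_c\leq\nu^{-1}\|z\|_c$ by Assumption \ref{ass:op_ii}, and $B_i:=J_{F_w}(x^*,w_{i+1})-J_{\bar F}(x^*)$, a mean-zero $\mathcal{F}_i$-martingale increment. Whenever $\|x_i-x^*\|_c\leq R$, Assumption \ref{ass:op_iii} gives
\begin{align*}
  x_{i+1}-x_i \;=\; \alpha_i\bigl(F(x^*,w_{i+1})-x^*\bigr) + \alpha_i A(x_i-x^*) + \alpha_i B_i(x_i-x^*) + \alpha_i e_i,
\end{align*}
with $\|e_i\|_c\leq N\|x_i-x^*\|_c^2$. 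Solving for $x_i-x^*$ and summing over $i\in[k_0,k]$ produces the fundamental identity
\begin{align*}
  \sum_{i=k_0}^{k}(x_i-x^*) \;=\; - A^{-1}\!\!\sum_{i=k_0}^{k}\!\!\bigl(F(x^*,w_{i+1})-x^*\bigr) \;-\; A^{-1}\!\!\sum_{i=k_0}^{k} B_i(x_i-x^*) \;-\; A^{-1}\!\!\sum_{i=k_0}^{k} e_i \;+\; A^{-1}\!\!\sum_{i=k_0}^{k}\frac{x_{i+1}-x_i}{\alpha_i}.
\end{align*}
The cutoff $k_0=k_0(\delta/2,k)$ is chosen precisely so that on the event $\mathcal{E}_1:=\{\|x_i-x^*\|_c^2\leq\alpha_i f_\xi(\delta/2,k)\text{ for all }i\leq k\}$ of probability $\geq 1-\delta/2$, every $i\geq k_0$ satisfies $\|x_i-x^*\|_c\leq R$, legitimizing the linearization.

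Next, I would split $y_k-x^* = \tfrac{1}{k+1}\sum_{i=0}^{k_0-1}(x_i-x^*) + \tfrac{1}{k+1}\sum_{i=k_0}^{k}(x_i-x^*)$. On $\mathcal{E}_1$ the first block — which sits in the pre-linearization region — is bounded by $\tfrac{1}{k+1}\sum_{i=0}^{k_0-1}\sqrt{\alpha_i f_\xi(\delta/2,k_0-1)}$, and a Riemann-sum comparison with $\sum_{i=0}^{k_0-1}(i+h)^{-\xi/2}$ yields exactly $\sqrt{\bar\epsilon(k,\delta/2)}$. For the second block I bound the four contributions of the identity separately and recombine via Assumption \ref{ass:norm_smoothness} (in the form $\|a+b\|_c^2\leq(1+\eta)\|a\|_c^2+(1+M/(2\eta))\|b\|_c^2$ obtained from $M$-smoothness and Young's inequality), which is the source of the $1+M$ and $3$ factors in $\tilde\epsilon$. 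The Taylor remainder uses $\|e_i\|_c\leq N\alpha_i f_\xi(\delta/2,k)$ together with $\sum_{i=k_0}^k\alpha_i\lesssim\alpha h^{1-\xi}(k+1)^{1-\xi}/(1-\xi)$ and produces the $N^2$ term. The telescoping term is handled by summation by parts: the boundary values $x_{k+1}-x^*$, $x_{k_0}-x^*$ and the bulk sum $\sum(x_{i+1}-x^*)(1/\alpha_i-1/\alpha_{i+1})$, controlled by $|1/\alpha_{i+1}-1/\alpha_i|\lesssim \xi/((i+h)\alpha_i)$, are all bounded on $\mathcal{E}_1$ and yield the $h^\xi(1+M)f_\xi/(k+1)^{2-\xi}$ term.

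The two genuinely stochastic terms are concentrated by a sub-Gaussian dual-norm argument. Writing $\|z\|_c=\sup_{\|v\|_{c\ast}\leq 1}\langle z,v\rangle$, I would cover the $\|\cdot\|_{c\ast}$-unit ball (which is contained in a $\|\cdot\|_2$-ball of radius $u_{c2}$) by a fixed-resolution $\ell_2$-net of cardinality $\leq \exp(C d)$; the conversion between $\|\cdot\|_c$ and $\|\cdot\|_2$ contributes $u_{c2}^4$ after squaring. For the pure-noise sum, \eqref{eq:sub-Gaussian_11} gives a sub-Gaussian MGF bound per net point with parameter $\lesssim u_{c2}^2\bar\sigma^2$, and a union bound yields $\|\cdot\|_c^2\lesssim u_{c2}^4\bar\sigma^2(d+\log(1/\delta))/(k+1)$, explaining the first two terms of $\tilde\epsilon$. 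For the cross term $\sum B_i(x_i-x^*)$, I apply a stopping-time truncation at the first violation of the envelope, replacing $x_i-x^*$ by its truncation; the conditional MGF bound \eqref{eq:sub_gaussian_ass} then makes $\langle\sum B_i(x_i-x^*)^\tau,v\rangle$ an exponential supermartingale with variance proxy $\hat\sigma^2\|v\|_c^2\sum\alpha_i f_\xi(\delta/2,k)$. Repeating the net argument produces the $\hat\sigma^2$ branch of $g(\delta,k)$ and the fifth term of $\tilde\epsilon$; on $\mathcal{E}_1$ the truncation is inactive, so the bound transfers to the untruncated cross term. A union bound over the two concentration events (each of probability $\geq 1-\delta/4$) together with $\mathcal{E}_1$ costs $\delta$ in total, and the triangle inequality $\|y_k-x^*\|_c\leq\sqrt{\tilde\epsilon}+\sqrt{\bar\epsilon}$ followed by squaring yields the theorem.

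The principal obstacle will be the cross term $\sum B_i(x_i-x^*)$, because its conditional variance depends on the iterates themselves; the stopping-time truncation combined with the conditional sub-Gaussian MGF bound, and with the $\varepsilon$-net taken over the \emph{dual} ball so as to extract the sharp dimensional factor $d$ (rather than a coordinatewise $\sqrt d$), is the technically delicate piece. A secondary bookkeeping difficulty is the parameter balance in the $M$-smoothness inequality: the leading sub-Gaussian terms must be absorbed with coefficient exactly $1$ so that the $\nu^{-2}(k+1)^{-1}$ rate is sharp, while every lower-order contribution should pick up only the factor $1+M$ that appears in the statement.
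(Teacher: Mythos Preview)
Your proposal is correct and follows essentially the same architecture as the paper's proof: split at the burn-in time $k_0$; crudely bound the pre-$k_0$ block on the envelope event to produce $\bar\epsilon$; linearize around $x^*$ to write $(J_{\bar F}(x^*)-I)\sum_{i\geq k_0}(x_i-x^*)$ as the sum of the four pieces (noise at $x^*$, telescope, Taylor remainder, Jacobian noise); control the telescope by summation by parts and the remainder deterministically on the event; and concentrate the two martingale pieces using truncation by $\mathbbm{1}_{E_i}$ so that the conditional variance proxy is bounded by $\hat\sigma^2\alpha_i f_\xi(\delta/2,k)$.

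The one genuine methodological difference is how you extract the linear-in-$d$ factor for the vector sub-Gaussian terms. You propose an $\varepsilon$-net on the dual unit ball and a union bound over net points. The paper instead bounds the \emph{MGF of the squared norm} directly: it passes to $\|\cdot\|_2$ via $u_{c2}$, proves a Gaussian-integral lemma showing that a $\eta^2$-subgaussian vector satisfies $\E[\exp(\lambda\|X\|_2^2)]\leq(1-2\lambda\eta^2 u_{c2}^2)^{-d/2}$, and then runs a single Chernoff step on $\E[\exp(\lambda\nu^2\|\tilde y_k\|_c^2\mathbbm{1}_{E_{k+1}})]$, splitting the four pieces by three applications of Cauchy--Schwarz rather than by separate concentration events. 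The MGF route has the advantage of producing the exact constants $24$ and $3d$ in the statement cleanly (the $d$ comes from the exponent $d/2$ and the inequality $(1-x)^{-1}\leq e^{2x}$ for $x\leq 1/2$); your net argument would yield the same $\mathcal{O}(d+\log(1/\delta))$ form, but matching the precise prefactors would require more careful bookkeeping of the net resolution and norm-conversion constants. Either approach is sound.
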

The proof is given in Appendix \ref{proof:main}.\\

Theorem \ref{thm:main} provides a non-asymptotic high-probability bound for the SA iterates \eqref{eq:SA_rec} with averaging. Our bound consists of two terms: $\tilde{\epsilon}(k,\delta/2)$ and $\bar{\epsilon}(k,\delta/2)$. For small enough $k$ such that $k_0(\delta/2,k) = k+1$, the term $\bar{\epsilon}(k,\delta/2)$ decays as $1/k^\xi$, and thus it is the leading term as a function of $k$. For this range of time, the operator $F$ is not assumed to be smooth, and thus our bound decays slower than the expected $1/k$ rate. However, for all $k$ large enough, with high probability, the iterates $x_k$ remain in the neighborhood where $F$ is smooth. Theorem \ref{thm:main} states that from that point on, the term $\bar{\epsilon}(k,\delta/2)$ decays as $1/k^2$, and hence it is not a leading term anymore. Furthermore, for the SA with a globally smooth operator, Assumption \ref{ass:op_iii} is satisfied with $R=\infty$, and Theorem \ref{thm:main} simplifies as follows.

\begin{corollary}
    Under assumptions \ref{ass:norm_smoothness}, \ref{ass:operators} with $R=\infty$, and \ref{ass:sub-Gaussian}, for all $\delta\in (0,1)$, with probability at least $1-\delta$, we have $\|y_k-x^*\|^2_c\leq \tilde{\epsilon}(k,\delta)$. 
\end{corollary}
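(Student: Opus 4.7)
The plan is to apply Theorem~\ref{thm:main} and observe that setting $R=\infty$ causes two simplifications in its conclusion: the term $\bar{\epsilon}(k,\delta/2)$ vanishes identically, and the probability budget inside $\tilde{\epsilon}$ no longer needs to be split in half. First I would verify that $\bar{\epsilon}(k,\delta/2)=0$: with $R=\infty$ the quantity
\[ k_0(\delta/2,k) = \min\bigl\{\lceil (\alpha f_\xi(\delta/2,k)/R^2)^{1/\xi} - h \rceil^+,\; k+1\bigr\} \]
evaluates to $0$, since $\alpha f_\xi(\delta/2,k)/R^2 = 0$ and $\lceil -h\rceil^+ = 0$ whenever $h>1$. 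Substituting $k_0=0$ into the formula for $\bar{\epsilon}$, the bracket $(k_0-1+h)^{1-\xi/2}-(h-1)^{1-\xi/2}$ reduces to $(h-1)^{1-\xi/2}-(h-1)^{1-\xi/2}=0$, so $\bar{\epsilon}(k,\delta/2)=0$ identically (irrespective of the factor $f_\xi(\delta,k_0-1)$, which is multiplied by zero).

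Second, I would revisit why the proof of Theorem~\ref{thm:main} splits the failure probability $\delta$ into two halves. In that proof, one half of $\delta$ is reserved for the event that the unaveraged iterates $x_i$ satisfy the assumed concentration bound $\|x_i-x^*\|_c^2\le\alpha_i f_\xi(\delta/2,k)$ and, consequently, stay inside the radius-$R$ ball where Assumption~\ref{ass:op_iii} supplies the local pseudo-smoothness inequality; the other half is used to control the linearized martingale and remainder terms that add up to $\tilde{\epsilon}$. When $R=\infty$, the pseudo-smoothness inequality holds on all of $\mathbb{R}^d$ without any radius constraint, so the ``stay inside the ball'' event has probability one and the auxiliary half of the union bound is unnecessary. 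Re-running the proof of Theorem~\ref{thm:main} with the full budget $\delta$ assigned to the single remaining concentration step therefore produces $\tilde{\epsilon}(k,\delta)$ in place of $\tilde{\epsilon}(k,\delta/2)$: each occurrence of the second argument $\delta/2$ in $\tilde{\epsilon}$'s definition becomes $\delta$, so $\log(2/\delta)\to\log(1/\delta)$ and $f_\xi(\delta/2,k)\to f_\xi(\delta,k)$ throughout the formula.

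Combining the two observations yields $\|y_k-x^*\|_c^2 \le (\sqrt{\tilde{\epsilon}(k,\delta)}+0)^2 = \tilde{\epsilon}(k,\delta)$ with probability at least $1-\delta$, as claimed. The main (and essentially only) obstacle is a bookkeeping audit of the proof of Theorem~\ref{thm:main}: one must verify that the finite radius $R$ enters the argument exclusively through Assumption~\ref{ass:op_iii}, and that every factor of $2$ appearing in the probability arguments of $\tilde{\epsilon}$ originates from the union bound that becomes unnecessary in the globally smooth case. No new probabilistic or analytic ingredients are required beyond those already used to prove Theorem~\ref{thm:main}.
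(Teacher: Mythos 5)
Your first step is correct: with $R=\infty$ one gets $k_0(\delta/2,k)=0$, the sum defining $\bar{y}_k$ is empty, and the bracket $(k_0-1+h)^{1-\xi/2}-(h-1)^{1-\xi/2}$ vanishes, so $\bar{\epsilon}(k,\delta/2)=0$ and the theorem's bound collapses to $\tilde{\epsilon}(k,\delta/2)$. This is exactly the intended reduction.

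The gap is in your second step. You assert that the half of the probability budget spent on the event $E_{k+1}=\{\|x_i-x^*\|_c^2\le\alpha_i f_\xi(\cdot,k),\ 0\le i\le k+1\}$ is needed only to keep the iterates inside the radius-$R$ ball where Assumption~\ref{ass:op_iii} applies, and hence becomes superfluous when $R=\infty$. That mischaracterizes the role of $E_{k+1}$ in the proof of Theorem~\ref{thm:main}. The indicator $\mathbbm{1}_{E_{k+1}}$ is used to bound all three of the terms $T_2$, $T_3$, and $T_4$: it supplies the quantitative estimate $\|x_i-x^*\|_c^2\le\alpha_i f_\xi(\cdot,k)$ that produces the factors $f_\xi(\delta/2,k)$ appearing explicitly in the higher-order terms of $\tilde{\epsilon}$ (the telescoping-sum term in $T_2$, the $N^2 f_\xi^2$ term in $T_3$, and the subgaussian variance proxy $\hat{\sigma}^2 f_\xi\sum_i\alpha_i$ in $T_4$). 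None of these uses disappears when $R=\infty$; even in the linear, additive-noise case the $T_2$ term still requires $E_{k+1}$. Moreover, the Chernoff bound controls $\|\tilde{y}_k\|_c^2\mathbbm{1}_{E_{k+1}}$ rather than $\|\tilde{y}_k\|_c^2$, so passing to a bound on $\|y_k-x^*\|_c^2$ still requires intersecting with $E_{k+1}$ and paying $\mathbb{P}(E_{k+1}^c)$ in the union bound. Consequently the probability split survives, and what Theorem~\ref{thm:main} actually delivers under $R=\infty$ is $\|y_k-x^*\|_c^2\le\tilde{\epsilon}(k,\delta/2)$ with probability at least $1-\delta$; the corollary's notation $\tilde{\epsilon}(k,\delta)$ should be read as shorthand for this (the two differ only in replacing $\log(2/\delta)$ and $f_\xi(\delta/2,k)$ by $\log(1/\delta)$ and $f_\xi(\delta,k)$, a constant-factor matter), not as a claim that the union bound can be dispensed with. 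If you want the literal statement with full budget $\delta$, you need a different argument than the one you give.
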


In the following subsections, we explore various implications of our main result.

\subsection{Tail of the Leading and Higher Order Terms}

We first look at the tail behavior in our bound. Throughout this discussion we assume that $f_\xi(\delta, \cdot)$ is non-increasing in $\delta$ and $f_\xi(\delta, \cdot)\in\Omega (\log(1/\delta))$, and that $f_\xi(\cdot,k) \in \mathcal{O}(\text{polylog}(k))$. Note that this assumption is reasonable as the rate of convergence of the error $\|x_k-x^*\|^2_c$ is typically $\tilde{\mathcal{O}}(\alpha_k)$, and the tail of the error $\|x_k-x^*\|_c^2$ is typically sub-exponential or heavier \cite{chen2023concentration}. In this case, our result implies that, if $\xi>1/2$ or $N=0$, with probability at least $1-\delta$, the error of the average variable $y_k$ is such that
\[ \|y_k-x^*\|^2_c \leq \underbrace{\mathcal{O}\left(\frac{1}{k}\right)\mathcal{O}\big(\log(1/\delta)\big)}_{\text{leading term}}+\underbrace{o\left(\frac{1}{k}\right).\mathcal{O}\Big(f_\xi(\delta,k)^2\Big)}_{\text{higher order term}}. \]
Note that the leading term is sub-exponential, and the higher order term has a tail that is potentially heavier than exponential, depending on $f_\xi$. In particular, this implies that there exists a sequence of random variables $\{Z_k\}_{k\geq 0}$ such that $\sqrt{k}\|y_k-x^*\|\leq Z_k$ for all $k\geq 0$, and such that $Z_k$ converges to a subgaussian as $k\rightarrow\infty$, although for every $k<\infty$, the error $\sqrt{k}\|y_k-x^*\|$ could be heavier than subgaussian. Specifically, we have the following proposition.

\begin{proposition}\label{prop:heavy_tail}
    There exists an operator $F$ and a sequence of random variables $\{w_i\}_{i\geq 1}$, which satisfy assumptions \ref{ass:operators}, and \ref{ass:sub-Gaussian}, such that $\sqrt{k}\|y_k-x^*\|$ has a heavier tail than any Gaussian for all $k\geq 2$, while 
    \[ \limsup_{k \rightarrow\infty}\mathbb{P}\left(\sqrt{k}\|y_k- x^*\|_c \geq \epsilon \right) \leq a(1-\Phi(b\epsilon)), \]
    where $a,b> 0$, and $\Phi(\cdot)$ is the CDF of the standard Normal distribution.
\end{proposition}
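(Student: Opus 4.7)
The plan is to exhibit a two-dimensional stochastic approximation with quadratic coupling so that Polyak averaging produces a chi-square-like component for every finite $k$, while the classical Polyak-Juditsky CLT supplies the asymptotic Gaussian tail.

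Take $d=2$, write the state as $x=(u,v)$, and set
\[
\bar F(u,v)=\bigl(\tfrac12 u,\ \tfrac12 v + u^{2}\bigr),\qquad F(x,w)=\bar F(x)+w,
\]
with iid standard Gaussian $w_k=(w_{k,1},w_{k,2})$, initial condition $x_0=0$, and step size $\alpha_k\in(0,2)$. The unique fixed point is $x^{*}=0$. Since $J_{\bar F}(x^{*})-I=-\tfrac12 I$ is invertible, Assumption~\ref{ass:op_ii} holds. With additive noise, $J_{F_w}(x,w)=J_{\bar F}(x)$, and a direct computation yields the Taylor residual
\[
(J_{F_w}(x^{*},w)-I)(x-x^{*}) - (F(x,w)-x) + (F(x^{*},w)-x^{*}) = (0,-u^{2}),
\]
whose $\ell_2$ norm is bounded by $\|x-x^{*}\|_{2}^{2}$; hence Assumption~\ref{ass:op_iii} holds globally with $N=1$, $R=\infty$. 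Finally, $F(x^{*},w)-\bar F(x^{*})=w$ is Gaussian and $J_{F_w}(x^{*},w)-J_{\bar F}(x^{*})=0$, so Assumption~\ref{ass:sub-Gaussian} is immediate.

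For the heavy-tail lower bound at finite $k$, I unroll the recursion: each $u_i$ is a centered Gaussian, and after averaging the second coordinate $\bar v_k$ of $y_k$ decomposes as
\[
\bar v_k = \sum_{i=0}^{k-1} a_{i,k}\, u_i^{2} + G_k,
\]
where $a_{i,k}=\tfrac{1}{k+1}\sum_{j=i+1}^{k}\alpha_i\prod_{\ell=i+1}^{j-1}(1-\tfrac12\alpha_\ell)>0$ for $i<k$, and $G_k$ is a mean-zero Gaussian in the $(w_{\cdot,2})$ noise, independent of every $w_{\cdot,1}$. Because $x_0=0$, the variable $u_1=\alpha_0 w_{1,1}$ depends on a single fresh innovation, so $u_1^{2}=\alpha_0^{2}w_{1,1}^{2}$ is a scaled centered chi-square independent of the remainder $S_k:=\bar v_k-a_{1,k}u_1^{2}$, which is Gaussian. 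Independence gives
\[
\Prob(\bar v_k\ge t)\ \ge\ \Prob\!\bigl(a_{1,k}u_1^{2}\ge 2t\bigr)\cdot \Prob(S_k\ge -t)\ \ge\ c_k\, e^{-c_k' t},\qquad t\ge t_0,
\]
since the first factor is a chi-square tail of order $\exp(-t/(a_{1,k}\alpha_0^{2}))$ and the second tends to $1$. By norm equivalence, $\sqrt{k}\|y_k-x^{*}\|$ thus has at least an exponential lower tail for every $k\ge 2$, strictly heavier than any $\exp(-\Theta(t^{2}))$ Gaussian tail.

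For the asymptotic Gaussian upper tail, $\bar F$ is globally smooth, $J_{\bar F}(x^{*})-I$ is Hurwitz, and the noise is iid with finite covariance, so the Polyak-Juditsky CLT \cite{polyak1992acceleration} gives $\sqrt k(y_k-x^{*})\Rightarrow\mathcal N(0,\Sigma)$ for some $\Sigma\succ 0$. The continuous mapping theorem then yields $\sqrt k\|y_k-x^{*}\|_c\Rightarrow\|Z\|_c$ for $Z\sim\mathcal N(0,\Sigma)$, and a standard Gaussian tail bound $\Prob(\|Z\|_c\ge\epsilon)\le a(1-\Phi(b\epsilon))$ for suitable $a,b>0$ combines with weak convergence to yield the $\limsup$ inequality. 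The main obstacle is the heavy-tail step: the $u_i$'s are mutually dependent because they share past innovations, so one cannot directly lower-bound $\bar v_k$ by a sum of independent chi-squares; initializing at zero and isolating the fresh-innovation term $u_1^{2}$ sidesteps this dependency and exhibits a clean chi-square component independent of a Gaussian remainder.
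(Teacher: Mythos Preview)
Your independence claim for the remainder $S_k:=\bar v_k-a_{1,k}u_1^{2}$ is false for $k\ge 3$, and this breaks the heavy-tail lower bound as written. You correctly observe that $u_1=\alpha_0 w_{1,1}$ depends on a single innovation, but the remainder $S_k$ still contains the terms $a_{i,k}u_i^{2}$ for $i\ge 2$, and each $u_i=(1-\tfrac12\alpha_{i-1})u_{i-1}+\alpha_{i-1}w_{i,1}$ carries a nonzero deterministic coefficient on $w_{1,1}$. Hence $S_k$ is neither independent of $w_{1,1}$ nor Gaussian once $k\ge 3$, and the factorization $\Prob(\bar v_k\ge t)\ge\Prob(a_{1,k}u_1^{2}\ge 2t)\Prob(S_k\ge -t)$ is unjustified. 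Your closing paragraph acknowledges the shared-innovation dependency among the $u_i$'s but then asserts that isolating $u_1^{2}$ ``sidesteps'' it; it does not, because the dependency reappears in the other quadratic terms you leave inside $S_k$.

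The construction is salvageable: collect the coefficient of $w_{1,1}^{2}$ across \emph{all} the $u_i^{2}$ terms (it is a deterministic constant $c_k>0$), write $\bar v_k=c_k w_{1,1}^{2}+L_k w_{1,1}+R_k$ with $(L_k,R_k)$ measurable in $\{w_{j,1}:j\ge 2\}\cup\{w_{j,2}\}$, and compute the conditional MGF in $w_{1,1}$ to see $\E[e^{\lambda\bar v_k}]=\infty$ for $\lambda\ge 1/(2c_k)$; this gives the super-Gaussian tail without any independence claim. For comparison, the paper takes a different route entirely: a one-dimensional \emph{multiplicative} example $F(x,w)=wx$ with $w\sim\mathcal N(0,1)$, so that $y_k$ contains a product of independent Gaussians and $\E[e^{ty_k}]=\infty$ follows from the divergence of $\E[e^{cw_1w_2}]$; the asymptotic sub-Gaussian tail is then obtained by feeding a crude Markov bound on $\|x_i\|^2$ into the paper's own Theorem~\ref{thm:main}, rather than invoking the Polyak--Juditsky CLT as you do. Your additive-noise route with a quadratic nonlinearity is a legitimate alternative and arguably cleaner for verifying Assumption~\ref{ass:sub-Gaussian}, but the finite-$k$ step needs the fix above, and you should also specify a step-size schedule (e.g.\ $\alpha_k=\alpha/(k+h)^\xi$ with $\xi\in(1/2,1)$) so that both the heavy-tail coefficient $c_k$ stays bounded away from zero after the $\sqrt{k}$ rescaling and the CLT hypothesis is actually met.
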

The proof is given in Appendix \ref{proof:heavy_tail}.\\

This is consistent with CLT-style results on the limiting distribution of the error $y_k-x^*$. However, this also suggests that a Gaussian may be a poor approximation for the distribution of $y_k-x^*$ for finite $k$, as the distribution of the error could even be heavy-tailed.

Note that in Theorem \ref{thm:main}, while we assume that the distribution of the squared error $\|x_k-x^*\|_c^2$ is upper bounded by $f_\xi$, the distribution of the squared error of the average $\|y_k-x^*\|_c^2$ is bounded by $f_\xi^2$, which is heavier. Since the average of random variables cannot have a heavier tail than the original random variables, this must be an artifact of our proof. However, one can easily establish a high-probability bound which, for every finite $k$, the bound on the distribution of the squared error of the average has the same tail as the non-averaged iterates. Such a bound is given in the following lemma.

\begin{lemma}\label{lem:crude_bound_added}
    Fix $k\geq 1$. Assume that for any $\delta'\in (0,1)$ with probability at least $1-\delta'$, we have $\|x_i-x^*\|^2_c \leq \alpha_i f_\xi(\delta',k)$ for all $0\leq i\leq k$. Then with probability at least $1-\delta$, we have
    \begin{align}
    \|y_k-x^*\|_c^2\leq \frac{\alpha h^{2-\xi} }{(1-\xi/2)^2}\frac{f(\delta,k+1)}{(k+1)^{\xi}}.\nonumber
\end{align}
\end{lemma}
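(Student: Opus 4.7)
The plan is to bound $\|y_k - x^*\|_c$ directly via the triangle inequality and then evaluate a deterministic sum, bypassing the more delicate martingale-style analysis used for Theorem~\ref{thm:main}. Concretely, since $y_k - x^* = \frac{1}{k+1}\sum_{i=0}^k(x_i - x^*)$, the triangle inequality (or convexity of $\|\cdot\|_c$) gives
\begin{align*}
    \|y_k - x^*\|_c \;\leq\; \frac{1}{k+1}\sum_{i=0}^k \|x_i - x^*\|_c.
\end{align*}

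Next I would invoke the hypothesis with $\delta' = \delta$: on an event of probability at least $1-\delta$, every iterate $0\le i \le k$ satisfies $\|x_i - x^*\|_c \le \sqrt{\alpha_i f_\xi(\delta, k)} = \sqrt{\alpha f_\xi(\delta,k)}\,(i+h)^{-\xi/2}$. Substituting and pulling the common factor out yields
\begin{align*}
    \|y_k - x^*\|_c \;\leq\; \frac{\sqrt{\alpha f_\xi(\delta,k)}}{k+1}\sum_{i=0}^k \frac{1}{(i+h)^{\xi/2}}.
\end{align*}

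The remaining step is a deterministic integral bound on the tail sum. Since $\xi<1$, the map $t\mapsto (t+h)^{-\xi/2}$ is decreasing and integrable, so
\begin{align*}
    \sum_{i=0}^k (i+h)^{-\xi/2} \;\leq\; \int_{-1}^{k} (t+h)^{-\xi/2}\, dt \;\leq\; \frac{(k+h)^{1-\xi/2}}{1-\xi/2}.
\end{align*}
Using $h>1$, one has $k+h \leq h(k+1)$, hence $(k+h)^{1-\xi/2} \leq h^{1-\xi/2}(k+1)^{1-\xi/2}$. Squaring the resulting bound on $\|y_k - x^*\|_c$ then produces
\begin{align*}
    \|y_k - x^*\|_c^2 \;\leq\; \frac{\alpha f_\xi(\delta,k)}{(k+1)^2}\cdot\frac{h^{2-\xi}(k+1)^{2-\xi}}{(1-\xi/2)^2} \;=\; \frac{\alpha h^{2-\xi}}{(1-\xi/2)^2}\cdot\frac{f_\xi(\delta,k)}{(k+1)^\xi},
\end{align*}
which is the claimed bound (matching $f(\delta,k+1)$ up to the standard monotonicity in the second argument).

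There is no real obstacle here: the statement is a crude bound precisely because it avoids exploiting cancellation across iterates and instead applies the triangle inequality termwise. The only care is in choosing the integral comparison so that the constant $1/(1-\xi/2)^2$ and the factor $h^{2-\xi}$ come out exactly as stated, which is handled by integrating from $-1$ to $k$ and then using $h>1$ to absorb the $(k+h)$ into $h(k+1)$.
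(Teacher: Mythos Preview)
Your proposal is correct and follows essentially the same approach as the paper: the paper's proof simply invokes the bound already derived as $\bar{\epsilon}(k,\delta)$ in the proof of Theorem~\ref{thm:main} (triangle inequality, the per-iterate hypothesis, integral comparison from $-1$, then $(k+h)^{2-\xi}\le h^{2-\xi}(k+1)^{2-\xi}$) specialized to $k_0=k+1$, which is exactly your argument.
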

The proof is given in Appendix \ref{proof:crude_bound_added}.\\

This result implies that that $\|y_k-x^*\|_c^2$ has at worst the same tail as $\|x_k-x^*\|_c^2$. However, such tighter tail comes at the expense of a sup-optimal convergence rate of $\mathcal{O}(1/k^\xi)$ instead of $\mathcal{O}(1/k)$. Taking the minimum of the result in Theorem \ref{thm:main} and Lemma \ref{lem:crude_bound_added} we get the best of both worlds, as in the following corollary.

\begin{corollary}\label{cor:1}
Under the same assumptions as Theorem \ref{thm:main}, with probability at least $1-\delta$, we have
\begin{align*}
\|y_k - x^*\|_c^2 \leq \min\left\{\mathcal{O}\left(\frac{1}{k}\right)\mathcal{O}\left(\log(1/\delta)\right) + \mathcal{O}\left(\frac{1}{k^{2\xi}} + \frac{1}{k^{2-\xi}}\right) \tilde{\mathcal{O}}\Big(f_\xi(\delta,0)^2\Big),\,\mathcal{O}\left(\frac{1}{k^\xi}\right)\tilde{\mathcal{O}}(f_\xi(\delta,0))\right\}.
\end{align*}
\end{corollary}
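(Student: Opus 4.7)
The corollary follows by combining the refined bound of Theorem \ref{thm:main} with the crude bound of Lemma \ref{lem:crude_bound_added}, and observing that both bounds are deterministic functions of $\delta$ and $k$, so their pointwise minimum preserves the confidence level. I would structure the proof in two short steps: (i) reduce each existing bound to the asymptotic form appearing in the minimum, and (ii) conclude with a short probabilistic argument that requires no union bound.

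For step (i), I would first apply the elementary inequality $(a+b)^2\leq 2(a^2+b^2)$ to obtain $(\sqrt{\tilde{\epsilon}(k,\delta/2)}+\sqrt{\bar{\epsilon}(k,\delta/2)})^2 \leq 2\tilde{\epsilon}(k,\delta/2)+2\bar{\epsilon}(k,\delta/2)$, and then bucket the resulting terms. The leading summand of $\tilde{\epsilon}$, namely $g(\delta,k)\log(2/\delta)/[\nu^2(k+1)]$, reduces to $\mathcal{O}(1/k)\mathcal{O}(\log(1/\delta))$, since under the standing hypotheses that $f_\xi(\delta,\cdot)$ is polylog in $k$ and $f_\xi(\cdot,k)\in\Omega(\log(1/\delta))$, the maximum inside $g(\delta,k)$ is asymptotically dominated by $\bar{\sigma}^2$. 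The remaining four summands of $\tilde{\epsilon}$ carry decay factors $1/(k+1)^{2-\xi}$, $1/(k+1)^{2\xi}$, and $1/(k+1)^{1+\xi}$ multiplied by $f_\xi$ or $f_\xi^2$; since $1+\xi\geq \min(2\xi,2-\xi)$ on $(0,1)$, all four can be absorbed into $\mathcal{O}(1/k^{2\xi}+1/k^{2-\xi})\tilde{\mathcal{O}}(f_\xi(\delta,0)^2)$. For $\bar{\epsilon}$, I would split on the two regimes of $k_0(\delta/2,k)$: in the short-horizon regime where $k_0=k+1$, $\bar{\epsilon}$ is of order $f_\xi/k^\xi$, which is already captured by the second branch of the minimum (Lemma \ref{lem:crude_bound_added}); in the long-horizon regime, $k_0$ depends only on $\delta$, so $\bar{\epsilon}=\mathcal{O}(1/k^2)\tilde{\mathcal{O}}(f_\xi)$, which fits inside the $\mathcal{O}(1/k^{2-\xi})$ bucket. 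This establishes the first branch. Lemma \ref{lem:crude_bound_added} directly supplies the second branch $\mathcal{O}(1/k^\xi)\tilde{\mathcal{O}}(f_\xi(\delta,0))$.

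For step (ii), let $B_1(\delta,k)$ and $B_2(\delta,k)$ denote the two bounds just obtained; both are deterministic. For any fixed $(\delta,k)$, either $B_1\leq B_2$ or $B_2\leq B_1$. In the first case, $\{\|y_k-x^*\|_c^2 \leq \min(B_1,B_2)\}=\{\|y_k-x^*\|_c^2\leq B_1\}$, which has probability at least $1-\delta$ by Theorem \ref{thm:main}; in the second case, the analogous conclusion holds using Lemma \ref{lem:crude_bound_added}. Thus $\mathbb{P}(\|y_k-x^*\|_c^2\leq \min(B_1,B_2))\geq 1-\delta$, and crucially no union bound is incurred. The only substantive work is the asymptotic bookkeeping in step (i); the probabilistic content is inherited wholesale from the two cited results, so I do not anticipate any serious obstacle.
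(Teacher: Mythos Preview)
Your proposal is correct and follows exactly the approach the paper uses: the paper simply states ``Taking the minimum of the result in Theorem~\ref{thm:main} and Lemma~\ref{lem:crude_bound_added} we get the best of both worlds'' without supplying any further detail, so your write-up is actually more thorough than the paper's own treatment. Your observation in step~(ii)---that because $B_1(\delta,k)$ and $B_2(\delta,k)$ are deterministic, the event $\{\|y_k-x^*\|_c^2\le\min(B_1,B_2)\}$ coincides with whichever of the two individual events has the smaller threshold, so no union bound is needed---is precisely the point the paper leaves implicit.
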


Corollary \ref{cor:1} shows that for any fixed time step $k$, as $\delta\rightarrow 0$, we have $\|y_k-x^*\|_c^2\in\mathcal{O}(f_\xi(\delta,0))$, while for a fixed $\delta$ as $k\rightarrow\infty$, we have $\|y_k-x^*\|_c^2\in\mathcal{O}\left(1/k\right)$. This shows that employing averaging will improve the convergence rate in $k$, while maintaining the same high-probability tail bound as the original iterates. We defer a more detailed comparison of averaged SA vs pure SA to Section \ref{sec:application}.

\subsection{Tightness of the Leading Term}

Next, we discuss how tight the leading term in $\tilde{\epsilon}(k,\delta)$ is. For that, we construct an example of a SA where the error is only a constant factor away from our leading term.

\smallskip

Suppose that the dimension $d$ is even. Consider a $2$-dimensional i.i.d. sequence of random variables $\{z_i\}_{i\geq1}$ such that $z_i\sim\mathcal{N}(0,(\bar{\sigma}^2/d)I)$. Using these, we construct a $d$-dimensional sequence of i.i.d. random variables $\{w_i\}_{i\geq 0}$ such that the $j$'th element of $w_i$ is equal to the first element of $z_i$ when $j$ odd, i.e., $w_{i,j}=z_{i,1}$, and it is equal to the second element of $z_i$ when $j$ even is even, i.e., we have $w_{i,j}=z_{i,2}$. Consider the operator $F(x,w)=w$, and the step sizes $\alpha_k=\alpha$ for all $k\geq 0$. This setting satisfies assumptions \ref{ass:operators} and \ref{ass:sub-Gaussian} for $\|\cdot\|_c=\|\cdot\|_2$, with $\nu=1$ and $R=\infty$. In addition, we have the unique fixed point $x^*=0$. The following proposition bounds the error for this case.

\begin{proposition}\label{prop:lower_bound_1}
    For the example above, for all even dimensions $d$, and for any $\delta\in(0,1)$, we have
    \begin{align*}
        \Prob\left( \frac{\bar{\sigma}\sqrt{k \log(1/\delta)} - \|x_0\|_2}{k+1}\geq \|y_k-x^*\|_2\right)\leq  1- \delta.
    \end{align*}
\end{proposition}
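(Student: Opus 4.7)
The plan is to exploit the low effective dimension of the noise in the construction to obtain a sharp, exact tail bound, and then use a reverse triangle inequality to isolate the contribution of the initial condition $x_0$.

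First, I would unroll the recursion. Since $\bar{F}(x) = \E[w_i] = 0$, the unique fixed point is $x^*=0$, and focusing on the representative choice $\alpha=1$ (where the lower bound is tight), the recursion collapses to $x_k=w_k$ for $k\geq 1$. Thus
$$y_k-x^* \;=\; \frac{x_0}{k+1} + \frac{1}{k+1}\sum_{i=1}^{k} w_i.$$
Writing $S_k := \sum_{i=1}^k w_i$, the reverse triangle inequality yields $\|y_k-x^*\|_2 \geq (\|S_k\|_2 - \|x_0\|_2)/(k+1)$. Hence it suffices to show $\Prob(\|S_k\|_2 \geq \bar{\sigma}\sqrt{k\log(1/\delta)}) \geq \delta$.

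Second, I would compute this probability exactly using the structure of the noise. By construction every coordinate of $S_k$ equals either $T_1 := \sum_{i=1}^{k} z_{i,1}$ (odd entries) or $T_2 := \sum_{i=1}^{k} z_{i,2}$ (even entries), where $T_1$ and $T_2$ are independent and $\mathcal{N}(0, k\bar{\sigma}^2/d)$. Consequently,
$$\|S_k\|_2^2 \;=\; \frac{d}{2}\bigl(T_1^2+T_2^2\bigr) \;=\; \frac{k\bar{\sigma}^2}{2}\bigl(U_1^2+U_2^2\bigr),$$
where $U_1,U_2$ are i.i.d.\ standard normals. Since $U_1^2+U_2^2 \sim \chi^2_2$ is an $\mathrm{Exp}(1/2)$ random variable, we have $\Prob(U_1^2+U_2^2 \geq 2\log(1/\delta)) = \delta$ \emph{exactly}, which gives $\|S_k\|_2 \geq \bar{\sigma}\sqrt{k\log(1/\delta)}$ with probability $\delta$, and the desired inequality then follows by substituting into the triangle bound above.

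The main subtlety is already embedded in the construction of $w_i$ itself: by replicating only two i.i.d.\ Gaussian scalars $z_{i,1},z_{i,2}$ across the $d$ coordinates, the quantity $\|S_k\|_2^2$ collapses to a multiple of a $\chi^2_2$ random variable whose tail is \emph{exactly} exponential in $\log(1/\delta)$. This is precisely what allows the leading constant in front of $\bar{\sigma}\sqrt{k\log(1/\delta)}$ to match that of the leading term in Theorem \ref{thm:main} up to an absolute factor; with a generic $d$-dimensional subgaussian noise one would pay a dimension-dependent slack and could not demonstrate tightness.
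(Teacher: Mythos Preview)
Your proof is correct and follows essentially the same approach as the paper: both arguments unroll the recursion (with $\alpha=1$) to $y_k=(x_0+\sum_{i=1}^k w_i)/(k+1)$, use the reverse triangle inequality to reduce to the tail of $\|S_k\|_2$, and then exploit the fact that $\|S_k\|_2^2=\tfrac{d}{2}(T_1^2+T_2^2)$ is a scaled $\chi^2_2$ variable to get the exact probability. The only cosmetic difference is that the paper computes $\Prob(\|S_k\|_2\leq \bar{\sigma}\sqrt{k\log(1/\delta)})=1-\delta$ via a polar-coordinate integral, while you state the equivalent $\chi^2_2/\mathrm{Exp}(1/2)$ tail directly.
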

The proof is given in Appendix \ref{proof:lower_bound_1}.\\

Recall that for large $k$, small $\delta$, and $\|\cdot\|_c=\|\cdot\|_2$, the leading term in Theorem~\ref{thm:main} is $2\sqrt{6} \bar{\sigma}/\sqrt{k+1}$. Proposition~\ref{prop:lower_bound_1} implies that that this upper bound is at most a factor of $2\sqrt{6}$ away from the error for this example. Thus, Theorem~\ref{thm:main} is at most this universal constant away from the tightest possible bound.

\subsection{Linear versus Non-Linear Operators}
We now discuss how our bound depends on the linearity or non-linearity of the operators, and on whether the noise is additive or not. In particular, this has an important effect on what choice of $\xi$ maximizes the convergence rate of the higher order terms in our upper bound.
\begin{enumerate}
    \item {\bf Linear Operator + Additive Noise:} Suppose that $F(x_k,w_k) = Ax_k + w_k$. It is easy to see that Assumption \ref{ass:operators} is satisfied with $N=0$ and that Assumption \ref{ass:sub-Gaussian} is satisfied with $\hat{\sigma}=0$. Hence, with probability at least $1-\delta$, we have
    \[ \|y_k-x^*\|^2_c\leq \mathcal{O}\left(\frac{1}{k}\right) \mathcal{O}(\log(1/\delta)) +\mathcal{O}\left(\frac{1}{k^{2-\xi}}\right) \tilde{\mathcal{O}}\big(f_\xi(\delta,0) \big). \]
    In this case a smaller $\xi$ achieves a better convergence rate for the higher order terms. Furthermore, the tail of the higher order term is the same as the tail of the errors $\|x_i-x^*\|^2$ for $i\geq 0$, which could be heavier than subgaussian.
    \item {\bf Linear Operator + Non Additive Noise:} Suppose that $N=0$ and $\hat{\sigma}^2>0$. Then, we have
    \[ \|y_k-x^*\|^2_c \leq \mathcal{O}\left(\frac{1}{k}\right) \mathcal{O}(\log(1/\delta))+\mathcal{O}\left(\frac{1}{k^{1+\xi}}+\frac{1}{k^{2-\xi}}\right).\tilde{\mathcal{O}}\Big(\log(1/\delta)f_\xi(\delta,0)\Big). \]
    Here the best choice of $\xi$ is $1/2$, and the tail of the higher order term is heavier than the error $\|x_i-x^*\|^2$ by a factor of $\log(1/\delta)$. For instance, if $\|x_i-x^*\|_c^2$ is Weibull with shape parameter $m$, then the higher order term is sub-Weibull with shape parameter $m+1$.
    \item {\bf Nonlinear Operator:} If $N>0$, we have
    \[ \|y_k-x^*\|^2_c \leq \mathcal{O}\left(\frac{1}{k}\right) \mathcal{O}(\log(1/\delta))+\mathcal{O}\left(\frac{1}{k^{2\xi}}+ \frac{1}{k^{2-\xi}}\right).\tilde{\mathcal{O}}\Big(f_\xi(\delta,0)^2\Big). \]
    Our result suggest that the best choice of $\xi$ is $2/3$. In this case, the tail of the higher order term is twice as heavy as the tail of $\|x_i-x^*\|_c^2$.
\end{enumerate}
In general, we observe that for nonlinear operators, we get the same rate of convergence for higher-order terms, regardless of whether the noise is additive or not, and this rate is worse than in the case of linear operators. 

\begin{remark}
    For the case of constant step sizes, i.e., for $\xi=0$, our high-probability bound is of order $\mathcal{O}(N^2\alpha^2)+o(1)$ when the operator is non-linear. This is in line with  \cite{lauand2022bias} where they show that the bias of a SA with constant step size $\alpha$ and a non-linear operator is proportional to $\alpha$. On the other hand, if the operator is linear, we have $N=0$, and the bias disappears. This is consistent with  \cite{mou2020linear}.
\end{remark}

\section{Application to Contractive Operators}\label{sec:application}

In this section we will apply our results for the important class of contractive operators $\bar{F}$, which is specified as follows.

\begin{assumption}\label{ass:contraction}
There exists a constant $\gamma_c\in[0,1)$ and a norm $\|\cdot\|_c$ such that
\[ \|\bar{F}(x_1)-\bar{F}(x_2)\|_c\leq \gamma_c\|x_1-x_2\|_c, \qquad \forall \, x_1,x_2\in\mathbb{R}^d. \]
\end{assumption}

Recall that in Theorem \ref{thm:main} we require a high-probability bound on the SA iterates $\{x_k\}_{k\geq0}$. To obtain such a high-probability bound, we use the prior work \cite{chen2023concentration} that differentiates between two cases, additive and multiplicative noise, which will be studied in sections \ref{sec:additive_noise} and \ref{sec:mult_noise}, respectively.

\subsection{Additive Noise}\label{sec:additive_noise}

We first study the additive noise setting, which is specified in the following assumption.

\begin{assumption}\label{ass:subgaussianity_for_all}
There exist $\sigma>0$ and a (possibly dimension-dependent) constant $c_d>0$ such that for any $k\geq 0$ and $\mathcal{F}_k$-measurable random vector $v$, the following two inequalities hold:
\begin{align}
    \mathbb{E}\left[\exp\left(\lambda \langle F(x_k,w_{k+1}) - \bar{F}(x_k), v \rangle \right) \middle| \mathcal{F}_k \right] \leq& \exp\left(\frac{\lambda^2\sigma^2 (\|v\|_{c}^*)^2}{2}\right), \qquad \forall\, \lambda> 0, \label{eq:sub-Gaussian_1}
\end{align}
where $\|\cdot\|_c^*$ is the dual of the norm $\|\cdot\|_c$, and
\begin{align}
    \mathbb{E}\left[ \exp\left( \lambda \left\| F(x_k,w_{k+1}) - \bar{F}(x_k) \right\|_c^2 \right) \middle| \mathcal{F}_k \right] \leq&  \left(1- 2 \lambda \sigma^2\right)^{-\frac{c_d}{2}}, \qquad \forall\, \lambda\in \left(0,1/2\sigma^2\right). \label{eq:sub-Gaussian_2}
\end{align}
\end{assumption}

Combining the high-probability bounds obtained in \cite[Theorem 2.4]{chen2023concentration} for the non-averaged iterates, and Theorem~\ref{thm:main}, we get the following result.

\begin{theorem}\label{prop:main}
    Under assumptions \ref{ass:norm_smoothness}, \ref{ass:operators}, \ref{ass:sub-Gaussian}, \ref{ass:contraction}, and \ref{ass:subgaussianity_for_all}, using $\xi<1$ and $h\geq (2\xi/[(1-\gamma_c)\alpha])^{1/(1-\xi)}$, with probability at least $1-\delta$, we have
\begin{align}
\|y_k-x^*\|_c^2 &\leq \frac{24 \bar{\sigma}^2\,\log(2/\delta)+3 du_{c2}^2 \,\bar{\sigma}^2}{(1-\gamma_c)^2\,(k+1)} + \mathcal{O}(M)\tilde{\mathcal{O}}\left(\frac{1}{k^{2-\xi}}  + \frac{1}{k^{1+\xi}}\right)\mathcal{O}\left(\log(1/\delta)\right) \nonumber\\
&\quad~+ \mathcal{O}(MN^2)\tilde{\mathcal{O}}\left(\frac{1}{k^{2\xi}}\right)\mathcal{O}\left(\log^2(1/\delta)\right).\label{eq:high_prob_additive}
\end{align}
\end{theorem}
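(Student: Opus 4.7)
The plan is to apply Theorem~\ref{thm:main} by plugging in a concrete high-probability bound on the non-averaged iterates that already exists in the contractive SA literature. Specifically, under Assumptions~\ref{ass:contraction} and~\ref{ass:subgaussianity_for_all}, and with the step-size restriction $h\geq (2\xi/[(1-\gamma_c)\alpha])^{1/(1-\xi)}$, Theorem~2.4 of \cite{chen2023concentration} yields, with probability at least $1-\delta'$, a simultaneous bound $\|x_i-x^*\|_c^2 \leq \alpha_i f_\xi(\delta',k)$ for all $0\leq i\leq k$, where $f_\xi(\delta',k)$ scales as $(c_d + \log(k/\delta'))/(1-\gamma_c)^2$ (polylogarithmic in $k$ and in $1/\delta'$). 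This is exactly the hypothesis required by Theorem~\ref{thm:main}, so the rest of the argument is a substitution-and-simplification exercise.

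For the leading-order term, I would invoke Lemma~\ref{lem:nu_gammac} to conclude that $\nu \geq 1-\gamma_c$, which replaces the generic $1/\nu^2$ factor by $1/(1-\gamma_c)^2$. The quantity $g(\delta,k)$ defined in Theorem~\ref{thm:main} is the maximum of $24u_{c2}^4\bar{\sigma}^2$ and a term of order $\alpha \hat{\sigma}^2 f_\xi(\delta/2,k)/(k+1)^\xi$. For $k$ large enough the first branch dominates, giving exactly $g(\delta,k)=24u_{c2}^4\bar{\sigma}^2$; when the second branch wins, the resulting contribution is absorbed into the $\tilde{\mathcal{O}}(1/k^{1+\xi})$ higher-order term that already appears in $\tilde{\epsilon}$. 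Combined with the $3u_{c2}^4 d\bar{\sigma}^2/[\nu^2(k+1)]$ piece, this recovers the stated leading expression $(24\bar{\sigma}^2\log(2/\delta)+3du_{c2}^2\bar{\sigma}^2)/[(1-\gamma_c)^2(k+1)]$.

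The remaining pieces of $\tilde{\epsilon}(k,\delta/2)$ map directly to the three higher-order terms in \eqref{eq:high_prob_additive}: the contribution $9h^\xi(1+M)f_\xi(\delta/2,k)/[2\alpha(k+1)^{2-\xi}]$ becomes $\mathcal{O}(M)\tilde{\mathcal{O}}(1/k^{2-\xi})\mathcal{O}(\log(1/\delta))$; the contribution $3h^{2-2\xi}\alpha^2 N^2 f_\xi(\delta/2,k)^2(1+M)/[2\nu^2(k+1)^{2\xi}(1-\xi)^2]$ becomes $\mathcal{O}(MN^2)\tilde{\mathcal{O}}(1/k^{2\xi})\mathcal{O}(\log^2(1/\delta))$, with the $\log^2$ arising from squaring the $\log(1/\delta)$-scaling of $f_\xi$; and the contribution $3h^{1-\xi}u_{c2}^4\alpha d(1+M)\hat{\sigma}^2 f_\xi(\delta/2,k)/[(k+1)^{1+\xi}(1-\xi)]$ gives the $\mathcal{O}(M)\tilde{\mathcal{O}}(1/k^{1+\xi})\mathcal{O}(\log(1/\delta))$ term. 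For $\bar{\epsilon}(k,\delta/2)$, the cutoff $k_0(\delta/2,k)$ grows only polylogarithmically in $k$ and $1/\delta$ because $f_\xi$ does, so the whole expression is $\tilde{\mathcal{O}}(1/k^2)$ and is dominated by the terms listed above. Finally, expanding $(\sqrt{\tilde{\epsilon}}+\sqrt{\bar{\epsilon}})^2$ and collecting contributes only cross-terms of smaller order.

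The main obstacle I anticipate is not any single inequality but careful bookkeeping: confirming that the step-size condition on $h$ is precisely what makes the concentration bound of \cite{chen2023concentration} applicable and gives $f_\xi$ of the claimed form, tracking the constant $24$ through the simplification of $g(\delta,k)$ and the $(\sqrt{\tilde{\epsilon}}+\sqrt{\bar{\epsilon}})^2$ expansion, and verifying that the large-$k$ regime in which the first branch of $g$ dominates is consistent with the radius $R$ in Assumption~\ref{ass:op_iii} (so that the squared-error concentration of $\{x_i\}$ transfers cleanly into the averaged bound). Once these are lined up, the collation of terms into the form \eqref{eq:high_prob_additive} is mechanical.
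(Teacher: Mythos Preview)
Your proposal is correct and follows essentially the same route as the paper: invoke Theorem~2.4 of \cite{chen2023concentration} under the stated step-size condition to obtain a function $f_\xi(\delta,k)$ that is logarithmic in $1/\delta$ and polylogarithmic in $k$, and then feed this into Theorem~\ref{thm:main}. If anything, you are more explicit than the paper about the term-by-term mapping and the use of Lemma~\ref{lem:nu_gammac} to replace $\nu$ by $1-\gamma_c$; the paper simply writes down the resulting $f_\xi$ (its Equation~\eqref{eq:f_xi}) and concludes with ``We get the result by directly applying Theorem~\ref{thm:main}.''
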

The proof is given in Appendix \ref{proof:prop_main}.\\

Next, we aim at comparing the high-probability bound achieved by averaging vs the high-probability bound achieved through Theorem 2.4 in \cite{chen2023concentration} with $\xi=1$. For the sake of comparison, we assume that $\|\cdot\|_c=\|\cdot\|_2$. In that case we have $u_{c2}= 1$ and $M=1$. By Theorem \ref{prop:main}, with probability at least $1-\delta$, we have (up to the leading term)
\begin{align*}
\|y_k-x^*\|^2_c & \lessapprox \frac{24 \,\log\!\bigl(1/\delta\bigr) + 3(d+8\log(2))}{(k+1)}\frac{\bar{\sigma}^2}{(1-\gamma_c)^2}.
\end{align*}
Moreover, applying \cite[Theorem 2.4]{chen2023concentration} with $\xi=1$ and $k=K$, we obtain
\begin{align*}
    \|x_k-x^*\|_c^2\lessapprox\;&\frac{32a\log(1/\delta) + 32a^2e c_d(1+\mu)/(a-1)}{(k+h)}\frac{\sigma^2}{(1-\gamma_c)^2},
\end{align*}
for any $\mu>0$ and $a>1$ (see Appendix \ref{sec:app_case_study} for calculations).  Here, $\bar{\sigma}^2$ is the variance of the subgaussian noise only at $x^*$, while $\sigma^2$ is  a uniform bound on the variance over all $x\in\mathbb{R}^d$. Hence, we have $\bar{\sigma}^2\leq \sigma^2$. 

Our results suggest that for SA with additive noise, averaging could improve the constant in the leading term. In particular, in the first term, SA has $32a\sigma^2$ which is at least $32\sigma^2$. Averaging improves this term to $24\bar{\sigma}^2$. In the second term SA has $32 a^2ec_d (1+\mu)\sigma^2/(a-1)$ which is at least $128ec_d\sigma^2$. For the case that $c_d\approx d$ (which happens when the noise is $\mathcal{N}(0,I_{d\times d})$), averaging improves this term to $3(d+8\log(2))\bar{\sigma}^2$. We will do a more specific comparison of SA with and without averaging for RL algorithms in Section \ref{sec:RL}.

\begin{remark}
It can be shown that, even if we use a smaller value of $h$ than the lower bound $(2\xi/[(1-\tilde{\gamma}_c)\alpha])^{1/(1-\xi)}$ in Theorem~\ref{prop:main}, a concentration bound similar to the one of Equation \eqref{eq:high_prob_additive} still holds, albeit with larger constants in the higher-order terms. 
\end{remark}

\subsection{Multiplicative Noise}\label{sec:mult_noise}
Next, we examine the multiplicative noise scenario. This setting is specified by the following assumption, which is borrowed from \cite{moulines2011non}.

\begin{assumption}\label{as:multi}$ $
\begin{itemize}
    \item[(i)] The contraction norm $\|\cdot\|_c$ is a weighted 2-norm.
    \item[(ii)] There exists a constant $L\geq 0$ such that
    \[ \|F(x,w_{k+1})-F(y,w_{k+1})\|_c\leq L\|x-y\|_c, \qquad a.s., \qquad \forall \, k \geq 0. \]
    \item[(iii)] There exists a constant $\mu>0$ such that
    \[ \langle x-x^*,x-\bar{F}(x) \rangle \geq \mu \|x-x^*\|_c^2, \qquad \forall\,x\in\mathbb{R}^d.  \]
    \item[(iv)] There exists a constant $\tau\geq 0$ such that
    \[ \E[\|F(x^*,w_{k+1})\|^4 |\mathcal{F}_k]\leq \tau^4, \qquad \forall\, k \geq 0. \]
    \item[(v)] There exists a matrix $\Sigma\succeq 0$ such that
    \[ \E[F(x^*,w_{k+1})F(x^*,w_{k+1})^\top]\preceq\Sigma, \qquad \forall\, k\geq 0. \]
\end{itemize}
\end{assumption}

Utilizing partial results from \cite[Theorem 3]{moulines2011non}, and Theorem \ref{thm:main}, we derive the following result:
\begin{theorem}\label{thm:multiplicative_SA}
    Under assumptions \ref{ass:norm_smoothness}, \ref{ass:operators}, \ref{ass:sub-Gaussian}, \ref{ass:contraction}, \ref{as:multi}, and $h$ large enough, for any $\delta\in(0,1)$, with probability at least $1-\delta$, we have
    \begin{align*}
        \|y_k-x^*\|^2\leq c\frac{\log(1/\delta)+d}{k+1} + \mathcal{O}\left(\frac{N}{k^{3\xi-1}}\right)\mathcal{O}\left(\frac{1}{\delta}\right)+\mathcal{O}\left(\frac{1}{k^{3/2-\xi/2}}+\frac{1}{k^{1/2+3\xi/2}}\right) \mathcal{O}\left(\frac{1}{\sqrt{\delta}}\right).
    \end{align*}
\end{theorem}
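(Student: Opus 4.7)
The plan is to invoke Theorem~\ref{thm:main} in the multiplicative-noise setting by constructing an admissible function $f_\xi$ from moment bounds. Under Assumption~\ref{as:multi}, Theorem~3 of \cite{moulines2011non} provides fourth-moment bounds of the form $\E[\|x_i-x^*\|_c^4] \leq C \alpha_i^2$ for all $i \geq 0$ (for $h$ large enough, hence the hypothesis in the theorem), where $C$ depends on $\tau$, $L$, $\mu$, $\alpha$, and $x_0$. Applying Markov's inequality,
\[ \Prob\!\left(\|x_i-x^*\|_c^2 \geq \alpha_i t\right) = \Prob\!\left(\|x_i-x^*\|_c^4 \geq \alpha_i^2 t^2\right) \leq C/t^2, \]
and taking a union bound over $0 \leq i \leq k$ with $t = \sqrt{C(k+1)/\delta'}$ yields, with probability at least $1-\delta'$, the uniform bound $\|x_i-x^*\|_c^2 \leq \alpha_i f_\xi(\delta',k)$ with $f_\xi(\delta',k) = \sqrt{C(k+1)/\delta'}$. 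This $f_\xi$ is non-increasing in $\delta'$, as required by the hypothesis of Theorem~\ref{thm:main}.

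Next, plug this $f_\xi$ into Theorem~\ref{thm:main}. The leading term of $\tilde{\epsilon}(k,\delta/2)$ comes from the subgaussian assumption (Assumption~\ref{ass:sub-Gaussian}) and produces the $c(\log(1/\delta)+d)/(k+1)$ contribution; this is independent of $f_\xi$ for $k$ large enough, since the $\max$ defining $g(\delta,k)$ is then attained by $\bar\sigma^2$. The higher-order terms of $\tilde{\epsilon}$ scale either linearly or quadratically in $f_\xi$: the terms linear in $f_\xi$ (the smoothness term $\propto f_\xi/k^{2-\xi}$ and the Jacobian-noise term $\propto f_\xi/k^{1+\xi}$) each inherit a factor $1/\sqrt{\delta}$, while the pseudo-smoothness term $\propto N^2 f_\xi^2/k^{2\xi}$ inherits a factor $1/\delta$. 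Combining these with the explicit $k$-exponents in Theorem~\ref{thm:main} (and simplifying the polynomial part after expanding $(\sqrt{\tilde\epsilon}+\sqrt{\bar\epsilon})^2 \leq 2\tilde\epsilon + 2\bar\epsilon$) yields the three claimed higher-order rates. The $\bar\epsilon$ contribution is shown to be lower-order by checking that for $k$ large enough the iterates enter the smoothness neighborhood with high probability, so $k_0(\delta/2,k)$ stays below $k+1$ and the factor $[(k_0-1+h)^{1-\xi/2}-(h-1)^{1-\xi/2}]^2/(k+1)^{2}$ decays strictly faster than every term in $\tilde\epsilon$.

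The main obstacle is the absence of subgaussian concentration for the unaveraged iterates under multiplicative noise: only moments are available from \cite{moulines2011non}. This forces the use of Markov's inequality, which introduces polynomial dependence on $\delta$ in $f_\xi$ (rather than the logarithmic dependence one would have in the additive subgaussian setting), and hence polynomial tails in every higher-order term of the averaged bound. The union bound over $i=0,\dots,k$ further costs a factor of $\sqrt{k+1}$ in $f_\xi$, which shifts the effective rate exponents on $k$ in the higher-order terms. The technically delicate part is bookkeeping these two effects jointly, verifying that $\bar\epsilon$ remains negligible under this heavy-tailed $f_\xi$, and tuning constants so that the leading subgaussian rate in $1/(k+1)$ is preserved in the final statement.
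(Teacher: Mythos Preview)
Your overall architecture is right: obtain a uniform-in-$i$ bound $\|x_i-x^*\|_c^2\le \alpha_i f_\xi(\delta',k)$ from moment information in \cite{moulines2011non}, then feed this $f_\xi$ into Theorem~\ref{thm:main}. But the specific $f_\xi$ you produce via Markov plus a union bound is too loose in $k$ to recover the exponents claimed in the theorem.

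Concretely, your $f_\xi(\delta',k)=\sqrt{C(k+1)/\delta'}$ has $k$-growth $k^{1/2}$. Plugging this into the higher-order terms of $\tilde{\epsilon}$ in Theorem~\ref{thm:main} gives
\[
\frac{N^2 f_\xi^2}{k^{2\xi}} \;\asymp\; \frac{N^2}{\delta\,k^{2\xi-1}},
\qquad
\frac{f_\xi}{k^{2-\xi}} \;\asymp\; \frac{1}{\sqrt{\delta}\,k^{3/2-\xi}},
\qquad
\frac{f_\xi}{k^{1+\xi}} \;\asymp\; \frac{1}{\sqrt{\delta}\,k^{1/2+\xi}},
\]
whereas the theorem requires exponents $3\xi-1$, $3/2-\xi/2$, and $1/2+3\xi/2$, respectively. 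Each of your rates is off by exactly a factor $k^{\xi/2}$ (or $k^{\xi}$ after squaring), and this discrepancy is precisely the cost of the union bound over $k+1$ indices.

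The paper avoids the union bound altogether. Instead of using only the marginal moments $\E[\|x_i-x^*\|_c^4]$, it uses the \emph{one-step conditional} recursion from the proof of \cite[Theorem~3]{moulines2011non},
\[
\E[u_{i+1}\mid\mathcal{F}_i]\le(1-\beta_1\alpha_i)u_i+\beta_2\alpha_i^3,
\qquad u_i=\|x_{i+1}-x^*\|_c^4+\beta_3\alpha_i\|x_{i+1}-x^*\|_c^2,
\]
to build the process $M_i=u_i/\alpha_i^2+4\beta_2\sum_{j=i}^{k}\alpha_j$, which is a nonnegative supermartingale for $h$ large enough. Ville's maximal inequality then gives a uniform bound $\sup_i M_i<\E[M_0]/\delta$ with probability at least $1-\delta$, and since $\E[M_0]\asymp\sum_{j=0}^{k}\alpha_j\asymp k^{1-\xi}$, one obtains
\[
f_\xi(\delta,k)\;=\;\mathcal{O}\!\left(\frac{k^{(1-\xi)/2}}{\sqrt{\delta}}\right),
\]
which is exactly the $k^{\xi/2}$ improvement you are missing and yields the stated exponents upon substitution into Theorem~\ref{thm:main}. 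The key idea you need to add is therefore the supermartingale construction plus Ville's inequality in place of Markov plus union bound.
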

The proof is given in Appendix \ref{proof:multiplicative_SA}.\\

\begin{remark}
    In Theorem \ref{thm:multiplicative_SA}, in order to avoid having a subpareto in the leading term, we should choose $\xi>2/3$. However, for linear operators, where $N=0$, any $\xi>1/3$ will not incur a subpareto in the leading term.
\end{remark}

Theorem \ref{thm:multiplicative_SA} provides a heavy-tailed upper bound for the error. Although this is merely an upper bound, the actual distribution of the error is indeed heavy-tailed, as demonstrated by the following impossibility result.

\begin{theorem}\label{thm:lower_bound}
Under assumptions \ref{ass:contraction} and \ref{as:multi}, when $\xi\in (0,1)$, there does not exist any $c_0>0$ and $m>0$, such that for all $\delta\in(0,1)$ with probability at least $1-\delta$
\begin{align*}
    \|y_k-x^*\|_c^2\leq c_0\frac{1+(\log(1/\delta))^m}{k+1}.
\end{align*}
\end{theorem}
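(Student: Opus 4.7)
The plan is to disprove the claimed polylog concentration by constructing an explicit scalar instance whose averaged error has a polynomial tail. Take $d = 1$ and $F(x, w) = w$, where $\{w_i\}_{i \geq 1}$ are i.i.d., symmetric, mean-zero random variables satisfying $\mathbb{E}[w_1^4] < \infty$ together with a polynomial lower tail $\mathbb{P}(|w_1| \geq t) \geq c_w t^{-(4+\epsilon)}$ for all $t \geq t_0$ and some fixed $\epsilon > 0$; for instance, a symmetric density proportional to $(1 + |t|)^{-(5+\epsilon)}$ works. Since $\bar{F}(x) \equiv 0$, the unique fixed point is $x^* = 0$, and it is immediate to verify Assumption \ref{ass:contraction} (with $\gamma_c = 0$) and all parts of Assumption \ref{as:multi} (with $L = 0$, $\mu = 1$, $\tau^4 = \mathbb{E}[w_1^4]$, $\Sigma = \mathbb{E}[w_1^2]$).

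Starting from $x_0 = 0$, linearity of the recursion yields the closed form
\[ y_k \;=\; \sum_{i=1}^k \gamma_i^{(k)} w_i, \qquad \gamma_i^{(k)} \;:=\; \frac{\alpha_{i-1}}{k+1}\sum_{\ell=i}^k \prod_{j=i}^{\ell-1}(1 - \alpha_j). \]
For $\alpha_k = \alpha/(k+h)^\xi$ with $\xi \in (0, 1)$, an elementary estimate gives that each $\gamma_i^{(k)}$ is of order $1/(k+1)$: the $\ell = i$ term alone forces $\gamma_i^{(k)} \geq \alpha_{i-1}/(k+1)$, while the telescoping identity $\sum_\ell \alpha_\ell \prod_{j=i}^{\ell-1}(1-\alpha_j) = 1$ combined with the near-constancy of $\alpha_j$ over the effective range of $\prod_{j=i}^{\ell-1}(1-\alpha_j)$ produces a matching upper bound $\gamma_i^{(k)} \leq C/(k+1)$ for some $C$ independent of $i$ and $k$. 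Decomposing $y_k = \gamma_1^{(k)} w_1 + G_k$, the remainder $G_k$ is independent of $w_1$, symmetric, and has variance at most $\mathbb{E}[w_1^2]\sum_{i=2}^k (\gamma_i^{(k)})^2 = O(1/(k+1))$.

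The key step is the polynomial tail lower bound. Symmetry of $G_k$ gives $\mathbb{P}(G_k \geq -s) \geq 1/2$ for every $s \geq 0$, and then independence of $G_k$ and $w_1$ together with the heavy-tail hypothesis yields
\[ \mathbb{P}(|y_k| \geq s) \;\geq\; \mathbb{P}(w_1 \geq 2s/\gamma_1^{(k)})\,\mathbb{P}(G_k \geq -s) \;\geq\; \frac{c'}{\bigl((k+1)\,s\bigr)^{4+\epsilon}}, \]
valid for all $s$ larger than a mild $k$-dependent threshold. The exponent $4+\epsilon$ is inherited from the tail of $w_1$, and the factor $(k+1)^{-(4+\epsilon)}$ from $\gamma_1^{(k)} \geq \alpha_0/(k+1)$.

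Finally, for the contradiction, suppose constants $c_0, m > 0$ exist such that $|y_k|^2 \leq c_0\bigl(1 + (\log(1/\delta))^m\bigr)/(k+1)$ holds with probability at least $1 - \delta$ for every $\delta \in (0, 1)$. Substituting $s^2 = c_0\bigl(1 + (\log(1/\delta))^m\bigr)/(k+1)$ into the previous lower bound gives
\[ \delta \;\geq\; \frac{c'}{\bigl(c_0(k+1)(1 + (\log(1/\delta))^m)\bigr)^{(4+\epsilon)/2}}, \]
equivalently $\delta \cdot (1 + (\log(1/\delta))^m)^{(4+\epsilon)/2} \geq c''\,(k+1)^{-(4+\epsilon)/2}$. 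For any fixed $k$ the right-hand side is a strictly positive constant, while the left-hand side tends to zero as $\delta \downarrow 0$ because any polynomial in $\delta$ dominates any polylog factor, producing the desired contradiction. The main technical obstacle lies in the coefficient estimate $\gamma_i^{(k)} = \Theta(1/(k+1))$ above: for $\xi \in (0, 1)$ the products $\prod_{j=i}^{\ell-1}(1-\alpha_j)$ decay only like $\exp(-\Theta(\ell^{1-\xi} - i^{1-\xi}))$ rather than geometrically, so the sums defining $\gamma_i^{(k)}$ must be handled carefully to extract sharp $k$-uniform constants; once that estimate is in place, the rest of the argument is a short chain of elementary inequalities.
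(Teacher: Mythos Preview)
Your proof is correct and takes a genuinely different route from the paper. The paper constructs a scalar example with \emph{bounded multiplicative} noise, $F(x,w)=wx$ where $w$ takes two values, and then shows that the moment generating function $\mathbb{E}[\exp(\lambda(k+1)^{\tilde\beta'}y_k^{\tilde\beta})]$ diverges for every $\lambda>0$; this is converted into the impossibility statement via a lemma linking MGF blow-up to the non-existence of sub-Weibull tails. Your construction instead uses \emph{heavy-tailed additive} noise $F(x,w)=w$ with only a fourth moment, and proceeds by a direct polynomial tail lower bound on $|y_k|$, extracted from the single term $\gamma_1^{(k)}w_1$. Your argument is considerably more elementary and in fact simpler than you indicate: the only coefficient estimate you actually use is the one-line lower bound $\gamma_1^{(k)}\geq \alpha_0/(k+1)$ (from the $\ell=i$ summand), together with symmetry of $G_k$; neither the uniform upper bound on $\gamma_i^{(k)}$ nor the variance estimate for $G_k$ is ever invoked, so the ``main technical obstacle'' you flag is moot. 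The trade-off is that the paper's counterexample makes the sharper conceptual point that the heavy tail is created by the multiplicative structure itself even when each noise increment is bounded (hence sub-Gaussian), whereas your example exploits the fact that Assumption~\ref{as:multi} only demands a fourth moment and thus already permits polynomial tails in the driving noise; both are valid instantiations of the hypotheses, but the paper's example better isolates the phenomenon the section is about.
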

The proof is given in Appendix \ref{proof:lower_bound}.\\

The above result highlights a fundamental difference between multiplicative and additive noise scenarios. Specifically, while Theorem 2.1 from \cite{chen2023concentration} shows that simple SA with step size $\alpha_k=\alpha/(k+h)$ attains a sub-Weibull distribution, our Theorem \ref{thm:lower_bound} demonstrates that averaging applied to general SA under multiplicative noise yields a heavier tail than any sub-Weibull distribution. This contrasts sharply with the additive noise scenario, where averaging maintains a subexponential tail similar to standard SA, while improving the constant factor of the leading term.

\section{Application to Reinforcement Learning} \label{sec:RL}

In this section, we study the application of our result to TD-learning and $Q$-learning algorithms.

\subsection{TD-Learning} 
Consider a finite Markov Decision Process characterized by state space $\mathcal{S}$, action space $\mathcal{A}$, reward function $\mathcal{R}: \mathcal{S}\times\mathcal{A}\rightarrow [0,R_{\max}]$, and transition probability matrix $P$. Without loss of generality, we assume $|\mathcal{S}|\geq 2$. The objective of TD-learning is to estimate the value function associated with a policy $\pi$, defined as
\[ V^\pi(s):=\E\left[\left. \sum_{i=0}^{\infty}\gamma^i \mathcal{R}(S_i,A_i) \,\right|\, S_0=s \right]. \]
We focus on asynchronous TD$(n)$ with i.i.d. samples. Given a policy $\pi$, let $\mu^\pi$ be the invariant distribution of the induced Markov chain over the state space. Let $\{S_k^0\}_{k\geq 0}$ be a sequence of i.i.d. random variables sampled from the invariant distribution $\mu^\pi$. For each $k\geq 0$, let $\{(S_k^i,A_k^i)\}_{0\leq i\leq n}$ be an independent trajectory following the policy $\pi$, with actions $A_k^{i} \sim\pi(\cdot\mid S_k^i)$ and transitions $S_{k}^{i+1} \sim P(\cdot\mid S_k^i,A_k^i)$. Then, the iterates are given by
\begin{align}\label{eq:TDn}
    V_{k+1}(s)= V_k(s)+ \alpha_k \mathbbm{1}_{\{s=S_k^0\}} \left( \sum_{i=k}^{k+n-1} \gamma^{i-k}\Big(\mathcal{R}(S_k^i,A_k^i)+\gamma V_k(S_k^{i+1}) - V_{k}(S_k^i)\Big)\right),
\end{align}
for all $s\in\mathcal{S}$. We assume that the initial condition of the algorithm satisfies $V_0(s)\in [0,R_{\max}/(1-\gamma)]$ for all $s\in\mathcal{S}$ and that $\alpha_k=\alpha/\sqrt{k+h}$ with $\alpha/\sqrt{h}\leq 1$. Moreover, we denote 
\[ \bar{V}_k := \frac{1}{k+1}\sum_{i=0}^k V_i. \]
We have the following concentration result.

\begin{theorem}\label{thm:TD-learning}
    For asynchronous TD$(n)$ with i.i.d. samples, for any $\delta\in(0,1)$, with probability at least $1-\delta$, we have
\[ \|\bar{V}_k-V^\pi\|_{\infty}^2 \leq \frac{R_{\max}^2}{(1-\gamma)^2(1-\gamma^n)^2(\mu^\pi_{\min})^2}\frac{24 \,\log(2/\delta)+3 |\mathcal{S}| }{(k+1)} + \tilde{\mathcal{O}}\left(\frac{1}{k^{5/4}}  \right) \mathcal{O}(\log(1/\delta)). \]
\end{theorem}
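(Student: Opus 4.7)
The strategy is to recast asynchronous TD$(n)$ as an instance of the contractive stochastic approximation framework of Section~\ref{sec:additive_noise}, and then invoke Theorem~\ref{prop:main} with $\xi=1/2$. Rewriting~\eqref{eq:TDn} as $V_{k+1}=(1-\alpha_k)V_k+\alpha_k F(V_k,w_{k+1})$ (by adding and subtracting $\alpha_k V_k$) identifies the noisy operator
\begin{align*}
F(V,w)(s) = V(s) + \mathbbm{1}_{\{s=S_k^0\}}\Bigl[\textstyle\sum_{i=0}^{n-1}\gamma^i \mathcal{R}(S_k^i,A_k^i)+\gamma^n V(S_k^n)-V(S_k^0)\Bigr].
\end{align*}
Taking conditional expectation, using that $S_k^0\sim\mu^\pi$ is i.i.d.\ and the rest of the trajectory follows $\pi$, the mean operator becomes
\[\bar{F}(V)(s) = (1-\mu^\pi(s))V(s) + \mu^\pi(s)(\mathcal{T}^\pi)^n V(s),\]
where $\mathcal{T}^\pi$ is the Bellman evaluation operator. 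Since $(\mathcal{T}^\pi)^n V^\pi = V^\pi$, the fixed point is $V^\pi$.

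Next, I would verify the assumptions of Theorem~\ref{prop:main} with $\|\cdot\|_c=\|\cdot\|_\infty$ (smoothed via the Moreau envelope as noted after Assumption~\ref{ass:norm_smoothness}). A coordinatewise argument combining the convex-combination form of $\bar F$ with the $\gamma^n$-contraction of $(\mathcal{T}^\pi)^n$ in $\|\cdot\|_\infty$ gives Assumption~\ref{ass:contraction} with $\gamma_c = 1-\mu^\pi_{\min}(1-\gamma^n)$, so $1-\gamma_c = \mu^\pi_{\min}(1-\gamma^n)$. Since $\bar F$ is affine in $V$, Assumption~\ref{ass:operators} holds with $N=0$ and $R=\infty$, and $\nu>0$ follows from the contraction. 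For the noise, I would induct on $k$ to show $V_k(s)\in[0,R_{\max}/(1-\gamma)]$ for every $s$, using that $\alpha_k\le\alpha/\sqrt{h}\le 1$ makes each coordinate update a convex combination of $V_k(s)$ and a bounded target in the same interval. Uniform boundedness of $V_k$ then renders $F(V_k,w_{k+1})-\bar F(V_k)$ coordinatewise bounded by $\mathcal{O}(R_{\max}/(1-\gamma))$, so Hoeffding's lemma delivers Assumptions~\ref{ass:sub-Gaussian} and~\ref{ass:subgaussianity_for_all} with subgaussian parameters of order $R_{\max}/(1-\gamma)$.

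Substituting $\bar\sigma^2=\mathcal{O}(R_{\max}^2/(1-\gamma)^2)$, $1-\gamma_c=\mu^\pi_{\min}(1-\gamma^n)$, $d=|\mathcal{S}|$, $u_{c2}=1$ (since $\|x\|_\infty\le\|x\|_2$), and $N=0$ into the leading term of Theorem~\ref{prop:main} recovers the stated leading coefficient $[24\log(2/\delta)+3|\mathcal{S}|]R_{\max}^2/[(1-\gamma)^2(1-\gamma^n)^2(\mu^\pi_{\min})^2(k+1)]$ exactly. For the higher-order contribution, the $1/k^{2-\xi}$ and $1/k^{1+\xi}$ terms of Theorem~\ref{prop:main} (the $1/k^{2\xi}$ term is absent because $N=0$), combined with the individual-iterate bound supplied by Theorem~2.4 of~\cite{chen2023concentration} and with the cross terms produced by the $(\sqrt{\tilde\epsilon}+\sqrt{\bar\epsilon})^2$ structure of Theorem~\ref{thm:main}, yield the claimed $\tilde{\mathcal{O}}(k^{-5/4})\mathcal{O}(\log(1/\delta))$ remainder at $\xi=1/2$.

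The main obstacle is tracking constants tightly enough for the coefficients $24$ and $3$ to come out exactly in the leading term, rather than being absorbed into a generic $\mathcal{O}(1)$. This requires sharp Hoeffding parameters at $V^\pi$, and confirming that the Moreau envelope's smoothness constant $M$ only enters the subleading terms (where it is hidden by $\tilde{\mathcal{O}}$). The boundedness induction is standard but relies crucially on the step size condition $\alpha/\sqrt{h}\le 1$ to keep the iterates in $[0,R_{\max}/(1-\gamma)]^{|\mathcal{S}|}$.
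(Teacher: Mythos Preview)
Your high-level plan---cast TD$(n)$ as contractive SA with additive noise, verify the assumptions, and invoke Theorem~\ref{prop:main} with $\xi=1/2$---matches the paper. But the choice of norm and the explanation of the $k^{-5/4}$ remainder both diverge from the paper in a way that hides the main technical step.

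The paper does \emph{not} use the Moreau envelope of $\|\cdot\|_\infty$. It works instead with a weighted $p$-norm $\|\cdot\|_c=\|\cdot\|_{\nu^\pi,p}$ (the weights $\nu^\pi$ being the stationary distribution of an auxiliary stochastic matrix built from $A^\pi$), which is smooth with $M=p-1$ by Lemma~\ref{lem:smoothness_of_p_norm}. A Jensen argument gives $\gamma_c=[1-(1-\gamma^n)\mu^\pi_{\min}]^{1-1/p}$, and Hoeffding at $V^\pi$ gives $\bar\sigma^2=R_{\max}^2|\mathcal{S}|^{2/p}/[(1-\gamma)^2(\mu^\pi_{\min}(1-\gamma^n))^{2/p}]$. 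After applying Theorem~\ref{prop:main} and converting $\|\cdot\|_{\nu^\pi,p}$ to $\|\cdot\|_\infty$ via $\|\cdot\|_{\nu^\pi,p}\ge(\nu^\pi_{\min})^{1/p}\|\cdot\|_\infty$, the decisive move is to take $p\propto(k+1)^{1/4}$. This forces all the $p$-dependent factors in $\gamma_c$, $\bar\sigma^2$, and the norm conversion to be $1+\mathcal{O}(k^{-1/4})$, while $M=p-1\propto k^{1/4}$ multiplied by the $\tilde{\mathcal{O}}(k^{-3/2})$ higher-order term of Theorem~\ref{prop:main} yields exactly $\tilde{\mathcal{O}}(k^{-5/4})$.

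Your Moreau-envelope route faces the identical trade-off, but you treat $M$ as a fixed constant absorbed by~$\tilde{\mathcal{O}}$. With fixed envelope parameter $\mu>0$, the contraction factor under the envelope norm is $\tilde\gamma_c>\gamma_c$, so the leading term carries $(1-\tilde\gamma_c)^{-2}$, strictly larger than the claimed $[(1-\gamma^n)\mu^\pi_{\min}]^{-2}$; recovering the exact constants $24$ and $3$ requires $\mu\to 0$, hence $M\to\infty$, and then $\mu$ must depend on $k$---precisely the balancing act the paper carries out with $p$. Finally, your attribution of the $k^{-5/4}$ rate to the cross terms $2\sqrt{\bar\epsilon\,\tilde\epsilon}$ from Theorem~\ref{thm:main} is incorrect here: since the TD operator is affine, $N=0$ and $R=\infty$, so $k_0=0$ and $\bar\epsilon\equiv 0$. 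The $k^{-5/4}$ arises solely from letting the smoothness constant grow like $k^{1/4}$.
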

The proof is given in Appendix \ref{proof:TD-learning}.\\

To the best of our knowledge, Theorem \ref{thm:TD-learning} establishes the first bound on the entire distribution of the error for averaged TD-learning.

\subsection{Q-Learning} 
Next, we study the asynchronous $Q$-learning algorithm with i.i.d. samples. Given some fixed sampling policy $\pi_b$, let $\mu^{\pi_b}$ be the invariant distribution it induces over the state space. Let $\{(S_k,A_k,S'_k)\}_{k\geq 0}$ be a sequence of i.i.d. random variables such that $S_{k}\sim\mu^{\pi_b}$, $A_k\sim\pi_b(\cdot\mid S_k)$, and $S_{k}'\sim P(\cdot\mid S_k,A_k)$. Then, the iterates are given by
\begin{align*}
    Q_{k+1}(s,a)=Q_k(s,a)+ \alpha_k\mathbbm{1}_{\{(s,a)=(S_k,A_k)\}}\left( \mathcal{R}(S_k,A_k)+\gamma \max_{a'}\{Q_k(S_k',a')\} - Q_{k}(S_k,A_k)\right),
\end{align*}
for all $(s,a) \in\mathcal{S} \times\mathcal{A}$. We assume that the initial condition satisfies $Q_0(s,a)\in [0,R_{\max}/(1-\gamma)]$ for all $(s,a)\in\mathcal{S}\times\mathcal{A}$, and that $\alpha_k=\alpha/\sqrt{k+h}$ with $\alpha/\sqrt{h}<1$. Without loss of generality, we also assume $|\mathcal{A}|\geq 2$. We denote the optimal $Q$-function as $Q^*$ and
\[ \bar{Q}_k := \frac{1}{k+1}\sum_{i=0}^k Q_i. \]
Moreover, we denote $\rho_b:=\min\limits_{s,a} \big\{ \mu^{\pi_b}(s)\pi_{b}(a \,|\, s) \big\}$. We further impose the following common \cite{zhang2024constant} assumption on~$Q^*$.

\begin{assumption}\label{ass:greedy}
    $Q^*$ is greedily unique, i.e. for every $s\in\mathcal{S}$, $a^*(s)=\argmax\limits_{a'}\{Q^*(s,a')\}$ is unique.
\end{assumption}

We have the following concentration result.

\begin{theorem}\label{thm:Q-learning}
For asynchronous Q-learning with i.i.d. samples and $\xi=1/2$, under Assumption \ref{ass:greedy}, for any $\delta\in(0,1)$, with probability at least $1-\delta$, we have
\begin{align}
\|\bar{Q}_k-Q^*\|_{\infty}^2 \leq &\frac{12R_{\max}^2}{(1-\gamma)^4\rho_b^2}\frac{8 \,\log(2/\delta)+ |\mathcal{S}||\mathcal{A}| }{\,(k+1)}  + \tilde{\mathcal{O}}\left(\frac{1}{k^{5/4}}  \right) \mathcal{O}(\log(1/\delta)). \nonumber
\end{align}
\end{theorem}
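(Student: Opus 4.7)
The plan is to deduce Theorem \ref{thm:Q-learning} from Theorem \ref{prop:main} (contractive SA with additive noise) by casting asynchronous Q-learning as a contractive SA recursion satisfying all its hypotheses, with step-size exponent $\xi=1/2$. Writing the update in the form of Equation \eqref{eq:SA_rec}, the driving operator is
\[ F(Q,w_{k+1})(s,a)=Q(s,a)+\mathbbm{1}_{\{(s,a)=(S_k,A_k)\}}\bigl[\mathcal{R}(S_k,A_k)+\gamma\max_{a'}Q(S_k',a')-Q(S_k,A_k)\bigr], \]
whose conditional mean $\bar{F}$ is a contraction in a $\mu^{\pi_b}(s)\pi_b(a\mid s)$-weighted $\ell_\infty$ norm with factor $\gamma_c=1-\rho_b(1-\gamma)$, verifying Assumption \ref{ass:contraction}. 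Since this norm squared is non-smooth, I pass to its Moreau envelope as described after Assumption \ref{ass:norm_smoothness}, which fulfills Assumption \ref{ass:norm_smoothness} with $M$ arbitrarily close to $1$ and pins down the constants $u_{c2},\ell_{c2}$ relating the weighted $\ell_\infty$ to the Euclidean norm.

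The delicate step is verifying Assumption \ref{ass:operators}. Part (i) is immediate. For (ii) and (iii), I invoke Assumption \ref{ass:greedy}: letting $\Delta^*:=\min_s\min_{a\neq a^*(s)}\bigl(Q^*(s,a^*(s))-Q^*(s,a)\bigr)>0$, on the ball $\{Q:\|Q-Q^*\|_\infty\leq\Delta^*/2\}$ the argmax in the max operator is pinned to $a^*(s)$, so $F(\cdot,w)$ is affine on that ball. Consequently $\bar{F}$ is differentiable at $Q^*$, the matrix $I-J_{\bar{F}}(Q^*)$ is invertible with $\nu\asymp(1-\gamma)\rho_b$, and the local pseudo smoothness holds with $N=0$ and radius $R=\Delta^*/2$. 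The sub-Gaussian assumptions \ref{ass:sub-Gaussian} and \ref{ass:subgaussianity_for_all} follow because, with $\alpha_k\leq 1$, the recursion preserves $Q_k(s,a)\in[0,R_{\max}/(1-\gamma)]$, so the stochastic increment is almost surely bounded and hence sub-Gaussian with variance proxy $\bar{\sigma}^2\asymp R_{\max}^2/[(1-\gamma)^2\rho_b]$. The base non-averaged bound $\|Q_i-Q^*\|_\infty^2\leq\alpha_i f_{1/2}(\delta',k)$, with $f_{1/2}$ polylogarithmic in $k$ and $1/\delta'$, is supplied by \cite[Theorem 2.4]{chen2023concentration}.

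Substituting into Theorem \ref{prop:main} with $\xi=1/2$ and $N=0$, the leading contribution $[24\bar{\sigma}^2\log(2/\delta)+3u_{c2}^2|\mathcal{S}||\mathcal{A}|\bar{\sigma}^2]/[\nu^2(k+1)]$ collapses to $\frac{12R_{\max}^2}{(1-\gamma)^4\rho_b^2}\cdot\frac{8\log(2/\delta)+|\mathcal{S}||\mathcal{A}|}{k+1}$ after plugging in the Q-learning-specific values of $\nu$, $\bar{\sigma}$, and $u_{c2}$. The residual $\tilde{\mathcal{O}}(1/k^{5/4})\mathcal{O}(\log(1/\delta))$ piece arises from the $1/k^{2-\xi}$ and $1/k^{1+\xi}$ summands of $\tilde{\epsilon}$ at $\xi=1/2$ together with the cross term $2\sqrt{\tilde{\epsilon}(k,\delta/2)\,\bar{\epsilon}(k,\delta/2)}$ in the decomposition $(\sqrt{\tilde{\epsilon}}+\sqrt{\bar{\epsilon}})^2$, with $\bar{\epsilon}$ decaying as $(k+1)^{-2}$ up to polylogarithmic factors once $k_0(\delta/2,k)$ is bounded by the local-ball time.

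The main obstacle is the handling of the non-smooth $\max$ operator: unlike the TD-learning setting, Q-learning is globally nonlinear, so Assumption \ref{ass:op_iii} cannot hold with $R=\infty$, and Assumption \ref{ass:greedy} is precisely what supplies the positive radius $R=\Delta^*/2$ on which $F$ becomes linear and trivially pseudo smooth with $N=0$. One must then show, using the sub-Gaussian bound on $\|Q_k-Q^*\|_\infty$ together with the choice of $h$, that after a deterministic time $k_0(\delta/2,k)$ the iterates lie inside this ball with the stated probability, so that the local analysis applies; the $\bar{\epsilon}$ term of Theorem \ref{thm:main} then absorbs the transient phase before entry. The remaining work is careful bookkeeping of the weighted-norm constants to obtain the exact leading factor $12R_{\max}^2/[(1-\gamma)^4\rho_b^2]$.
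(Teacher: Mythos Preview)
Your high-level strategy matches the paper's: cast asynchronous $Q$-learning as a contractive SA, invoke Assumption~\ref{ass:greedy} to obtain local linearity (hence $N=0$ with a finite radius $R$), verify the sub-Gaussian assumptions via boundedness, and apply Theorem~\ref{prop:main}. The role you assign to Assumption~\ref{ass:greedy} is exactly right.

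However, the key technical device is different, and your version has a gap. The paper does \emph{not} use a Moreau envelope of a weighted $\ell_\infty$ norm. Instead it works directly in the unweighted $\|\cdot\|_p$ norm, whose square is $(p-1)$-smooth (Lemma~\ref{lem:smoothness_of_p_norm}), shows $\bar F$ is a $\|\cdot\|_p$-contraction with factor $\gamma_c=(|\mathcal S||\mathcal A|)^{1/p}(1-(1-\gamma)\rho_b)$, and then chooses $p=p_{\min}(k+1)^{1/4}$ \emph{depending on $k$}. This choice is what produces both the exact leading constant and the $\tilde{\mathcal O}(1/k^{5/4})$ residual: it ensures $u_{c2}=1$ exactly, makes $1/(1-\gamma_c)=1/[(1-\gamma)\rho_b]+\mathcal O(k^{-1/4})$, and lets the smoothness constant $M=p-1=\mathcal O(k^{1/4})$ multiply the $\tilde{\mathcal O}(1/k^{3/2})$ terms of Theorem~\ref{prop:main} to give $\tilde{\mathcal O}(1/k^{5/4})$. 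Your explanation that the $k^{-5/4}$ rate comes from the $1/k^{2-\xi}$, $1/k^{1+\xi}$ terms and the $\sqrt{\tilde\epsilon\bar\epsilon}$ cross term is incorrect: at $\xi=1/2$ those all sit at $k^{-3/2}$ or faster.

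Your claim that the Moreau envelope yields ``$M$ arbitrarily close to $1$'' while still approximating $\ell_\infty$ is the problematic step: the envelope's smoothness constant is $L/\mu$, which blows up as the envelope converges to the non-smooth target, so you cannot fix both simultaneously. With a fixed envelope parameter you would get a valid bound, but with a different leading constant (the norm-equivalence factors $u_{c2}$, $\ell_{c2}$ would not be $1$) and a $\tilde{\mathcal O}(1/k^{3/2})$ residual rather than $1/k^{5/4}$. Also, the paper obtains $\bar\sigma^2=4R_{\max}^2/(1-\gamma)^2$ with no $\rho_b$ factor, contrary to your $\bar\sigma^2\asymp R_{\max}^2/[(1-\gamma)^2\rho_b]$; the $\rho_b^{-2}$ in the final bound comes entirely from $1/(1-\gamma_c)^2$.
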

The proof is given in Appendix \ref{proof:Q-learning}.\\

To the best of our knowledge, Theorem \ref{thm:Q-learning} establishes the first bound on the entire distribution of the error for averaged $Q$-learning.

\subsection{Off-policy TD-learning}
Theorems \ref{thm:TD-learning} and \ref{thm:Q-learning} are two instances of RL algorithms which can be modeled as SA with additive noise, since both of these settings have bounded noise. Next, we study off-policy TD-learning with linear function approximation, which has multiplicative noise. In this setting we assume we have access to a matrix $\Phi\in \mathbb{R}^{|\mathcal{S}|\times d}$, and we denote the $s$-th row of this matrix by $\phi(s)$. Given a policy $\pi$ and a behavior policy $\pi_b$, the goal is to find a function $v^\pi\in\mathbb{R}^{d}$ that estimates the value function as $V^{\pi}(s)\approx v^\pi\phi(s)$ through the solution of the fixed point equation
\[ \Phi v^\pi = \Pi_\Phi^{\pi_b}\left(\left(\mathcal{T}^\pi\right)^n \Phi v^\pi\right), \]
where
\[ \Pi_\Phi^{\pi_b}=\Phi \left(\Phi^\top\mathcal{K}^{\pi_b}\Phi\right)^{-1}\Phi^\top \mathcal{K}^{\pi_b} \]
is a linear function that projects into the subspace spanned by the matrix $\Phi$, and $\mathcal{K}^{\pi_b}$ is a diagonal matrix, with diagonal entries equal to the stationary distribution of the behavior policy $\pi_b$. Furthermore, $\mathcal{T}^\pi$ is the Bellman operator. We employ TD$(n)$ with linear function approximation and i.i.d. samples as follows. Let $\mu^{\pi_b}$ be the invariant distribution over the state space induced by the behavior policy $\pi_b$, and let $\{S_k^0\}_{k\geq 0}$ be a sequence of i.i.d. random variables sampled from $\mu^{\pi_b}$. For each $k\geq 0$, let $\{(S_k^i,A_k^i)\}_{0\leq i\leq n}$ be an independent trajectory following the behavior policy $\pi_b$, with actions $A_k^{i} \sim\pi(\cdot\mid S_k^i)$ and transitions $S_{k}^{i+1} \sim P(\cdot\mid S_k^i,A_k^i)$. Then, the iterates are given by
\begin{align*}
    v_{k+1}=v_k+\alpha_k \phi\left(S_k^0\right) \sum_{l=0}^{n-1} \gamma^{l}\left[ \prod_{j=0}^{l} \frac{\pi\left(\left.A_k^j \,\right|\, S_k^j\right)}{\pi_b\left(\left.A_k^j\,\right|\, S_k^j\right)} \right] \left[\mathcal{R}\left(S_k^l,A_k^l\right) + \gamma  \phi\left(S_k^{l+1}\right)^\top v_k-\phi\left(S_k^l\right)^\top v_k\right].
\end{align*}
We assume that $\alpha_k=\alpha/(k+h)^\xi$ for some $\alpha>0$, $h>1$ and $\xi\in(0,1)$, and we denote
\[ \bar{v}_k := \frac{1}{k+1}\sum_{i=0}^k v_i. \]
We have the following concentration result.

\begin{theorem}\label{thm:TDLFA-off}
    For large enough $n$, and $\xi=0.5$, there exists a constant $c>0$ such that, for any $\delta\in(0,1)$, with probability at least $1-\delta$, we have
    \begin{align*}
        \|v_k-v^\pi\|^2_2\leq c\frac{\log(1/\delta)+1}{k+1} + \mathcal{O}\left(\frac{1}{k^{5/4}}\right)\mathcal{O}\left(\frac{1}{\sqrt{\delta}}\right).
    \end{align*}
\end{theorem}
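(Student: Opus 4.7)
The plan is to view off-policy TD$(n)$ with linear function approximation as a stochastic approximation recursion with a \emph{linear} mean operator and \emph{multiplicative} noise, and then apply Theorem~\ref{thm:multiplicative_SA}. Writing the update in the canonical form $v_{k+1} = (1-\alpha_k)v_k + \alpha_k F(v_k,w_{k+1})$ with $w_{k+1}=(S_k^0,\{S_k^i,A_k^i\}_{i=0}^n)$, I observe that every term in the update is linear in $v$ through inner products of the form $\phi(S_k^l)^\top v$, so $\bar F(v):=\mathbb{E}[F(v,w_{k+1})]$ is affine. In particular, Assumption~\ref{ass:operators} holds globally with $N=0$ and $R=\infty$, since the pseudo-smoothness residual in Assumption~\ref{ass:op_iii} vanishes identically for linear operators.

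The main nontrivial step is to verify Assumption~\ref{ass:contraction} (which also yields parts (i) and (iii) of Assumption~\ref{as:multi}), and this is where the hypothesis ``$n$ large enough'' is used. The fixed-point equation $\Phi v^\pi=\Pi_\Phi^{\pi_b}(\mathcal{T}^\pi)^n\Phi v^\pi$ singles out $v^\pi$, and the drift $\bar F(v)-v$ can be written in terms of the projected $n$-step Bellman residual through $\Phi^\top\mathcal{K}^{\pi_b}$. Although the one-step projected operator $\Pi_\Phi^{\pi_b}\mathcal{T}^\pi$ need not contract under off-policy sampling, $(\mathcal{T}^\pi)^n$ is a $\gamma^n$-contraction in $\ell_\infty$, so for $n$ large relative to the feature geometry of $\Pi_\Phi^{\pi_b}$ the composition $\Pi_\Phi^{\pi_b}(\mathcal{T}^\pi)^n$ becomes a strict contraction in the weighted $2$-norm $\|x\|_c^2=x^\top\Phi^\top\mathcal{K}^{\pi_b}\Phi x$. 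A standard rearrangement then produces the strong monotonicity in Assumption~\ref{as:multi}(iii) and the invertibility of $J_{\bar F}(v^\pi)-I$ in Assumption~\ref{ass:op_ii}.

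The remaining assumptions are routine. The almost-sure Lipschitz constant in Assumption~\ref{as:multi}(ii) follows from the uniform boundedness of the features, rewards, and importance-sampling ratios $\pi(a\mid s)/\pi_b(a\mid s)$ (finite state/action spaces with $\pi_b$ supporting $\pi$). Bounded noise at $v^\pi$ then yields the fourth-moment and covariance bounds in Assumption~\ref{as:multi}(iv)--(v) and the sub-Gaussian bound in Assumption~\ref{ass:sub-Gaussian}. Assumption~\ref{ass:norm_smoothness} holds with $M=1$ because $\|\cdot\|_c^2$ is a positive-definite quadratic form. Plugging into Theorem~\ref{thm:multiplicative_SA} with $N=0$ annihilates the $\mathcal{O}(1/k^{3\xi-1})\mathcal{O}(1/\delta)$ term, and the choice $\xi=1/2$ makes both remaining higher-order exponents $3/2-\xi/2$ and $1/2+3\xi/2$ equal to $5/4$, producing the stated $\mathcal{O}(1/k^{5/4})\mathcal{O}(1/\sqrt{\delta})$ term. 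Absorbing the feature dimension $d$, the contraction constant $\gamma_c$, and the feature-geometry constants into the single constant $c$ gives the leading term $c(\log(1/\delta)+1)/(k+1)$ and finishes the proof (for the averaged iterate $\bar v_k$, as in the analogous theorems for TD and $Q$-learning).

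The hard part is the contraction argument for the projected off-policy $n$-step Bellman operator in the weighted $2$-norm with quantitative control of the contraction modulus $\gamma_c$ in terms of $n$, $\gamma$, and the behavior/target policy mismatch; everything else is either a direct assumption check or a mechanical substitution into Theorem~\ref{thm:multiplicative_SA}.
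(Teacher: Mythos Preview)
Your approach is essentially the paper's: cast the update as linear SA, verify Assumptions~\ref{ass:norm_smoothness}, \ref{ass:operators}, \ref{ass:sub-Gaussian}, \ref{ass:contraction}, and \ref{as:multi} (mostly via boundedness of features, rewards, and importance ratios, plus linearity giving $N=0$), and then invoke Theorem~\ref{thm:multiplicative_SA} with $\xi=1/2$ so that both higher-order exponents collapse to $5/4$. You also correctly note that the statement should be for the averaged iterate $\bar v_k$.

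The one point where your route diverges from the paper is the contraction argument. You argue directly that $\Pi_\Phi^{\pi_b}(\mathcal{T}^\pi)^n$ contracts in the $\Phi^\top\mathcal{K}^{\pi_b}\Phi$-weighted norm for large $n$ via an $\ell_\infty$-to-weighted-$\ell_2$ norm-equivalence bound. But the projected Bellman operator is \emph{not} the same map as the mean TD operator $\bar F(v)=v+\Phi^\top\mathcal{K}^{\pi_b}\big((\gamma P^\pi)^n-I\big)\Phi v+\text{const}$; they share the fixed point $v^\pi$ but differ as operators, so contraction of the former does not directly yield Assumption~\ref{ass:contraction} for $\bar F$. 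The paper handles this by showing (citing \cite{chen2021finite}) that the drift matrix $\Phi^\top\mathcal{K}^{\pi_b}\big((\gamma P^\pi)^n-I\big)\Phi$ is Hurwitz for $n$ large, then introducing a step-size rescaling factor $\zeta$ so that $\bar F(v)=v+\tfrac{1}{\zeta}[\cdots]$ is a contraction in the Lyapunov norm associated with that Hurwitz matrix (which is a weighted $2$-norm, but generally \emph{not} $\Phi^\top\mathcal{K}^{\pi_b}\Phi$); Assumption~\ref{as:multi}(iii) is obtained separately by citing \cite[Lemma~9]{tsitsiklis1997analysis}. Your norm-equivalence observation is morally the reason the Hurwitz property holds, but the rescaling-plus-Lyapunov-norm step is what actually delivers contraction of $\bar F$ itself, and that step is missing from your sketch.
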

The proof is given in Appendix \ref{proof:TDLFA-off}.\\

To the best of our knowledge, Theorem \ref{thm:TDLFA-off} establishes the first bound on the entire distribution of the error for the averaged off-policy TD$(n)$ with linear function approximation.

\begin{remark}
We know that TD-learning with off-policy learning and/or linear function approximation has a multiplicative noise, i.e., an error that can grow linearly with $v_k$. As shown in Theorem \ref{thm:lower_bound}, in the multiplicative noise setting, getting a subweibull concentration bound is impossible. However, there might still be room to improve the polynomial dependence on $\delta$ in Theorem \ref{thm:TDLFA-off} through a tighter bound on the non-averaged iterates.    
\end{remark}

\section{Conclusion}
In this paper, we introduced a general-purpose framework for deriving finite-time concentration bounds for Polyak-Ruppert averaged SA iterates. Our framework leverages existing high-probability bounds on the non-averaged iterates to produce sharp concentration bounds for averaged sequences, providing a systematic approach applicable to a wide class of SA algorithms. Through explicit construction, we demonstrated that our derived bounds are tight, ensuring their practical relevance and theoretical optimality. Our methodology facilitated precise guidelines for selecting step-size schedules, thus optimizing convergence guarantees across various settings. As concrete applications, we established new concentration bounds for contractive SA algorithms, improving and extending existing results. Additionally, we derived novel high-probability bounds for averaged temporal difference (TD) learning, $Q$-learning, and off-policy TD-learning, filling critical gaps in the current theoretical landscape. Given the generality and modularity of our framework, it stands ready to incorporate future advancements in high-probability analyses for non-averaged SA algorithms, further enhancing its utility and scope.

\bibliographystyle{abbrv}
\bibliography{refs}
\pagebreak
\appendix
\appendixpage

\section{Analysis of the case study in Section \ref{sec:application}}\label{sec:app_case_study}
The generalized Moreau envelope norm $\|\cdot\|_m$ was introduced in \cite{chen2023concentration}, and it is defined as 
\begin{align*}
    \|x\|_m^2 = \inf_{u\in\mathbbm{R}^d}\left\{\|u\|_c^2+\frac{1}{\mu}\|x-u\|_s^2\right\}.
\end{align*}
By choosing $\|\cdot\|_s = \|\cdot\|_c$, as shown in \cite{chen2023concentration}, $\frac{1}{2}\|\cdot\|_m^2$ is $M$-smooth.
Furthermore, we have $\|\cdot\|_m = \|\cdot\|_c/\sqrt{1+\mu}$. Hence, $u_{cm} = \ell_{cm} = \sqrt{1+\mu}$. Also, we choose $\alpha=a/(1-\gamma_c)$ with $a>1$. Note that $u_{mc*} = u_{mc}u_{cc*}=u_{cc*}/\sqrt{1+\mu}$. Moreover, the contraction factor under the norm $\|\cdot\|_m$ is $\gamma_c$.

\section{Proofs of main results}\label{app:1}

\subsection{Proof of Theorem \ref{thm:main}} \label{proof:main}

For ease of notation, we drop $\xi$ from $f_\xi(\delta,k)$. 

\medskip

We define the event
\[ E_k(\delta):=\{\|x_i-x^*\|_c^2\leq \alpha_i f(\delta,k) ~~\text{for all}~~ 0\leq i\leq k\}, \]
which we assumed to have $\Prob(E_k)\geq 1-\delta$. For ease of notation, we use $E_k$ instead of $E_k(\delta)$.

In order to obtain a tight high probability bound, we need to exploit Assumption \ref{ass:op_iii}, which only holds whenever $\|x_k - x^*\|_c\leq R$. We can ensure that with high probability, the SA is within the local smooth range of the operator after a certain time. Define $\tilde{k}_0(\delta,k)\geq 0$ to be the smallest number such that $\alpha_{\tilde{k}_0 (\delta,k)}f(\delta,k)\leq R^2$. It is easy to see that
\[ \tilde{k}_0(\delta,k)=\max\left\{0,\left\lceil\left(\frac{\alpha f(\delta,k)}{R^2}\right)^{1/\xi}-h\right\rceil\right\}. \]
We define $k_0(\delta,k):=\min\left\{\tilde{k}_0(\delta,k),k+1\right\}$. For ease of notation, unless otherwise stated, we denote $k_0(\delta,k)$ by $k_0$.

\smallskip

For any $k\geq 0$, we have
\begin{align*}
     y_k &=\frac{1}{k+1} \sum\limits_{i=0}^{k} x_i \\
     &= \frac{1}{k + 1} \sum\limits_{i=0}^{k_0-1} x_i + \frac{1}{k+1} \sum\limits_{i=k_0}^{k} x_i.
\end{align*}
We define
\[ \bar{y}_k := \frac{1}{k+1}\sum_{i=0}^{k_0-1}(x_i-x^*) \qquad \text{and} \qquad \tilde{y}_k= \frac{1}{k+1}\sum_{i=k_0}^{k}(x_i-x^*). \]
It is clear that $y_k-x^*=\bar{y}_k+\tilde{y}_k$. Applying norm on both sides, and using triangle inequality, we get
\begin{align}\label{eq:y_k_two_term}
    \|y_k-x^*\|_c\leq \|\bar{y}_k\|_c +  \|\tilde{y}_k\|_c.
\end{align}
We study $\|\bar{y}_k\|_c$ as follows. For every sample path in the event $E_{k_0}$, we have
\begin{align*}
    \|\bar{y}_k\|_c &=\left\|\frac{1}{k+1} \sum\limits_{i=0}^{k_0-1} x_i-x^*\right\|_c \\
    & \leq \frac{1}{k+1}\sum_{i=0}^{k_0-1} \|x_i-x^*\|_c  \tag{triangle inequality}\\
    &\leq \frac{\sqrt{f(\delta,k_0)}}{k+1}\sum_{i=0}^{k_0-1}\sqrt{\alpha_i}\tag{$\|x_i-x^*\|_c \leq \sqrt{\alpha_i f_\xi(\delta',k)}$ in $E_{k_0}$}\\
    &\leq \frac{\sqrt{\alpha f(\delta,k_0)}}{k+1}\int\limits_{-1}^{k_0-1}\frac{1}{(x+h)^{\xi/2}}dx\\
    &= \frac{\sqrt{\alpha f(\delta,k_0)}[(k_0-1+h)^{1-\xi/2}-(h-1)^{1-\xi/2}]}{(1-\xi/2)(k+1)}.
\end{align*}
Hence, for any $k\geq0$, with probability at least $1-\delta$ we have
\begin{align}\label{eq:loose_bound}
    \|\bar{y}_k\|_c^2&\leq \frac{\alpha f(\delta,k_0)\left[(k_0-1+h)^{1-\xi/2}-(h-1)^{1-\xi/2}\right]^2}{(1-\xi/2)^2(k+1)^{2}} =:\bar{\epsilon}(k,\delta).
\end{align}

Next, we study $\|\tilde{y}_k\|_c$. We have
\begin{align*}
    (J_{\bar{F}}(x^*)-I)(x_k-x^*)  &= (F(x_k,w_{k+1})-x_k) - (F(x^*,w_{k+1})-x^*)\\
    &\quad~+ [(J_{F_w}(x^*,w_{k+1})-I)(x_k-x^*) - (F(x_k,w_{k+1})-x_k) + (F(x^*,w_{k+1})-x^*)]\\
    &\quad~+[J_{\bar{F}}(x^*)-J_{F_w}(x^*,w_{k+1})](x_k-x^*).
\end{align*}
Hence, 
\begin{align}
    (k+1)\left(J_{\bar{F}}(x^*)-I \right)(\ty_k) 
    &=   \sum\limits_{i=k_0}^{k} \left[(F(x_i,w_{i+1})-x_i)  \right] \nonumber\\
    &\quad~-  \sum\limits_{i=k_0}^{k}   (F(x^*,w_{i+1})-x^*) \nonumber\\
    &\quad~+ \sum\limits_{i=k_0}^{k} [(J_{F_w}(x^*,w_{i+1})-I)(x_i-x^*) - (F(x_i,w_{i+1})-x_i) + (F(x^*,w_{i+1})-x^*)]\nonumber\\
    &\quad~+\sum\limits_{i=k_0}^{k} [J_{\bar{F}}(x^*)-J_{F_w}(x^*,w_{i+1})](x_i-x^*) \nonumber\\
    &=   \sum\limits_{i=k_0}^{k} \left[\frac{x_{i+1}-x_i}{\alpha_i}  \right] \nonumber\\
    &\quad~-  \sum\limits_{i=k_0}^{k}   (F(x^*,w_{i+1})-x^*) \nonumber\\
    &\quad~+\sum\limits_{i=k_0}^{k} [(J_{F_w}(x^*,w_{i+1})-I)(x_i-x^*) - (F(x_i,w_{i+1})-x_i) + (F(x^*,w_{i+1})-x^*)]\nonumber\\
    &\quad~+\sum\limits_{i=k_0}^{k} [J_{\bar{F}}(x^*)-J_{F_w}(x^*,w_{i+1})](x_i-x^*). \label{eq:expansion}
\end{align}  

By Assumption \ref{ass:norm_smoothness}, we have
\begin{align}
    \|a+b\|_c^2 &\leq \|a\|_c^2+\langle \nabla \|a\|_c^2,b\rangle+ \frac{M}{2}\|b\|_c^2\nonumber\\
    &\leq \|a\|_c^2+\left\| \nabla \|a\|_c^2\right\|_{c*}\|b\|_c+ \frac{M}{2}\|b\|_c^2\tag{H\"older's inequality}\\
    &\leq \|a\|_c^2+\| a\|_{c}\|b\|_c+ \frac{M}{2}\|b\|_c^2\tag{\cite[Lemma 2.6]{shalev2012online}}\\
    &\leq \|a\|_c^2+\frac{1}{2}\| a\|_{c}^2 + \frac{1}{2}\|b\|_c^2+ \frac{M}{2}\|b\|_c^2\tag{Young's inequality}\\
    &= \frac{3}{2}\|a\|_c^2 + \frac{1+M}{2}\|b\|_c^2.\label{eq:magic}
\end{align}

Hence, we have
\begin{align*}
    (k+1)^2 & \nu^2 \left\|\ty_{k}-x^* \right\|^2_c \mathbbm{1}_{E_{k+1}}\\
    &\leq (k+1)^2 \left\|\left(J_{\bar{F}}(x^*)-I \right)(\ty_{k}-x^*) \right\|^2_c\mathbbm{1}_{E_{k+1}}\tag{Assumption \ref{ass:operators}(ii)}\\
    &\leq  \frac{3}{2}\left\|\sum\limits_{i=k_0}^{k}   (F(x^*,w_{i+1})-x^*) \right\|^2_c\mathbbm{1}_{E_{k+1}} \\
    &\quad~+\frac{1+M}{2} \left\| \sum\limits_{i=k_0}^{k} \left[\frac{x_{i+1}-x_i}{\alpha_i}  \right]\right.\\
    &\quad\quad\quad\quad\quad\quad+ [(J_{F_w}(x^*,w_{i+1})-I)(x_i-x^*) - (F(x_i,w_{i+1})-x_i) + (F(x^*,w_{i+1})-x^*)]\\
    &\quad\quad\quad\quad\quad\quad+ [J_{\bar{F}}(x^*)-J_{F_w}(x^*,w_{i+1})](x_i-x^*) \Bigg\|^2_c\mathbbm{1}_{E_{k+1}} \tag{Eq. \eqref{eq:expansion} and \eqref{eq:magic}}\\
    &\leq \frac{3}{2}\left\|\left(\sum\limits_{i=k_0}^{k}   (F(x^*,w_{i+1})- x^*) \right)\right\|^2_c\mathbbm{1}_{E_{k+1}} \\
    &\quad~+ \frac{3(1+M)}{2} \left\| \sum\limits_{i=k_0}^{k} \left[\frac{x_{i+1}-x_i}{\alpha_i}  \right]\right\|^2_c\mathbbm{1}_{E_{k+1}}\\
    &\quad~+ \frac{3(1+M)}{2} \left\|\sum\limits_{i=k_0}^{k}[(J_{F_w}(x^*,w_{i+1})-I)(x_i-x^*) \right.\\
    &\qquad\qquad\qquad\qquad\qquad\qquad\qquad\qquad - (F(x_i,w_{i+1})-x_i) + (F(x^*,w_{i+1})-x^*)]\Bigg\|^2_c\mathbbm{1}_{E_{k+1}}\\
    &\quad~+ \frac{3(1+M)}{2}  \left\|\sum\limits_{i=k_0}^{k} [J_{\bar{F}}(x^*)-J_{F_w}(x^*,w_{i+1})](x_i-x^*) \right\|^2_c\mathbbm{1}_{E_{k+1}} \tag{triangle inequality and $(a+b+c)^2\leq 3a^2+3b^2+3c^2$} \\
    &\leq  \frac{3}{2}\left\| \left(\sum\limits_{i=k_0}^{k}   (F(x^*,w_{i+1})-x^*) \right)\right\|^2_c\mathbbm{1}_{E_{k+1}} \\
    &\quad~+\frac{3(1+M)}{2} \left\| \sum\limits_{i=k_0}^{k} \left[\frac{x_{i+1}-x_i}{\alpha_i}  \right]\right\|^2_c\mathbbm{1}_{E_{k+1}}\\
    &\quad~+ \frac{3(1+M)}{2} \left(\sum\limits_{i=k_0}^{k}\left\|[(J_{F_w}(x^*,w_{i+1})-I)(x_i-x^*)\right.\right. \\
    &\quad\quad \quad\quad\quad\quad\left.\left.- (F(x_i,w_{i+1})-x_i) + (F(x^*,w_{i+1})-x^*)]\right\|_c\right)^2\mathbbm{1}_{E_{k+1}}\tag{triangle inequality}\\
    &\quad~+ \frac{3(1+M)}{2} \left\|\sum\limits_{i=k_0}^{k} [J_{\bar{F}}(x^*)-J_{F_w}(x^*,w_{i+1})](x_i-x^*) \right\|^2_c \mathbbm{1}_{E_{k+1}}\\
    &\leq  \frac{3}{2}\left\| \left(\sum\limits_{i=k_0}^{k}   (F(x^*,w_{i+1})-x^*) \right)\right\|^2_c \mathbbm{1}_{E_{k+1}}\\
    &\quad~+\frac{3(1+M)}{2} \left\| \sum\limits_{i=k_0}^{k} \left[\frac{x_{i+1}-x_i}{\alpha_i}  \right]\right\|^2_c\mathbbm{1}_{E_{k+1}}\\
    &\quad~+ \frac{3(1+M)}{2}N^2\left( \sum\limits_{i=k_0}^{k}\left\|x_i-x^*\right\|_c^2\right)^2\mathbbm{1}_{E_{k+1}}\tag{Assumption \ref{ass:operators}(iii)}\\
    &\quad~+ \frac{3(1+M)}{2} \left\|\sum\limits_{i=k_0}^{k} [J_{\bar{F}}(x^*)-J_{F_w}(x^*,w_{i+1})](x_i-x^*) \right\|^2_c\mathbbm{1}_{E_{k+1}}. 
\end{align*}

Let us define 
\begin{align*}
    \lambda_k :=\frac{k+1}{24} \min\left\{ \frac{1}{u_{c2}^4\bar{\sigma}^2}, \frac{(1-\xi)(k+1)^\xi}{u_{c2}^4\alpha (1+M)\hat{\sigma}^2f(\delta,k)h^{1-\xi}} \right\}.
\end{align*}

We have
\begin{align}
&\E[\exp(\lambda_k \nu^2\left\|\ty_k-x^*\right\|^2_c . \mathbbm{1}_{E_{k+1}})]\nonumber\\ 
&\leq\E\Bigg[\exp\left(\frac{\lambda_k}{(k+1)^2} \frac{3}{2}\left\|\sum\limits_{i=k_0}^{k}   (F(x^*,w_{i+1})-x^*) \right\|^2_c.\mathbbm{1}_{E_{k+1}} \right)\nonumber\\ 
&\quad~.\exp\left( \frac{\lambda_k}{(k+1)^2} \frac{3(1+M)}{2} \left\| \sum\limits_{i=k_0}^{k} \left[\frac{x_{i+1}- x_i}{\alpha_i}  \right]\right\|^2_c.\mathbbm{1}_{E_{k+1}}\right)\nonumber\\
&\quad~.\exp\left( \frac{\lambda_k}{(k+1)^2} \frac{3(1+M)}{2}N^2\left(\sum\limits_{i=k_0}^{k}\left\|x_i-x^*\right\|_c^2\right)^2 .\mathbbm{1}_{E_{k+1}}\right)\nonumber\\
&\quad~.\exp\left( \frac{\lambda_k}{(k+1)^2}\frac{3(1+M)}{2} \left\|\sum\limits_{i=k_0}^{k} [J_{\bar{F}}(x^*)-J_{F_w}(x^*,w_{i+1})](x_i-x^*) \right\|^2_c.\mathbbm{1}_{E_{k+1}}\right)\Bigg] \nonumber\\
&\leq\underbrace{\left\{\E\Bigg[\exp\left(\frac{6\lambda_k}{(k+1)^2}  \left\| \sum\limits_{i=k_0}^{k}   (F(x^*,w_{i+1})-x^*) \right\|^2_c.\mathbbm{1}_{E_{k+1}} \right)\Bigg]\right\}^{1/4}}_{T_1}\nonumber\\ 
&\quad~.\underbrace{\left\{\E\Bigg[\exp\left( \frac{6\lambda_k}{(k+1)^2}(1+M) \left\| \sum\limits_{i=k_0}^{k} \left[\frac{x_{i+1}- x_i}{\alpha_i}  \right]\right\|^2_c.\mathbbm{1}_{E_{k+1}}\right)\Bigg]\right\}^{1/4}}_{T_2} \nonumber\\
&\quad~.\underbrace{\left\{\E\Bigg[\exp\left( \frac{6\lambda_k}{(k+1)^2}(1+M)N^2\left(\sum\limits_{i=k_0}^{k}\left\|x_i-x^*\right\|_c^2\right)^2 .\mathbbm{1}_{E_{k+1}}\right)\Bigg]\right\}^{1/4}}_{T_3} \nonumber\\
&\quad~.\underbrace{\left\{\E\Bigg[\exp\left( \frac{6\lambda_k}{(k+1)^2}(1+M) \left\|\sum\limits_{i=k_0}^{k} [J_{\bar{F}}(x^*)-J_{F_w}(x^*,w_{i+1})](x_i-x^*) \right\|^2_c.\mathbbm{1}_{E_{k+1}}\right)\Bigg]\right\}^{1/4}}_{T_4} ,\label{eq:T_1234}
\end{align}
where in the last inequality we use Cauchy–Schwarz three times. We study each of the terms above separately. \\

\textbf{The term $T_1$:} By Assumption \ref{ass:sub-Gaussian} and Lemma \ref{lem:martingale_sub}, it follows that $\sum_{i=k_0}^{k} ( F(x^*,w_{i+1}) - x^* )$ is $(k-k_0+1)\bar{\sigma}^2$-subgaussian. Furthermore, by Lemma \ref{lem:subgaussian}, we have
\begin{align*}
    T_1&=\left\{\E\left[ \exp\left(  \frac{6 \lambda_k}{ (k+1)^2} \left\| \sum\limits_{i=k_0}^{k} ( F(x^*,w_{i+1}) - x^* ) \right\|^2_c.\mathbbm{1}_{E_{k+1}}\right)\right]\right\}^{1/4} \\
    &\leq \left\{\E\left[ \exp\left(  \frac{6 \lambda_k}{ (k+1)^2}u_{c2}^2 \left\| \sum\limits_{i=k_0}^{k} ( F(x^*,w_{i+1}) - x^* ) \right\|^2_2.\mathbbm{1}_{E_{k+1}}\right)\right]\right\}^{1/4} \\
    &\leq \left\{\E\left[ \exp\left(  \frac{6 \lambda_k}{ (k+1)^2} u_{c2}^2\left\| \sum\limits_{i=k_0}^{k} ( F(x^*,w_{i+1}) - x^* ) \right\|^2_2\right)\right]\right\}^{1/4}\\ 
    &\leq \left\{\frac{1}{\left(1-\frac{12\lambda_ku_{c2}^4 \bar{\sigma}^2}{ k+1}\right)^{d/2}} \right\}^{1/4} \\
    &= \left(\frac{1}{1-\frac{12\lambda_k u_{c2}^4 \bar{\sigma}^2}{k+1}}\right)^{d/8} \\
    &\leq \exp\left(\frac{3d\lambda_k u_{c2}^4 \bar{\sigma}^2}{k+1}\right), \tag{we use $\lambda_k\leq \frac{k+1}{24 u_{c2}^4\bar{\sigma}^2}$}
\end{align*}
where in last inequality we use that $1/(1-x) \leq \exp(2x)$ for all $x\leq 1/2$.\\

\textbf{The term $T_2$:}

\begin{align*}
    T_2&=\left\{\E\left[\exp\left(\frac{6\lambda_k}{(k+1)^2}(1+M) \left\| \sum\limits_{i=k_0}^{k} \frac{x_i-x^*+x^*-x_{i+1}}{\alpha_i} \right\|^2_c.\mathbbm{1}_{E_{k+1}} \right)\right]\right\}^{1/4} \\
    &\leq \left\{\E\left[\exp\left(\frac{6\lambda_k}{(k+1)^2}(1+M) \left\| \frac{x_{k_0}-x^*}{\alpha_{k_0}} - \frac{x_{k+1}-x^*}{\alpha_k} \right.\right.\right.\right.\\
    &\quad\quad \quad \quad\quad \quad\quad \quad\quad \quad\quad \quad\quad \quad\left.\left. \left.\left.+\sum\limits_{i=k_0+1}^{k} (x_i-x^*)\left(\frac{1}{\alpha_{i+1}} -\frac{1}{\alpha_i} \right) \right\|^2_c.\mathbbm{1}_{E_{k+1}} \right)\right]\right\}^{1/4} \\
    &\overset{(a)}{\leq} \left\{\E\left[\exp\left(\frac{18\lambda_k}{(k+1)^2}(1+M) \left( \frac{\|x_{k_0}-x^*\|^2_c}{\alpha_{k_0}^2} + \frac{\|x_{k+1}-x^*\|^2_c}{\alpha_k^2}\right.\right.\right.\right.\\
    &\quad\quad \quad\quad \quad\quad \quad\quad \quad\quad \quad\quad \quad\quad\left.\left. \left.\left.+\left\| \sum\limits_{i=k_0+1}^{k} (x_i-x^*) \left(\frac{1}{\alpha_{i+1}} -\frac{1}{\alpha_i} \right)\right\|^2_c \right).\mathbbm{1}_{E_{k+1}} \right)\right]\right\}^{1/4}\\
    &\overset{(b)}{\leq} \left\{\E\left[\exp\left(\frac{18\lambda_k}{(k+1)^2}(1+M) \left( \frac{\|x_{k_0}-x^*\|^2_c}{\alpha_{k_0}^2} + \frac{\|x_{k+1}-x^*\|^2_c}{\alpha_k^2}\right.\right.\right.\right.\\
    &\quad\quad \quad\quad \quad\quad \quad\quad \quad\quad \quad\quad \quad\left.\left. \left.\left.+\left[\sum\limits_{i=k_0+1}^{k} \|x_i- x^*\|_c\left(\frac{1}{\alpha_{i+1}} -\frac{1}{\alpha_i} \right)\right]^2 \right).\mathbbm{1}_{E_{k+1}} \right)\right]\right\}^{1/4}\\
    &\overset{(c)}{\leq} \exp\left(\frac{9\lambda_k}{2(k+1)^2}(1+M) \left( \frac{f(\delta,k)}{\alpha_{k_0}} + \frac{2f(\delta,k)}{\alpha_k} +f(\delta,k)\left[\sum\limits_{i=k_0+1}^{k} \sqrt{\alpha_i }\left(\frac{1}{\alpha_{i+1}} -\frac{1}{\alpha_i} \right)\right]^2 \right) \right)\\
    &\overset{(d)}{\leq} \exp\left(\frac{9\lambda_kf(\delta,k)}{2(k+1)^2}(1+M) \left( \frac{1}{\alpha_{k_0}} + \frac{2}{\alpha_k} +\left[\sum\limits_{i=k_0+1}^{k} \left(\frac{\xi }{\sqrt{\alpha_i } (i+h)} \right)\right]^2 \right) \right)\\
    &= \exp\left(\frac{9\lambda_kf(\delta,k)}{2(k+1)^2}(1+M) \left( \frac{1}{\alpha_{k_0}} + \frac{2}{\alpha_k} +\frac{\xi^2}{\alpha}\left[\sum\limits_{i=k_0+1}^{k} \frac{1 }{ (i+h)^{1-\xi/2}} \right]^2 \right) \right)\\
    &\overset{(e)}{\leq} \exp\left(\frac{9\lambda_kf(\delta,k)}{2(k+1)^2}(1+M) \left( \frac{1}{\alpha_{k_0}} + \frac{2}{\alpha_k} +\frac{\xi^2}{\alpha}\left[\frac{1}{(1-\xi/2)} (k+h)^{\xi/2} \right]^2 \right) \right) \\
    &= \exp\left(\frac{9\lambda_k f(\delta,k)}{2(k+1)^2}(1+M) \left( \frac{1}{\alpha_{k_0}} + \frac{2}{\alpha_k} +\frac{\xi^2}{(1-\xi/2)^2\alpha_k} \right) \right)\\
    &\leq \exp\left(\frac{9\lambda_k f(\delta,k)}{2(k+1)^2\alpha_k}(1+M) \left( 3 +\frac{\xi^2}{(1-\xi/2)^2} \right) \right)\tag{$k\geq k_0$}\\
    &\leq \exp\left(\frac{9h^\xi(1+M)\lambda_k f(\delta,k)}{2\alpha (k+1)^{2-\xi}} \left( 3 +\frac{\xi^2}{(1-\xi/2)^2} \right) \right),
\end{align*}
where $(a)$ and $(b)$ are by triangle inequality and $(a+b+c)^2\leq 3a^2+3b^2+3c^2$, $(c)$ is by the assumption of the theorem, $(d)$ is by $(x+1)^\xi-x^\xi\leq \xi/x^{1-\xi}$ for $x>0$, and $(e)$ is by integral upper bound of summation.

\textbf{The term $T_3$:}
\begin{align*}
    T_3&=\left\{\E\Bigg[\exp\left( \frac{6\lambda_k}{(k+1)^2}(1+M)N^2 \left(\sum\limits_{i=k_0}^{k}\left\|x_i-x^*\right\|_c^2\right)^2 .\mathbbm{1}_{E_{k+1}}\right)\Bigg]\right\}^{1/4} \\
    &\leq \left\{\E\Bigg[\exp\left( \frac{6\lambda_k}{(k+1)^2}(1+M)N^2 \left( \sum\limits_{i=k_0}^{k}\alpha_i f(\delta,k)\right)^2 \right)\Bigg]\right\}^{1/4}\tag{$\|x_i-x^*\|^2_c \leq \alpha_i f_\xi(\delta',k)$ in $E_{k+1}$}\\
    &= \exp\left( \frac{3\lambda_k N^2(f(\delta,k))^2}{2(k+1)^2}(1+M)\left(\sum\limits_{i=k_0}^{k}\alpha_i \right)^2 \right)\\
    &\leq \exp\left( \frac{3\alpha^2\lambda_k N^2 (f(\delta,k))^2}{2(k+1)^2}(1+M)\left(\int\limits_{k_0-1}^{k}\frac{1}{(z+h)^{\xi}} dz\right)^2  \right)\\
    &= \exp\left( \frac{3\alpha^2\lambda_k N^2(f(\delta,k))^2}{2(k+1)^2}(1+M)\left(\frac{(k+h)^{1-\xi}-(k_0+h-1)^{1-\xi}}{\xi-1} \right)^2 \right) \\
    &\leq \exp\left( \frac{3\alpha^2\lambda_k N^2(f(\delta,k))^2}{2(k+1)^2}(1+M)\frac{(k+h)^{2-2\xi}}{(\xi-1)^2}  \right)\tag{using $h>1$}\\
    &\leq \exp\left( \frac{9h^{2-2\xi}\alpha^2\lambda_k N^2(1+M)(f(\delta,k))^2}{2(k+1)^{2\xi}(1-\xi)^2}  \right),
\end{align*}
where in the last inequality we used the fact that $(k+h)^{2-2\xi}\leq (k+1)^{2-2\xi}h^{2-2\xi}$.\\

\textbf{The term $T_4$:}
It is easy to see that, almost surely, we have
\[ \mathbbm{1}_{E_{k+1}}\sum\limits_{i=k_0}^{k} [J_{\bar{F}}(x^*)-J_{F_w}(x^*,w_{i+1})](x_i-x^*) = \mathbbm{1}_{E_{k+1}}\sum\limits_{i=k_0}^{k} [J_{\bar{F}}(x^*)-J_{F_w}(x^*,w_{i+1})](x_i-x^*)\mathbbm{1}_{E_{i}}. \]
Hence, we have
\begin{align*}
    \left\| \mathbbm{1}_{E_{k+1}} \sum\limits_{i=k_0}^{k} [J_{\bar{F}}(x^*)-J_{F_w}(x^*,w_{i+1})](x_i-x^*) \right\|_c  &= \left\|\mathbbm{1}_{E_{k+1}}\sum\limits_{i=k_0}^{k} [J_{\bar{F}}(x^*)-J_{F_w}(x^*,w_{i+1})](x_i-x^*)\mathbbm{1}_{E_{i}}\right\|_c\\
    &= \mathbbm{1}_{E_{k+1}} \left\|\sum\limits_{i=k_0}^{k} [J_{\bar{F}}(x^*)-J_{F_w}(x^*,w_{i+1})](x_i-x^*)\mathbbm{1}_{E_{i}}\right\|_c\\
    &\leq \left\|\sum\limits_{i=k_0}^{k} [J_{\bar{F}}(x^*)-J_{F_w}(x^*,w_{i+1})](x_i-x^*)\mathbbm{1}_{E_{i}}\right\|_c.
\end{align*}
We now show that Assumption \ref{ass:sub-Gaussian} implies that
\[ \E[J_{\bar{F}}(x^*)-J_{F_w}(x^*,w_{k+1}))|\mathcal{F}_k]=0. \]
Indeed, by Taylor expansion of the inequality \eqref{eq:sub_gaussian_ass}, we get
\begin{align*}
    \E[1+ \lambda \langle( J_{\bar{F}}(x^*)-J_{F_{w}}(x^*, w_{k+1}))a, b\rangle + \mathcal{O}(\lambda^2)|\mathcal{F}_k]\leq 1+ \frac{\lambda^2\hat{\sigma}^2\|a\|^2_{c}\|b\|^2_{c}}{2} + \mathcal{O}(\lambda^4).
\end{align*}
If we had $\E[J_{\bar{F}}(x^*)-J_{F_w}(x^*,w_{k+1}))|\mathcal{F}_k]\neq 0$, then we would have $\mathcal{O}(\lambda)\leq \mathcal{O}(\lambda^2)$, which is a contradiction. It follows that the sequence of random variables given by $[J_{\bar{F}}(x^*)-J_{F_w}(x^*,w_{i+1})](x_i-x^*)\mathbbm{1}_{E_{i}}$ is a Martingale difference and, conditioned on $\mathcal{F}_i$, they are $\hat{\sigma}^2\|(x_i-x^*)\mathbbm{1}_{E_{i}}\|^2$-subgaussian.  Hence, by the definition of the event $E_i$, the random variable $[J_{\bar{F}}(x^*)-J_{F_w}(x^*,w_{i+1})](x_i-x^*)\mathbbm{1}_{E_{i}}$ is $\hat{\sigma}^2f(\delta,k)\alpha_i$-subgaussian. Thus, Lemma \ref{lem:martingale_sub} implies that $\mathbbm{1}_{E_{k+1}} \sum_{i=k_0}^{k} [J_{\bar{F}}(x^*)-J_{F_w}(x^*,w_{i+1})](x_i-x^*)$ is $\hat{\sigma}^2f(\delta,k)\sum_{i=k_0}^k\alpha_i$-subgaussian. Furthermore, by Lemma \ref{lem:subgaussian}, we have
\begin{align*}
    T_4&=\left\{\E\left[\exp\left( \frac{6\lambda_k}{(k+1)^2} (1+M) \left\|\sum\limits_{i=k_0}^{k} [J_{\bar{F}}(x^*)-J_{F_w}(x^*,w_{i+1})](x_i-x^*) \right\|^2_c.\mathbbm{1}_{E_{k+1}}\right)\right]\right\}^{1/4} \\
    &\leq \left\{\frac{1}{\left(1-\frac{12u_{c2}^4\lambda_k(1+M)}{(k+1)^2}\hat{\sigma}^2f(\delta,k)\sum\limits_{i=k_0}^k\alpha_i\right)^{d/2}}\right\}^{1/4}\\
    &=\frac{1}{\left(1-\frac{12u_{c2}^4\lambda_k(1+M)}{(k+1)^2}\hat{\sigma}^2f(\delta,k)\sum\limits_{i=k_0}^k\alpha_i\right)^{d/8}}\\
    &\leq \exp\left(\frac{3u_{c2}^4d\lambda_k(1+M)}{(k+1)^2}\hat{\sigma}^2f(\delta,k)\left(\sum\limits_{i=k_0}^k\alpha_i\right) \right)\\
    &\leq \exp\left(\frac{3u_{c2}^4\alpha d\lambda_k(1+M)}{(k+1)^2}\hat{\sigma}^2f(\delta,k)\frac{(k+h)^{1-\xi}}{1-\xi} \right) \tag{integral upper bound}\\
    &\leq \exp\left(\frac{3h^{1-\xi}u_{c2}^4\alpha d\lambda_k(1+M)\hat{\sigma}^2f(\delta,k)}{(k+1)^{1+\xi}(1-\xi)}\right),
\end{align*}
where in the second to last inequality we use the fact that $1/(1-x)\leq \exp(2x)$ for all $x\leq 1/2$, and it holds when
\[ \frac{12u_{c2}^4\alpha \lambda_k(1+M)}{(k+1)^2}\hat{\sigma}^2f(\delta,k)\frac{ (k+h)^{1-\xi}-(k_0+h-1)^{1-\xi}}{1-\xi} \leq \frac{1}{2}, \]
which is satisfied because we have
\[ 
\lambda_k \leq \frac{(1-\xi)(k+1)^2}{24 u_{c2}^4\alpha (1+M)\hat{\sigma}^2f(\delta,k)(k+h)^{1-\xi}}.
\]

Finally, putting everything together, we get
\begin{align*}
    \Prob(\|\ty_k\|^2_c \mathbbm{1}_{E_{k+1}} \geq \epsilon)&\leq e^{-\lambda_k\nu^2\epsilon} \E\left[ \exp(\lambda_k\nu^2 \|\ty_k\|^2_c \mathbbm{1}_{E_{k+1}}) \right]\\
    &\leq e^{-\lambda_k\nu^2\epsilon}\exp\left(\frac{3d\lambda_k u_{c2}^4\bar{\sigma}^2}{k+1}\right)  \\
    &\quad~. \exp\left( \frac{9h^\xi(1 +M)\lambda_k f(\delta,k)}{2\alpha (k+1)^{2-\xi}} \left( 3 +\frac{\xi^2}{(1-\xi/2)^2} \right) \right) \\
    &\quad~. \exp\left( \frac{3h^{2-2\xi}\alpha^2\lambda_k N^2(f(\delta,k))^2(1+M)}{2(k+1)^{2\xi}(1-\xi)^2}  \right)\\
    &\quad~.\exp \left(\frac{3h^{1-\xi}u_{c2}^4\alpha d\lambda_k(1+M)\hat{\sigma}^2f(\delta,k)}{(k+1)^{1+\xi}(1-\xi)} \right).
\end{align*}
Making the right hand side equal to $\delta$, and solving for $\epsilon$ we get that, with probability at least $1-\delta$,
\begin{align}
    \|\ty_k\|^2 \mathbbm{1}_{E_{k+1}} &\leq \frac{1}{\lambda_k\nu^2}\log\left(1/\delta\right) + \frac{3u_{c2}^4d \bar{\sigma}^2}{\nu^2(k+1)} \nonumber\\
    &\quad~+ \frac{9 h^\xi(1+M) f(\delta,k)}{2\alpha (k+1)^{2-\xi}} \left( 3 +\frac{\xi^2}{(1-\xi/2)^2} \right) \nonumber\\
    &\quad~+ \frac{3h^{2-2\xi}\alpha^2 N^2 (f(\delta,k))^2(1+M)}{2\nu^2(k+1)^{2\xi}(1-\xi)^2} \nonumber\\
    &\quad~+ \frac{3h^{1-\xi}u_{c2}^4\alpha d(1+M)\hat{\sigma}^2f(\delta,k)}{(k+1)^{1+\xi}(1-\xi)} =: \tilde{\epsilon}(k,\delta).\label{eq:epsilon_tilde_bound}
\end{align}

Note that, for any two events $A$ and $B$, we have
\begin{align*}
    \mathbb{P}(A\cap B)=1-\mathbb{P}(A^c\cup B^c)\geq 1-\mathbb{P}(A^c)-\mathbb{P}(B^c)=\mathbb{P}(A)+\mathbb{P}(B)-1.
\end{align*}
Define
\[ \epsilon(k,\delta) := \left(\sqrt{\bar{\epsilon}(k,\delta)} + \sqrt{\tilde{\epsilon}(k,\delta)}\right)^2. \]
Then,
\begin{align*}
    \mathbb{P}( \|y_k-x^*\|_c^2 \leq \epsilon(k,\delta)) &\geq \mathbb{P}\left( \left[ \|\bar{y}_k\|_c +  \|\tilde{y}_k\|_c\right]^2 \leq \epsilon(k,\delta)\right) \tag{Eq. \eqref{eq:y_k_two_term}}\\
    &\geq \mathbb{P}\left(  \left\{\|\bar{y}_k\|_c^2 \leq \bar{\epsilon}(k,\delta)\right\} \cap  \left\{\|\tilde{y}_k\|_c^2 \leq \tilde{\epsilon}(k,\delta)\right\}\right) \\
    &\geq \mathbb{P}( \{  \|\ty_k\|_c^2  \leq \tilde{\epsilon}(k,\delta) \} \cap E_{k+1}) \tag{$E_{k+1}\subset \left\{\|\bar{y}_k\|_c \leq \bar{\epsilon}(k,\delta)\right\}$ due to Eq. \eqref{eq:loose_bound} }\\
    &= \mathbb{P}( \{  \|\ty_k\|_c^2 \mathbbm{1}_{E_{k+1}} \leq \tilde{\epsilon}(k,\delta) \} \cap E_{k+1}) \tag{for all $\omega\in E_{k+1}, \mathbbm{1}_{E_{k+1}}(\omega)=1$ }\\
     &\geq \mathbb{P}\left( \|\ty_k\|_c^2 \mathbbm{1}_{E_{k+1}} \leq \tilde{\epsilon}(k,\delta) \right) + \mathbb{P}( E_{k+1}) -1 \tag{union bound}\\
     &\geq 1 - \delta - \delta.\tag{Eq. \eqref{eq:epsilon_tilde_bound} and definition of $E_{k+1}$}
\end{align*}
Hence, with probability at least $1-\delta$ (with abuse of notation for substituting $2\delta$ by $\delta$), we have
\begin{align*}
    \|y_k-x^*\|^2_c &\leq \left( \sqrt{\tilde{\epsilon}(k,\delta/2)} + \sqrt{\bar{\epsilon}(k,\delta/2)}\right)^2\\
    &=  \tilde{\epsilon}(k,\delta/2) + \bar{\epsilon}(k,\delta/2) + 2\sqrt{\bar{\epsilon}(k,\delta/2)\tilde{\epsilon}(k,\delta/2)}\\
    &\leq \frac{g(\delta,k)}{\nu^2(k+1)}\log (2/\delta) + \frac{3u_{c2}^4d \bar{\sigma}^2}{\nu^2(k+1)} \nonumber\\
    &\quad~+ \frac{9 h^\xi(1+M) f(\delta/2,k)}{2\alpha (k+1)^{2-\xi}} \left( 3 +\frac{\xi^2}{(1-\xi/2)^2} \right) \nonumber\\
    &\quad~+ \frac{3h^{2-2\xi}\alpha^2 N^2 (f(\delta/2,k))^2(1+M)}{2\nu^2(k+1)^{2\xi}(1-\xi)^2} \nonumber\\
    &\quad~+ \frac{3h^{1-\xi}u_{c2}^4\alpha d(1+M)\hat{\sigma}^2f(\delta/2,k)}{(k+1)^{1+\xi}(1-\xi)} \\
    &\quad~+\frac{\alpha f(\delta,k_0(\delta/2,k)-1)\left[(k_0(\delta/2,k)-1+h)^{1-\xi/2}-(h-1)^{1-\xi/2}\right]^2}{(1-\xi/2)^2(k+1)^{2}}\\
    &\quad~+  \frac{2\sqrt{\alpha f(\delta,k_0(\delta/2,k)-1)}\left[(k_0(\delta/2,k)-1+h)^{1-\xi/2}-(h-1)^{1-\xi/2}\right]}{(1-\xi/2)(k+1)} \\
    &\quad\quad\times\Bigg[ \frac{g(\delta,k)}{\nu^2(k+1)}\log (2/\delta) + \frac{3u_{c2}^4d \bar{\sigma}^2}{\nu^2(k+1)} \nonumber\\
    &\quad\quad\quad \quad+ \frac{9 h^\xi(1+M) f(\delta/2,k)}{2\alpha (k+1)^{2-\xi}} \left( 3 +\frac{\xi^2}{(1-\xi/2)^2} \right) \nonumber\\
    &\quad\quad\quad \quad+ \frac{3h^{2-2\xi}\alpha^2 N^2 (f(\delta/2,k))^2(1+M)}{2\nu^2(k+1)^{2\xi}(1-\xi)^2} \nonumber\\
    &\quad\quad\quad \quad+ \frac{3h^{1-\xi}u_{c2}^4\alpha d(1+M)\hat{\sigma}^2f(\delta/2,k)}{(k+1)^{1+\xi}(1-\xi)}\Bigg]^{1/2},
\end{align*}
where
\begin{align*}
    g(\delta,k) &= 24u_{c2}^4\max \left\{\bar{\sigma}^2, \frac{\alpha (1+M)\hat{\sigma}^2h^{1-\xi}f(\delta/2,k)}{(1-\xi)(k+1)^\xi}\right\}.
\end{align*}

\subsection{Proof of Proposition \ref{prop:heavy_tail}}\label{proof:heavy_tail}
    Consider the recursion 
    \begin{align*}
        x_{k+1} = (1-\alpha_k)x_k + \alpha_k w_{k+1}x_k,
    \end{align*}
    where $w_k\sim \mathcal{N}(0,1)$. It follows that 
    \begin{align*}
        x_{k} &= (1-\alpha_{k-1}+\alpha_{k-1}w_{k}) x_{k-1}\\
        &= x_0\prod_{i=0}^{k-1} (1-\alpha_i+\alpha_iw_{i+1}).
    \end{align*}
    Hence,
    \begin{align*}
        y_k = \frac{x_0}{k+1}\sum_{k'=0}^{k}\prod_{i=0}^{k'-1} (1-\alpha_i+\alpha_iw_{i+1}).
    \end{align*}
    Without loss of generality, we assume that $x_0>0$. Consider the event $E_k=\{w_i\geq0, i=1,\dots,k\}$, and suppose that $k>2$. For any $t>0$, we have
    \begin{align*}
        \exp(ty_k) & \geq \mathbbm{1}_{E_k}\exp(ty_k) \\
        &\geq \mathbbm{1}_{E_k}\exp\left(\frac{tx_2}{k+1}\right) \\
        &= \mathbbm{1}_{E_k}\exp\left(x_0t\frac{(1-\alpha_0+\alpha_0w_1)(1-\alpha_1+\alpha_1w_2)}{k+1}\right) \\
        &\geq \mathbbm{1}_{E_k}\exp\left(\frac{x_0\alpha_0\alpha_1tw_1w_2}{k+1}\right).
    \end{align*}
    Then,
    \begin{align*}
        \E[\exp(ty_k)] &\geq \E\left[\mathbbm{1}_{E_k}\exp\left(\frac{x_0\alpha_0\alpha_1tw_1w_2}{k+1}\right)\right]\\
        &= \E\left[\mathbbm{1}_{E_2}\exp\left(\frac{x_0\alpha_0\alpha_1tw_1w_2}{k+1}\right)\right] . \mathbb{P}(w_i\geq0, i=3,4,\dots,k).
    \end{align*}
    We also have
    \begin{align*}
        \E\left[ \exp\left(\frac{x_0\alpha_0\alpha_1tw_1w_2}{k+1}\right)\right] &= \E\left[ \mathbbm{1}_{E_2}\exp\left(\frac{x_0\alpha_0\alpha_1tw_1w_2}{k+1}\right)\right] + \E\left[ \mathbbm{1}_{\{w_1<0,w_2<0\}}\exp\left(\frac{x_0\alpha_0\alpha_1tw_1w_2}{k+1}\right)\right]\\
        &\quad~+ \E\left[ \mathbbm{1}_{\{w_1.w_2<0\}}\exp\left(\frac{x_0\alpha_0\alpha_1tw_1w_2}{k+1}\right)\right] \\
        &=2\E\left[ \mathbbm{1}_{E_2}\exp\left(\frac{x_0\alpha_0\alpha_1tw_1w_2}{k+1}\right)\right] + \E\left[ \mathbbm{1}_{\{w_1.w_2<0\}} \exp\left(\frac{x_0\alpha_0\alpha_1tw_1w_2}{k+1}\right)\right] \\
        &\leq2\E\left[ \mathbbm{1}_{E_2}\exp\left(\frac{x_0\alpha_0\alpha_1tw_1w_2}{k+1}\right)\right] + 1.
    \end{align*}
    Hence, 
    \begin{align*}
        \E[\exp(ty_k)] &\geq \frac{1}{2}\E\left[\exp\left(\frac{x_0\alpha_0\alpha_1tw_1w_2}{k+1}\right)\right]  \mathbb{P}(w_i\geq0, i=3,4,\dots,k) - \frac{1}{2}\mathbb{P}(w_i\geq0, i=3,4,\dots,k)\\
        &= \E\left[\exp\left(\frac{x_0\alpha_0\alpha_1tw_1w_2}{k+1}\right)\right] \left(\frac{1}{2}\right)^{k-1} - \left(\frac{1}{2}\right)^{k-1},
    \end{align*}
    which is infinity for $t>(k+1)/(x_0\alpha_0\alpha_1)$.

    \medskip

    Next, consider $\alpha_j = \alpha/(j+h)^{\xi}$ such that $\alpha_0<1/2$. By Markov's inequality, we have
    \begin{align*}
        \mathbb{P}(|x_i|>\epsilon)&\leq \frac{\E[x_i^2]}{\epsilon^2} \\
        &= \frac{x_0^2}{\epsilon^2} \prod_{j=0}^{i-1} \E[(1-\alpha_j+\alpha_jw_{j+1})^2]\\
        &=\frac{x_0^2}{\epsilon^2} \prod_{j=0}^{i-1} [(1-\alpha_j)^2+\alpha_j^2]\\
        &\leq\frac{x_0^2}{\epsilon^2} \exp \left(\sum_{j=0}^{i-1} -2\alpha_j+2\alpha_j^2\right)\\
        &\leq\frac{x_0^2}{\epsilon^2} \exp \left(-\sum_{j=0}^{i-1} \alpha_j\right)\\
        &\leq\frac{x_0^2}{\epsilon^2} \exp \left(-\frac{\alpha}{(1-\xi)(i-1+h)^{\xi-1}}+\frac{\alpha}{(1-\xi)(h-1)^{\xi-1}} \right)\\
        &\leq\frac{c}{\epsilon^2} \exp \left(-\frac{\alpha}{(1-\xi)(i-1+h)^{\xi-1}} \right),
    \end{align*}
    where the last inequality holds for some constant $c>0$. Then, for $h> 2$, we have
    \begin{align*}
        \mathbb{P}(|x_i|\leq \epsilon ~\text{for all}~ 0\leq i\leq k) &= 1-\mathbb{P}(|x_i|> \epsilon ~\text{for some}~ 0\leq i\leq k)\\
        &\geq 1-\sum_{i=0}^k \mathbb{P}(|x_i|> \epsilon)\\
        &\geq 1-\frac{c}{\epsilon^2} \sum_{i=0}^k \exp \left(-\frac{\alpha}{(1-\xi)(i-1+h)^{\xi-1}} \right)\\
        &\geq 1-\frac{c}{\epsilon^2} \int\limits_{-1}^k \exp \left(-\frac{\alpha}{(1-\xi)(x-1+h)^{\xi-1}} \right)dx\\
        &\geq 1-\frac{c'}{\epsilon^2} \int\limits_{-1}^k \frac{1}{(x-1+h)^{\xi+1}} dx\\
        &\geq 1-\frac{c''\alpha_k}{\epsilon^2} .
    \end{align*}
    Hence, with probability at least $1-\delta$, for all $0\leq i\leq k$, we have $|x_i|^2\leq c'' \alpha_k/\delta$. This means that $f(\delta,k)=c''/\delta$. Therefore, by Theorem \ref{thm:main}, with probability at least $1-\delta$, we have
    \[ |y_k|^2\leq c\frac{\log(1/\delta) + 1}{k} + o(1/k). \]
    The result follows.

\subsection{Proof of Lemma \ref{lem:crude_bound_added}} \label{proof:crude_bound_added}
    The proof follows from Equation \eqref{eq:loose_bound}, and by assuming $k_0=k+1$. In particular, with probability at least $1-\delta$, we have
    \begin{align}
    \|y_k-x^*\|_c^2&\leq \frac{\alpha f(\delta,k+1)\left[(k+h)^{1-\xi/2}-(h-1)^{1-\xi/2}\right]^2}{(1-\xi/2)^2(k+1)^{2}}\nonumber\\
    &\leq \frac{\alpha f(\delta,k+1)(k+h)^{2-\xi}}{(1-\xi/2)^2(k+1)^{2}}\nonumber\\
    &\leq \frac{\alpha h^{2-\xi} f(\delta,k+1)}{(1-\xi/2)^2(k+1)^{\xi}}.\nonumber
\end{align}

\subsection{Proof of Proposition \ref{prop:lower_bound_1}} \label{proof:lower_bound_1}
    First, note that
    \[ y_k=\frac{x_0+w_1+\dots+w_k}{k+1}. \]
    In addition, this example satisfies Assumption \ref{ass:sub-Gaussian} as
\begin{align*}
    \mathbb{E} \left[\exp\left(\lambda \langle F(x^*,w_{k+1}) - \bar{F}(x^*), v \rangle \right) \middle| \mathcal{F}_k \right] & = \mathbb{E} \left[\exp\left(\lambda \langle w_{k+1} , v \rangle \right)  \right]\\
    &= \mathbb{E} \left[\exp\left(\lambda \sum_{j} w_{{k+1},j} v_j  \right)  \right]\\
    &= \mathbb{E} \left[\exp\left(\lambda \sum_{j~ \text{odd}} w_{{k+1},j} v_j   + \lambda\sum_{j~ \text{even}} w_{{k+1},j} v_j\right)  \right]\\
    &= \mathbb{E} \left[\exp\left(\lambda z_{{k+1},1}\sum_{j~ \text{odd}}  v_j   + \lambda z_{{k+1},2}\sum_{j~ \text{even}}  v_j\right)  \right]\\
    &= \mathbb{E} \left[\exp\left(\lambda z_{{k+1},1}\sum_{j~ \text{odd}}  v_j \right)\right]  .\E\left[\exp\left( \lambda z_{{k+1},2}\sum_{j~ \text{even}}  v_j\right)  \right]\\
    &= \mathbb{E} \left[\exp\left( \frac{\bar{\sigma}^2\lambda^2}{2d} \left(\sum_{j~ \text{odd}}  v_j\right)^2 \right)\right]  .\E\left[\exp \left( \frac{\bar{\sigma}^2\lambda^2}{2d} \left(\sum_{j~ \text{even}}  v_j\right)^2\right)  \right]\\
    &\leq \mathbb{E} \left[\exp\left( \frac{\bar{\sigma}^2\lambda^2 \|v\|_1^2}{2d} \right)\right] \\
    &\leq \mathbb{E} \left[\exp\left( \frac{\bar{\sigma}^2\lambda^2 \|v\|_2^2}{2} \right)\right].
\end{align*}
For the sake of simplicity, we assume $x_0=0$. Then, we have
\begin{align*}
    \Prob\Bigg( \frac{\sqrt{k\bar{\sigma}^2 \log(1/\delta)}-\|x_0\|_2}{k+1} &\geq \|y_k-x^*\|_2\Bigg) \\
    &= \Prob\left( \sqrt{k\bar{\sigma}^2 \log(1/\delta)}-\|x_0\|_2 \geq (k+1)\|y_k\|_2\right) \\
    &= \Prob\left( \sqrt{k\bar{\sigma}^2 \log(1/\delta)}-\|x_0\|_2 \geq \left\|x_0+ \sum_{i=1}^k w_i\right\|_2\right) \\
    &\leq \Prob\left( \sqrt{k\bar{\sigma}^2 \log(1/\delta)}-\|x_0\|_2 \geq - \|x_0\|_2 + \left\|\sum_{i=1}^k w_i\right\|_2 \right) \tag{triangle inequality}\\
    &= \Prob\left( \sqrt{k\bar{\sigma}^2\log(1/\delta)} \geq \left\|\sum_{i=1}^k w_i\right \|_2\right) \\
    &= \Prob\left( \sqrt{k\bar{\sigma}^2\log(1/\delta)} \geq \sqrt{\sum\limits_{j=1}^d \left(\sum\limits_{i=1}^k w_{i,j}\right)^2} \right) \\
    &= \Prob\left( \sqrt{k\bar{\sigma}^2\log(1/\delta)} \geq \sqrt{d\frac{\left(\sum\limits_{i=1}^k z_{i,1}\right)^2 + \left( \sum\limits_{i=1}^k z_{i,2} \right)^2}{2}}\right) \\
    &= \Prob\left( \sqrt{2\log(1/\delta)}\geq \sqrt{\frac{d\left\|\sum\limits_{i=1}^k z_i \right\|_2^2  }{\bar{\sigma}^2k}}\right) \\
    &= \int\limits_{0}^{2\pi}\int\limits_{0}^{\sqrt{2\log(1/\delta)}} \frac{1}{2\pi}re^{-r^2/2}drd\theta=1-\delta.
\end{align*}

\subsection{Proof of Theorem \ref{prop:main}} \label{proof:prop_main}

Since the $c$-norm can be non-smooth, to prove a concentration result on the error we use the generalized Moreau envelope 
    \begin{align*}    
    M(x) = \min_{u\in\mathbb{R}^d} \left\{ \frac{1}{2}\|u\|_c^2 + \frac{1}{2\mu} \|x-u\|_s^2 \right\}, 
    \end{align*} 
where $\|\cdot\|_s$ is an arbitrary smooth norm. Then, we have that $M(\cdot)$ is an $L/\mu$ -- smooth function with respect to $\|\cdot\|_s$. Moreover, the Moreau envelope defines the norm $\|\cdot\|_M= \sqrt{2M(\cdot)}$. For a more detailed discussion about this function, please refer to \cite{chen2020finite}.
    
    Since Assumptions \ref{ass:contraction}, and \ref{ass:subgaussianity_for_all} are satisfied, and $\alpha_i=\alpha/(i+h)^\xi$, where $\xi\in (0,1)$. Then, by choosing $\alpha>0$ and $h\geq (2\xi/[(1-\tilde{\gamma}_c)\alpha])^{1/(1-\xi)}$, for any $\delta>0$ and $K\geq 0$, using the result from \cite[Theorem 2.4]{chen2023concentration}, we know that with probability at least $1-\delta$, we have for all $i\geq K$ that
    \begin{align}
        \|x_i-x^*\|_c^2\leq\,&\frac{\bar{c}_1\log(1/\delta)}{(i+h)^\xi}+\bar{c}_2\|x_0-x^*\|_c^2 \exp\left(-\frac{\bar{D}\alpha ((i+h)^{1 -\xi}-h^{1-\xi})}{2(1-\xi)}\right)\nonumber\\
        &\quad~+ \frac{\bar{c}_5 +\bar{c}_6\log((i+1)/K^{1/2})}{(i+h)^\xi},\label{eq:i_bound}
    \end{align}
    where
    \begin{align*}
        \bar{c}_1&=\frac{16\sigma^2u_{M,c*}^2 u_{cM}^2\alpha}{1-\tilde{\gamma}_c} \\
        \bar{c}_2&=\frac{u_{cM}^2}{\ell_{cM}^2} \\
        \bar{c}_3&= \frac{32e c_d\sigma^2 L u_{cM}^2\alpha^2}{((1-\tilde{\gamma}_c)\alpha-1)\mu\ell_{cs}^2}\\
        \bar{c}_4&= \frac{16\sigma^2 c_d L u_{cM}^2\alpha^2}{\mu\ell_{cs}^2} \\
        \bar{c}_5&= \frac{16 eu_{cM}^2 c_d\sigma^2 L\alpha}{\mu\ell_{cs}^2(1-\tilde{\gamma}_c)} \\
        \bar{c}_6&= \frac{32 u_{cM}^2 \sigma^2u_{M,c*}^2 \alpha}{1-\tilde{\gamma}_c},
    \end{align*}with $\tilde{\gamma}_c=\gamma_c (1+\mu u_{cs}^2)^{1/2}/(1+\mu \ell_{cs}^2)^{1/2}$ and $\bar{D}=2(1-\tilde{\gamma}_c)$. Next, choosing $K=1$, we get that Equation \eqref{eq:i_bound} holds for all $i\geq1$. In addition, since $\bar{c}_2\geq1$ and $\bar{c}_l\geq0$ for $l=1,4,5$ this bound holds $i=0$ as well. Hence, for all $i\leq k$, we have
    \begin{align}
        \|x_i-x^*\|_c^2&\leq\alpha_i\left(\frac{\bar{c}_1\log(1/\delta)}{\alpha}+\frac{\bar{c}_2\|x_0-x^*\|_c^2(i+h)^\xi}{\alpha}\exp\left(-\frac{\bar{D}\alpha ((i+h)^{1-\xi}-h^{1-\xi})}{2(1-\xi)}\right) + \frac{\bar{c}_5 +\bar{c}_6\log(i+1)}{\alpha}\right)\nonumber\\
        &\leq \alpha_i\underbrace{\left(\frac{\bar{c}_1\log(1/\delta)}{\alpha}+d_1+ \frac{\bar{c}_5 +\bar{c}_6\log(k+1)}{\alpha}\right)}_{f_\xi(\delta,k)},\label{eq:f_xi}
    \end{align}
    where
    \[ d_1 := \sup\limits_{i\geq 0} \left\{ \frac{\bar{c}_2\|x_0-x^*\|_c^2(i+h)^\xi}{\alpha}\exp\left(-\frac{\bar{D}\alpha ((i+h)^{1-\xi}-h^{1-\xi})}{2(1-\xi)}\right) \right\} < \infty. \]
    We get the result by directly applying Theorem \ref{thm:main}.

\subsection{Proof of Theorem \ref{thm:multiplicative_SA}} \label{proof:multiplicative_SA}

From the proof of Theorem 3 in \cite{moulines2011non}, for $h$ large enough, we have
\begin{align}\label{eq:recursion_u}
    \E[u_{i+1} |\mathcal{F}_i] \leq (1-\beta_1\alpha_i)u_i + \beta_2\alpha_i^3,
\end{align}
where $u_i=\|x_{i+1}-x^*\|_c^4 + \beta_3\alpha_i\|x_{i+1}-x^*\|_c^2$ for some positive constants $\beta_1,\beta_2,$ and $\beta_3$. Define 
\begin{align*}
    M_i= \begin{cases}
        \frac{1}{\alpha_i^2} u_i+4\beta_2 \sum_{j=i}^{k}\alpha_j &i\leq k\\
        M_k &i>k.
    \end{cases}
\end{align*}
Next, we prove that $\{M_i\}_{i\geq0}$ is a supermatringale:
\begin{align*}
    \E[M_{i+1} |\mathcal{F}_i] \leq& \E\left[\frac{(1-\beta_2\alpha_i)u_i + \beta_4\alpha_i^3}{\alpha_{i+1}^2} +4\beta_4 \sum_{j=i+1}^{k}\alpha_j\right]\tag{by Eq. \eqref{eq:recursion_u}}\\
    \leq& \E\left[ \frac{(1-\beta_2\alpha_i)u_i }{\alpha_{i+1}^2}+4\beta_4 \sum_{j=i}^{k} \alpha_j\right]\tag{by $\alpha_i^2/\alpha_{i+1}^2\leq 4$}\\
    \leq& \E\left[ \frac{u_i}{\alpha_i^2}+4\beta_4 \sum_{j=i}^{k} \alpha_j\right]\\
    =& M_i.
\end{align*}
In the last inequality, we used the following
\begin{align*}
\frac{\alpha_i^2}{\alpha_{i+1}^2}\!\Bigl(1-\beta_2\alpha_i\Bigr)
      &= \left(\frac{i+h+1}{i+h}\right)^{\!\xi}
        \left(1-\frac{\beta_2\alpha}{(i+h)^{\xi}}\right) \\[6pt]
&\le \left(\frac{i+h+1}{i+h}\right)^{\!2\xi}
      \exp\!\left(-\frac{\beta_2\alpha}{(i+h)^{\xi}}\right)\\
&=\left[\!\left(1+\frac{1}{i+h}\right)^{i+h}\right]^{2\xi/(i+h)}\,
     \exp\!\left(-\frac{\beta_2\alpha}{(i+h)^{\xi}}\right)\\
&\le \exp\!\left(\frac{2\xi}{i+h}\;-\;
                \frac{\beta_2\alpha}{(i+h)^{\xi}}\right)\\
&\le 1.\tag{for $h$ large enough.}
\end{align*}
Hence, by Ville’s maximal inequality, we have
\begin{align*}
    \mathbbm{P} \left(\sup_{i\geq 0}M_i\geq \epsilon\right)\leq&\frac{\E[M_0]}{\epsilon}\\
    =&\frac{u_0/\alpha_0^2 + 4\beta_4\sum_{j=0}^k \alpha_j}{\epsilon}
\end{align*}
Hence,
\begin{align*}
    \mathbbm{P} \left(\sup_{i\geq 0}M_i< \epsilon\right) >1-\underbrace{\frac{u_0/\alpha_0^2 + 4\beta_4\sum_{j=0}^k \alpha_j}{\epsilon}}_{\delta}.
\end{align*}
By a change of variables, we get
\begin{align*}
    \mathbbm{P} \left(\sup_{i\geq 0}M_i< \frac{u_0/\alpha_0^2 + 4\beta_4\sum_{j=0}^k \alpha_j}{\delta}\right) >1-\delta.
\end{align*}
Hence,
\begin{align*}
    1-\delta &<\mathbbm{P} \left(M_i< \frac{u_0/\alpha_0^2 + 4\beta_4\sum_{j=0}^k \alpha_j}{\delta}, \forall i\geq 0\right) \\
    &<\mathbbm{P} \left(\frac{1}{\alpha_i^2}\|x_i-x^*\|_c^4< \frac{u_0/\alpha_0^2 + 4\beta_4 \sum_{j=0}^k \alpha_j}{\delta} ,  \forall 0\leq i\leq k \right)\\
    &=\mathbbm{P} \left(\frac{1}{\alpha_i}\|x_i-x^*\|_c^2<\sqrt{ \frac{u_0/\alpha_0^2 + 4\beta_4 \sum_{j=0}^k \alpha_j}{\delta}},  \forall 0\leq i\leq k \right)
\end{align*}
Applying Theorem \ref{thm:main} with $f_{\xi}(\delta,k) = \sqrt{\frac{u_0/\alpha_0^2 + 4\beta_4 \sum_{j=0}^k \alpha_j}{\delta}} = \frac{\mathcal{O}(k^{1/2-\xi/2})}{\delta^{1/2}}$, we get the result.

\subsection{Proof of Theorem \ref{thm:lower_bound}}\label{proof:lower_bound}
The idea of this proof is borrowed from the example in \cite[Example 2.2]{chen2023concentration}.\\

Consider the SA presented in Equation \eqref{eq:SA_rec} for a $1$-dimensional linear setting with $F(x,w)=wx$. In this case, let $\{w_{k+1}\}_{k\geq 0}$ be an i.i.d. sequence of real-valued random variables such that 
$\mathbb{P}\left(w_k=a+N\right)=1/(N+1)$ and $\mathbb{P}\left(w_k=a-1\right)=N/(N+1)$, where $a\in (0,1)$ and $N\geq 1$ are tunable parameters.
Note that the update equation can be equivalently written as
\begin{align}\label{algo:example}
    x_{k+1} =(1+(w_{k+1}-1)\alpha_k)x_k.
\end{align}
In this example, we have $x_k\in\mathbb{R}^+$ for all $k\geq0$ and $x^*=0$. To begin with, there exists $k_0>0$ such that
\begin{align}\label{eq:exp_bound}
    \exp\left(\frac{\alpha_k(a+N-1)}{2}\right) \leq 1+\alpha_k(a+N-1),\quad \forall\; k\geq k_0.
\end{align}
As a result, we have for any $\lambda>0$ and $k$ large enough that
\begin{align*}
    \mathbb{E}&\left[\exp\left(\lambda (k+1)^{{\tilde{\beta}}'} y_k^{\tilde{\beta}}\right)\right]\\
    &= \mathbb{E}\left[\exp\left(\lambda x_0^{\tilde{\beta}} (k+1)^{{\tilde{\beta}}'} \left[ \frac{1}{k+1}\sum_{j=0}^k  x_j \right]^{\tilde{\beta}}\right)\right] \\
    &= \mathbb{E}\left[\exp\left(\lambda x_0^{\tilde{\beta}} (k+1)^{{\tilde{\beta}}'} \left[ \frac{1}{k+1}\sum_{j=0}^k  \prod_{i=0}^{j-1}(1+\alpha_i( w_{i+1}-1)) \right]^{\tilde{\beta}}\right)\right] \\
    &\overset{(a)}{\geq} \frac{1}{(N+1)^k}\exp\left(\lambda x_0^{\tilde{\beta}'} (k+1)^{{\tilde{\beta}}'} \left[ \frac{1}{k+1}\sum_{j=0}^k \prod_{i=0}^{j-1}(1+\alpha_i (a+N-1)) \right]^{\tilde{\beta}}\right)\\
    &\overset{(b)}{\geq}  \frac{1}{(N+1)^k}\exp\left(\lambda x_0^{\tilde{\beta}'} (k+1)^{{\tilde{\beta}}'} \left[ \frac{1}{k+1}\sum_{j=k_0+1}^k \prod_{i=k_0}^{j-1}(1+\alpha_i (a+N-1)) \right]^{\tilde{\beta}}\right) \\
    &\overset{(c)}{\geq} \frac{1}{(N+1)^k} \exp\left(\lambda x_0^{\tilde{\beta}} (k+1)^{{\tilde{\beta}}'} \left[ \frac{1}{k+1}\sum_{j=k_0+1}^k \exp\left( \frac{1}{2} \sum_{i=k_0}^{j-1}\alpha_i (a+N-1)\right) \right]^{\tilde{\beta}} \right)\\
    &\overset{(d)}{\geq} \exp\left(\lambda x_0^{\tilde{\beta}} (k\!+\!1)^{{\tilde{\beta}}'}\! \left[ \frac{1}{k+1}\sum_{j=k_0+1}^k \exp\left(\frac{\alpha (a\!+\!N\!-\!1)}{2(1\!-\!\xi)}((j\!+\!h)^{1-\xi}\!-\!(k_0\!+\!h)^{1-\xi})\right) \right]^{\tilde{\beta}} -\!k\ln(N\!+\!1)\right) \\
    &\overset{(e)}{\geq} \exp\left(\lambda x_0^{\tilde{\beta}} (k\!+\!1)^{{\tilde{\beta}}'}\! \left[ \frac{1}{(k+1)\left\lceil 2/(\tilde{\beta}(1-\xi)) \right\rceil!}\right]^{\tilde{\beta}}\right.\\
    &\left.\quad \quad\quad\times \left[\sum_{j=k_0+1}^k \left(\frac{\alpha (a\!+\!N\!-\!1)}{2(1\!-\!\xi)}((j\!+\!h)^{1-\xi}\!-\!(k_0\!+\!h)^{1-\xi})\right)^{\left\lceil 2/(\tilde{\beta}(1-\xi)) \right\rceil} \right]^{\tilde{\beta}}\!-\!k\ln(N\!+\!1)\right) \\
    &\overset{(f)}{\geq} \exp\left(\lambda x_0^{\tilde{\beta}} (k\!+\!1)^{{\tilde{\beta}}'}\! \left[ \frac{1}{(k+1)\left\lceil 2/(\tilde{\beta}(1-\xi)) \right\rceil!}\right]^{\tilde{\beta}}\right.\\
    &\quad\quad \quad\quad\left.\times \left[\sum_{j= k_1}^k \left(\frac{\alpha (a\!+\!N\!-\!1)}{2(1\!-\!\xi)}((j\!+\!h)^{1-\xi}\!-\!(k_0\!+\!h)^{1-\xi})\right)^{ 2/(\tilde{\beta}(1-\xi)) } \right]^{\tilde{\beta}}\!-\!k\ln(N\!+\!1)\right)  \\
    &\overset{(g)}{\geq} \exp\left(\lambda x_0^{\tilde{\beta}} (k\!+\!1)^{{\tilde{\beta}}'}\! \left[ \frac{1}{(k+1)\left\lceil 2/(\tilde{\beta}(1-\xi)) \right\rceil!}\right]^{\tilde{\beta}}\right.\\
    &\quad\quad \quad\quad\times\left. \left[\int\limits_{k_1}^k \left(\frac{\alpha (a\!+\!N\!-\!1)}{2(1\!-\!\xi)}((u\!+\!h)^{1-\xi}\!-\!(k_0\!+\!h)^{1-\xi})\right)^{ 2/(\tilde{\beta}(1-\xi)) } du \right]^{\tilde{\beta}}\!-\!k\ln(N\!+\!1)\right) \\
    &\overset{(h)}{\geq} \exp\left(\lambda x_0^{\tilde{\beta}} (k\!+\!1)^{{\tilde{\beta}}'}\! \left[ \frac{1}{(k+1)\left\lceil 2/(\tilde{\beta}(1-\xi)) \right\rceil!}\int\limits_{k_1}^k \left(\frac{\alpha (a\!+\!N\!-\!1)}{2(1\!-\!\xi)}(  (u\!+\!h)^{ 2/\tilde{\beta}}\!\right.\right.\right.\\
    & \quad\quad \quad\quad \quad\quad \quad\quad \quad\quad\left.\left.\left.-\!\left( 2/(\tilde{\beta}(1-\xi)) \right)(u\!+\!h)^{ 2/\tilde{\beta} -1+\xi}(k_0\!+\!h)^{1-\xi})\right) du \right]^{\tilde{\beta}}\!-\!k\ln(N\!+\!1)\right)\\
    &\geq \exp\left(\theta(k^{\tilde{\beta}'}[k^{2/\tilde{\beta}}]^{\tilde{\beta}})-k\ln(N+1)\right) = \exp\left(\theta (k^{2+\tilde{\beta}'})\right),
\end{align*}
where $(a)$ is by lower bounding expectation with one of its terms, $(b)$ is by $a+N-1>0$, $(c)$ is by Equation \eqref{eq:exp_bound}, $(d)$ follows from
\begin{align}\label{eq:integral_lower_bound}
    \sum_{i=k_0}^{k-1}\alpha_i
    \geq \int\limits_{k_0}^k\frac{\alpha}{(x+h)^\xi}dx
    = 
    \frac{\alpha}{1-\xi}((k+h)^{1-\xi}-(k_0+h)^{1-\xi}),
\end{align}
$(e)$ is due to the fact that for any $r>0$, $e^{r}\geq r^m/m!$ for all $m\in\mathbb{N}$, $(f)$ is due to picking $k_1\geq k_0+1$ such that
\[ \frac{\alpha (a\!+\!N\!-\!1)}{2(1\!-\!\xi)}((k_1\!+\!h)^{1-\xi}\!-\!(k_0\!+\!h)^{1-\xi}) >1, \]
$(g)$ is due to an integral lower bound similar to Equation \eqref{eq:integral_lower_bound}, and $(h)$ is due to Bernoulli's inequality. Therefore, for any $\tilde{\beta}>0$ and $\tilde{\beta}'>0$, we have
\begin{align*}
        \liminf\limits_{k \to\infty} \mathbb{E} \left[\exp\left(\lambda (k+1)^{{\tilde{ \beta}}'} y_k^{\tilde{ \beta}} \right)\right] = \infty, \quad \forall\;\lambda>0.
\end{align*}
As a result, according to Lemma \cite[Lemma 4.1]{chen2023concentration}, there do not exist $\bar{K}_1',\bar{K}_2'>0$ such that 
\begin{align*}
    \mathbb{P}\left((k+ 1)^{{\tilde{\beta}}'/{\tilde{\beta}}}\;y_k\geq \epsilon\right)\leq \bar{K}_1'\exp\left(-\bar{K}_2'\epsilon^{\tilde{\beta}}\right),\quad \forall\,\epsilon>0,k\geq 0.
\end{align*}
A change of variables finishes the proof.

\subsection{Proof of Theorem \ref{thm:TD-learning}} \label{proof:TD-learning}

First, we verify that the assumptions of Theorem \ref{prop:main} are satisfied.
\begin{itemize}
    \item \textbf{Assumption \ref{ass:contraction}:} Firstly, for tabular TD$(n)$ algorithm we assumed that $V_k(s)\leq R_{\max}/(1-\gamma)$ for all $s\in\mathcal{S}$ and $k\geq 0$. Furthermore, TD$(n)$ can be written in the form of \eqref{eq:SA_rec} with
    \begin{align}
     (F(V_k,w_{k+1}))_s &=\mathbbm{1}_{\{s=S_k^0\}}\left( \sum_{i=0}^{n-1} \gamma^{i}(\mathcal{R}(S_k^i, A_k^i)+\gamma V_k(S_k^{i+1}) - V_{k}(S_k^i))  \right)+ V_k(s)\nonumber\\
    &= \mathbbm{1}_{\{s=S_k^0\}}\left( \gamma^nV_{k}(S_k^n)-V_k(s) + \sum_{i=0}^{n-1} \gamma^{i}\mathcal{R}(S_k^i, A_k^i)  \right)+ V_k(s),\label{eq:F_def}
    \end{align}
    where $w_{k+1} = [(S_k^i,A_k^i)]_{0\leq i\leq n}$. In addition, we have
    \begin{align*}
        (\bar{F}(V_k))_s &= \mu^\pi(s)\sum_{s'}\left( \gamma^n(P^\pi)^n_{s,s'}V_{k}(s') + \sum_{i=0}^{n-1} \gamma^{i}(P^\pi)^i_{s,s'}\sum_{a}\mathcal{R}(s', a)\pi(a\,|\,s)  \right) + (1-\mu^\pi(s))V_k(s),
    \end{align*}
    where the matrix $P^\pi$ is the transition probability of the Markov chain following policy $\pi$, i.e., we have
    \[ P^\pi_{s,s'}=\sum_a P(s'\,|\,s,a)\pi(a\,|\,s). \]
   Furthermore, we have
    \begin{align*}
        \bar{F}(V) = M^\pi (\mathcal{T}^\pi)^n (V)+(I-M^\pi) V,
    \end{align*}
    where $M^\pi$ is a diagonal matrix with diagonal entries $\mu^\pi$, and $\mathcal{T}^\pi$ is the Bellman operator. Hence, we can write 
    \begin{align*}
        \bar{F}(V) = \underbrace{(I-M^\pi(I -\gamma^n(P^\pi)^n))}_{A^\pi}V +b^\pi.
    \end{align*}
    Moreover, we have 
    \begin{align}
        \sum_{s'} A_{s,s'}^\pi&=\sum_{s'} \left[\delta_{s,s'}-(M^\pi(I -\gamma^n(P^\pi)^n))_{s,s'}\right]\nonumber\\
        &=\sum_{s'} \left[\delta_{s,s'}-\sum_{s''}\left[M^\pi_{ss''}(I -\gamma^n (P^\pi)^n)_{s''s'}\right]\right]\nonumber\\
        &=\sum_{s'} \left[\delta_{s,s'}-\sum_{s''}\left[\delta_{ss''}\mu^\pi(s)(\delta_{s''s'} -\gamma^n (P^\pi)^n_{s''s'})\right]\right]\nonumber\\
        &=\sum_{s'} \left[\delta_{s,s'}-\left[\mu^\pi(s)(\delta_{ss'} -\gamma^n (P^\pi)^n_{ss'})\right]\right]\nonumber\\
        &=1-\mu^\pi(s)+\gamma^n\sum_{s'}\mu^\pi(s)(P^{\pi})^n_{ss'}\nonumber\\
        &=1-\mu^\pi(s)+\gamma^n\mu^\pi(s)\nonumber\\
        &=1-(1-\gamma^n)\mu^\pi(s).\label{eq:A_pi_sum}
    \end{align}
    Now, consider the matrix $B^\pi\in\mathbbm{R}^{|\mathcal{S}|\times|\mathcal{S}|}$, where $B^\pi_{ss'}=(1-\gamma^n)\mu^\pi(s)/|\mathcal{S}|$. Note that $C^\pi=A^\pi+B^\pi$ is a stochastic matrix, and its corresponding stationary distribution is $\nu^\pi$ such that $(\nu^\pi)^\top C^\pi=(\nu^\pi)^\top$. Further, we have
    \begin{align}
        \nu^\pi_s&= \sum_{s'}\nu^{\pi}_{s'}C^\pi_{ss'}\nonumber\\
        &=\sum_{s'}\nu^{\pi}_{s'}(A^\pi_{ss'} + B^\pi_{ss'})\nonumber\\
        &\geq\sum_{s'}\nu^{\pi}_{s'}\frac{(1-\gamma^n)\mu^\pi(s)}{|\mathcal{S}|}\nonumber\\
        &\geq\frac{(1-\gamma^n)\mu^\pi_{\min}}{|\mathcal{S}|},\label{eq:mu_min_bound}
    \end{align}
    where $\mu_{\min}=\min_s \{\mu^\pi(s)\}$. Let us pick an arbitrary $p\geq 2$, and denote $\|\cdot\|_{\nu^\pi,p}$ as the weighted $p$-norm with weights $\nu^\pi$. We then have
    \begin{align*}
        \|\bar{F}(V_1)-\bar{F}(V_2)\|_{\nu^\pi,p}^p &=\|A^\pi (V_1-V_2)\|_{\nu^\pi,p}^p\\
        &= \sum_{s}\nu^{\pi}(s) (A^\pi (V_1-V_2))_s^p \\
        &= \sum_{s}\nu^{\pi}(s) \left(\sum_{s'}A^\pi_{ss'} (V_1-V_2)(s')\right)^p \\
        &= \sum_{s}\nu^{\pi}(s)\left[\sum_{s''}A_{ss''}^\pi\right]^p \left( \sum_{s'}\frac{A^\pi_{ss'}}{\sum_{s''}A_{ss''}^\pi} (V_1-V_2)(s')\right)^p \\
        &\leq \sum_{s}\nu^{\pi}(s)\left[\sum_{s''}A_{ss''}^\pi\right]^p \sum_{s'}\frac{A^\pi_{ss'}}{\sum_{s''}A_{ss''}^\pi}\left(  (V_1-V_2)(s')\right)^p \tag{Jensen's inequality}\\
        &= \sum_{s}\nu^{\pi}(s)\left[\sum_{s''}A_{ss''}^\pi\right]^{p-1} \sum_{s'}A^\pi_{ss'}\left(  (V_1-V_2)(s')\right)^p\\
        &= \sum_{s}\nu^{\pi}(s)\left[1-(1-\gamma^n)\mu^\pi(s)\right]^{p-1} \sum_{s'}A^\pi_{ss'}\left(  (V_1-V_2)(s')\right)^p\tag{by \eqref{eq:A_pi_sum}}\\
        &\leq \sum_{s}\nu^{\pi}(s)\left[1-(1-\gamma^n)\mu^\pi(s)\right]^{p-1} \sum_{s'}C^\pi_{ss'}\left(  (V_1-V_2)(s')\right)^p\\
        &\leq \left[1-(1-\gamma^n)\mu^\pi_{\min}\right]^{p-1}\sum_{s}\nu^{\pi}(s) \sum_{s'}C^\pi_{ss'}\left(  (V_1-V_2)(s')\right)^p\\
        &= \left[1-(1-\gamma^n)\mu^\pi_{\min}\right]^{p-1}\sum_{s'}\nu^{\pi}(s') \left(  (V_1-V_2)(s')\right)^p \tag{$\nu^\pi$ is stationary distribution}\\
        &= \left[1-(1-\gamma^n)\mu^\pi_{\min}\right]^{p-1}\|V_1-V_2\|_{\nu^\pi,p}^p.
    \end{align*}
    Therefore, $\bar{F}(V)$ is a contraction with respect to $\|\cdot\|_{\nu^\pi,p}$ with contraction factor $\gamma_c=\left[1-(1-\gamma^n)\mu^\pi_{\min}\right]^{1-1/p}$, and hence Assumption \ref{ass:contraction} is satisfied.
    \item \textbf{Assumption \ref{ass:subgaussianity_for_all}:} Since the state and action space are both bounded, the noise in the update of TD-learning is also bounded. Hence, Equation \eqref{eq:sub-Gaussian_1} is satisfied for some $\sigma>0$ by Hoeffding's lemma. In addition, Equation \eqref{eq:sub-Gaussian_2} is satisfied due to Lemma \ref{lem:subgaussian}. 
    \item \textbf{Assumption \ref{ass:norm_smoothness}:} This assumption is satisfied with $M=p-1$ due to Lemma \ref{lem:smoothness_of_p_norm}.
    \item \textbf{Assumption \ref{ass:operators}:} Since $\bar{F}$ is a contraction, Banach's fixed point theorem implies it has a unique fixed point, and Lemma \ref{lem:nu_gammac} implies that $J_{\bar{F}}(x^*)-I$ is invertible. Moreover, since $F$ is linear, it is $(0,\infty)$-locally psuedo smooth.

    \item \textbf{Assumption \ref{ass:sub-Gaussian}:} We have
    \[
    |(F(V^\pi,w_{k+1})-\bar{F}(V^\pi))_s| \leq \frac{R_{\max}}{1-\gamma}\mathbbm{1}_{\{s=s_k^0\}}.
    \]
    Hence,
    \begin{align*}
        |\langle F(V^\pi,w_{k+1}) - \bar{F}(V^\pi), v \rangle|&\leq \|F(V^\pi,w_{k+1}) - \bar{F}(V^\pi)\|_{\nu^\pi,p}^*.\|v\|_{\nu^\pi,p}\tag{H\"older's inequality}\\
        &\leq (\nu^\pi_{\min})^{-1/p}\frac{R_{\max}}{1-\gamma}\|v\|_{\nu^\pi,p},
    \end{align*}
    where in the last inequality we used the fact the
    \[ \|x\|_{\nu^\pi,p}^*=\left(\sum_{i}(\nu^\pi (i))^{-\frac{q}{p}}|x_i|^q\right)^{1/q}, \]
    where $q$ is such that $1/p+1/q=1$, and $\nu^\pi_{\min}=\min_s\{\nu^\pi(s)\}$. 
    It follows that
    \begin{align*}
        \mathbb{E} \left[\exp\left(\lambda \langle F(V^\pi,w_{k+1}) - \bar{F}(V^\pi), v \rangle \right) \middle| \mathcal{F}_k \right]  &\leq \exp\left(\lambda^2\frac{R_{\max}^2\|v\|_{\nu^\pi,p}^2}{2(1-\gamma)^2\nu_{\min}^{2/p}} \right)\tag{Hoffding's lemma}\\
        &\leq \exp\left(\lambda^2\frac{R_{\max}^2\|v\|_{\nu^\pi,p}^2|\mathcal{S}|^{2/p}}{2(1-\gamma)^2(\mu_{\min}^\pi(1-\gamma^n))^{2/p}}\right).\tag{Eq. \eqref{eq:mu_min_bound}}
    \end{align*}
    Therefore, the first part of Assumption \ref{ass:sub-Gaussian} is satisfied with
    \[ \bar{\sigma}^2=\frac{R_{\max}^2|\mathcal{S}|^{2/p}}{(1-\gamma)^2 (\mu_{\min}^\pi (1-\gamma^n))^{2/p}}. \]

    On the other hand, by the definition in \eqref{eq:F_def}, we have $F(V,w_{k+1})=A(w_{k+1})V+b(w_{k+1})$. Hence, $J_{F_w}(V^\pi,w_{k+1})=A(w_{k+1})$. Since we assume finite state and action spaces, and $w_{k+1}$ can only take finitely many values, therefore, by Hoeffding's lemma there exist $\hat{\sigma}<\infty$ such that the second part of Assumption \ref{ass:sub-Gaussian} holds.
\end{itemize}

Since $\|x\|_{\nu^\pi,p}\leq \|x\|_{p}\leq \|x\|_2$, we have $u_{c2}=1$. By a direct application of Theorem \ref{prop:main}, we have with probability at least $1-\delta$,
\begin{align*}
\|\bar{V}_k-V^\pi\|_{\nu^\pi,p}^2 \le\; & \frac{R_{\max}^2|\mathcal{S}|^{2/p}}{(1-\gamma)^2(\mu_{\min}^\pi(1-\gamma^n))^{2/p}}\frac{24 \,\log(2/\delta)+3 d }{(1-\gamma_c)^2\,(k+1)} + \mathcal{O}(p)\tilde{\mathcal{O}}\left(\frac{1}{k^{3/2}}  \right) \mathcal{O}(\log(1/\delta)).
\end{align*}
Moreover, we have 
\begin{align*}
    \|y_k-x^*\|_{\nu^\pi,p}^2&\geq (\nu^\pi_{\min})^{2/p}\|y_k-x^*\|_{\infty}^2\\
    &\geq \left(\frac{(1-\gamma^n)\mu^\pi_{\min}}{|\mathcal{S}|}\right)^{2/p}\|y_k-x^*\|_{\infty}^2
\end{align*}
Hence, with probability $1-\delta$, we have
\begin{align}
\|\bar{V}_k-V^\pi\|_{\infty}^2 \le\; & \frac{R_{\max}^2}{(1-\gamma)^2}\left(\frac{|\mathcal{S}|}{\mu_{\min}^\pi(1-\gamma^n)}\right)^{4/p}\frac{24 \,\log(2/\delta)+3 d }{(1-\gamma_c)^2\,(k+1)} + \mathcal{O}(p)\tilde{\mathcal{O}}\left(\frac{1}{k^{3/2}}  \right) \mathcal{O}(\log(1/\delta)).\label{eq:error_infinity_bound}
\end{align}
This result holds for any $p$. In particular, by choosing
\[ p=\frac{-4\log\left(1-(1-\gamma^n)\mu^\pi_{\min}\right)}{(1-\gamma^n)\mu^\pi_{\min}} \log\left(\frac{|\mathcal{S}|}{\mu_{\min}^\pi(1-\gamma^n)}\right)(k+1)^{1/4}, \]
we have
\begin{align}
    \left(\frac{ |\mathcal{S}|}{\mu_{\min}^\pi(1-\gamma^n)}\right)^{4/p} &= \exp\left(\frac{4}{p}\log\left(\frac{ |\mathcal{S}|}{\mu_{\min}^\pi(1-\gamma^n)}\right)\right)\nonumber\\
    &\leq \exp\left(\frac{1}{(k+1)^{1/4}}\right)\tag{$-\log(1-(1-\gamma^n)\mu^\pi_{\min})>(1-\gamma^n)\mu^\pi_{\min}$}\nonumber\\
    &\leq 1+ \frac{2}{(k+1)^{1/4}}.\label{eq:power_p_bound}
\end{align}
In addition, we have
\begin{align}
    \frac{1}{1-\gamma_c} &= \frac{1}{1-\left[1-(1-\gamma^n)\mu^\pi_{\min}\right]^{1-\frac{1}{p}}}\nonumber\\
    &\leq \frac{1}{(1-\gamma^n)\mu^\pi_{\min}} + \frac{1-(1-\gamma^n)\mu^\pi_{\min}}{2(1-\gamma^n)\mu^\pi_{\min}\log\left(\frac{|\mathcal{S}|}{\mu_{\min}^\pi(1-\gamma^n)}\right)(k+1)^{1/4}},\label{eq:gamma_c_bound}
\end{align}
where in the last inequality we used the fact that for all $0<a<1$, we have
\[ \frac{1}{1-a^{1-x}}\leq \frac{1}{1-a}+2\frac{a\log(1/a)}{(1-a)^2}x \]
for all $x\leq (1-a)/(2\log(1/a))$, and we take $x=1/p$ and $a=1-(1-\gamma^n)\mu^\pi_{\min}$. Substituting equations \eqref{eq:power_p_bound} and \eqref{eq:gamma_c_bound} in Equation \eqref{eq:error_infinity_bound}, we obtain
\begin{align*}
\|\bar{V}_k-V^\pi\|_{\infty}^2 \le\; & \frac{R_{\max}^2}{(1-\gamma)^2}\left(1+ \frac{2}{(k+1)^{1/4}}\right)\frac{24 \,\log(2/\delta)+3 d }{(k+1)} \left( \frac{1}{(1-\gamma^n)\mu^\pi_{\min}} + \mathcal{O}\left(\frac{1}{k^{1/4}}\right)\right)^2 \\
&\quad~+ \tilde{\mathcal{O}}\left(\frac{1}{k^{5/4}}  \right) \mathcal{O}(\log(1/\delta))\\
=\; & \frac{R_{\max}^2}{(1-\gamma)^2(1-\gamma^n)^2(\mu^\pi_{\min})^2}\frac{24 \,\log(2/\delta)+3 d }{(k+1)} + \tilde{\mathcal{O}}\left(\frac{1}{k^{5/4}}  \right) \mathcal{O}(\log(1/\delta))
\end{align*}

\subsection{Proof of Theorem \ref{thm:Q-learning}} \label{proof:Q-learning}

First, we verify that the assumptions of Theorem \ref{prop:main} are satisfied.
\begin{itemize}
    \item \textbf{Assumption \ref{ass:contraction}:} Firstly, for $Q$-learning algorithm it is known that $Q_k(s,a)\leq R_{\max}/(1-\gamma)$ for all $s\in\mathcal{S}$ and $k\geq 0$. Furthermore, $Q$-learning can be written in the form of Equation \eqref{eq:SA_rec} with
    \begin{align}
     (F(Q_k,w_{k+1}))_{s,a} &=\mathbbm{1}_{\{s=S_k,a=A_k\}}\left( \mathcal{R}(S_k,A_k)+\gamma\max_{a'} \left\{Q_k(S_k',a')\right\} - Q_{k}(S_k,A_k)  \right) + Q_k(s,a),\nonumber
    \end{align}
    where $w_{k+1} = (S_k,A_k,S_k')$. In addition, we have
    \begin{align*}
        (\bar{F}(Q_k))_{s,a} &= \mu^{\pi_b}(s)\pi_b(a\,|\,s) \left(\mathcal{R}(s,a) + \gamma\sum_{s'}P(s'|s,a)\max_{a'} \left\{Q_{k}(s',a') \right\} \right) + (1-\mu^{\pi_b}(s)\pi_b(a\,|\,s))Q_k(s,a).
    \end{align*}
    Furthermore, we have
    \begin{align*}
        \bar{F}(Q) = M^{\pi_b} H (Q)+(I-M^{\pi_b}) Q,
    \end{align*}
    where $M^{\pi_b}$ is a diagonal matrix with diagonal entries $\mu^{\pi_b}(s)\pi_b(a\,|\,s)$, and $H$ is the Bellman optimality operator. 

    Next, we have
    \begin{align*}
        [\bar{F}(Q_1)-\bar{F}(Q_2)]_{s,a}  &= \mu^{\pi_b}(s)\pi_b(a\,|\,s) (H(Q_1)-H(Q_2))_{s,a} + (1-\mu^{\pi_b}(s)\pi_b(a\,|\,s))(Q_1(s,a)-Q_{2}(s,a))\\
        &\leq\mu^{\pi_b}(s)\pi_b(a\,|\,s) \|H(Q_1)-H(Q_2)\|_{\infty} + (1-\mu^{\pi_b}(s)\pi_b(a\,|\,s))\|Q_1-Q_{2}\|_\infty\\
        &\leq\mu^{\pi_b}(s)\pi_b(a\,|\,s) \gamma\|Q_1-Q_2\|_{\infty} + (1-\mu^{\pi_b}(s)\pi_b(a\,|\,s))\|Q_1-Q_{2}\|_\infty \\
        &=(1-(1-\gamma)\mu^{\pi_b}(s)\pi_b(a\,|\,s))\|Q_1-Q_{2}\|_\infty\\
        &\leq\left(1-(1-\gamma)\rho_b\right)\|Q_1-Q_{2}\|_\infty,
    \end{align*}
    where in the second inequality we used that the Bellman optimality operator is a $\gamma$-contraction. It follows that
    \begin{align*}
        \|\bar{F}(Q_1)-\bar{F}(Q_2)\|_{\infty} \leq \left(1-(1-\gamma)\rho_b\right)\|Q_1-Q_{2}\|_\infty.
    \end{align*}
    Hence, we can write
    \begin{align*}
        \|\bar{F}(Q_1)-\bar{F}(Q_2)\|_{p} &\leq(|\mathcal{S}||\mathcal{A}|)^{1/p}\|\bar{F}(Q_1)-\bar{F}(Q_2)\|_{\infty}\\
        &\leq (|\mathcal{S}||\mathcal{A}|)^{1/p} \left(1-(1-\gamma)\rho_b\right)\|Q_1-Q_2\|_\infty\\
        &\leq (|\mathcal{S}||\mathcal{A}|)^{1/p} \left(1-(1-\gamma)\rho_b\right)\|Q_1-Q_2\|_p.
    \end{align*}
    Then, for any
    \[ p>p_{\min}:=\frac{\ln\left(|\mathcal{S}||\mathcal{A}|\right)}{\ln\left(\frac{1}{1-(1-\gamma)\rho_b}\right)}\geq 2, \]
    Assumption \ref{ass:contraction} is satisfied with $\gamma_c = (|\mathcal{S}||\mathcal{A}|)^{1/p} \left(1-(1-\gamma)\rho_b\right)$.
    \item \textbf{Assumption \ref{ass:subgaussianity_for_all}:} Since the state and action space are both bounded, the noise in the update of $Q$-learning is also bounded. Then, Equation \eqref{eq:sub-Gaussian_1} is satisfied for some $\sigma>0$ by Hoeffding's lemma. In addition, Equation \eqref{eq:sub-Gaussian_2} is satisfied due to Lemma \ref{lem:subgaussian}. 
    \item \textbf{Assumption \ref{ass:norm_smoothness}:} This assumption is satisfied with $M=p-1$ due to Lemma \ref{lem:smoothness_of_p_norm}.
    \item \textbf{Assumption \ref{ass:operators}:} Since $\bar{F}$ is a contraction, Banach's fixed point theorem implies it has a unique fixed point $Q^*$, and Lemma \ref{lem:nu_gammac} implies that $J_{\bar{F}}(x^*)-I$ is invertible. Moreover, Assumption \ref{ass:greedy} implies that $F$ is linear in a neighborhood of $Q^*$, and thus there exists $R>0$ such that $F$ is $(0,R)$-locally psuedo smooth with respect to the $\|\cdot\|_p$ norm.    
    \item \textbf{Assumption \ref{ass:sub-Gaussian}:} We have
    \[
    |(F(Q^*,w_{k+1})-\bar{F}(Q^*))_{s,a}| \leq \frac{2R_{\max}}{1-\gamma}\mathbbm{1}_{\{s=s_k^0\}}.
    \]
    Hence,
    \begin{align*}
        |\langle F(Q^*,w_{k+1}) - \bar{F}(Q^*), v \rangle|&\leq \|F(V^\pi,w_{k+1}) - \bar{F}(V^\pi)\|_{p}^*.\|v\|_{p}\tag{H\"older's inequality}\\
        &\leq \frac{2R_{\max}}{1-\gamma}\|v\|_{p}.
    \end{align*}
    Then,
    \begin{align*}
        \mathbb{E} \left[\exp\left(\lambda \langle F(Q^*,w_{k+1}) - \bar{F}(Q^*), v \rangle \right) \middle| \mathcal{F}_k \right]  &\leq \exp\left(\lambda^2\frac{2R_{\max}^2\|v\|_{p}^2}{(1-\gamma)^2} \right).\tag{Hoffding's lemma}
    \end{align*}
    Therefore, the first part of Assumption \ref{ass:sub-Gaussian} is satisfied with $\bar{\sigma}^2=4R_{\max}^2/(1-\gamma)^2$.

    On the other hand, since we assume finite state and action spaces, and $w_{k+1}$ can only take finitely many values, therefore, by Hoeffding's lemma there exist $\hat{\sigma}<\infty$ such that the second part of Assumption \ref{ass:sub-Gaussian} holds.
\end{itemize}

Furthermore, since $p\geq2$ and $ \|x\|_{p}\leq \|x\|_2$, we have $u_{c2}=1$. By a direct application of Theorem \ref{prop:main} we have that, with probability at least $1-\delta$,
\begin{align*}
\|\bar{Q}_k-Q^*\|_{p}^2 \le\; & \frac{4R_{\max}^2}{(1-\gamma)^2}\frac{24 \,\log(2/\delta)+3 |\mathcal{S}||\mathcal{A}| }{(1-\gamma_c)^2\,(k+1)} + \mathcal{O}(p)\tilde{\mathcal{O}}\left(\frac{1}{k^{3/2}}  \right) \mathcal{O}(\log(1/\delta)).
\end{align*}
Furthermore, we have 
\begin{align*}
    \|\bar{Q}_k-Q^*\|_{p}^2&\geq \|\bar{Q}_k-Q^*\|_{\infty}^2.
\end{align*}
Hence, with probability $1-\delta$, we have
\begin{align}
\|\bar{Q}_k-Q^*\|_{\infty}^2 \le\; & \frac{4R_{\max}^2}{(1-\gamma)^2}\frac{24 \,\log(2/\delta)+3 |\mathcal{S}||\mathcal{A}| }{(1-\gamma_c)^2\,(k+1)} + \mathcal{O}(p)\tilde{\mathcal{O}}\left(\frac{1}{k^{3/2}}  \right) \mathcal{O}(\log(1/\delta)).\label{eq:error_infinity_bound_2}
\end{align}
This result holds for any $p$. In particular, by choosing $p=p_{\min}(k+1)^{1/4}$, we have
\begin{align}
    \frac{1}{1-\gamma_c} &= \frac{1}{1-(|\mathcal{S}||\mathcal{A}|)^{1/p} \left(1-(1-\gamma)\rho_b\right)}\nonumber\\
    &\leq \frac{1}{(1-\gamma)\rho_b}+ \frac{2\left(1-(1-\gamma)\rho_b\right)\ln(|\mathcal{S}||\mathcal{A}|)}{(1-\gamma)^2\rho_b^2}\frac{1}{p}, \label{eq:q_1_gamma_c}
\end{align}
where in the last inequality we used the fact that
\[ \frac{1}{1-ab^x}\leq \frac{1}{1-a} + \frac{2a\ln(b)}{(1-a)^2}x \]
for all $0\leq x\leq (1-a)/(2a\ln(b))$, $0<a<1$ and $b>1$, and we take $x=1/p$, $a=\left(1-(1-\gamma) \rho_b\right)$ and $b=|\mathcal{S}||\mathcal{A}|$. Substituting Equation \eqref{eq:q_1_gamma_c} in Equation \eqref{eq:error_infinity_bound_2}, we get
\begin{align}
\|\bar{Q}_k-Q^*\|_{\infty}^2 &\leq \frac{4R_{\max}^2}{(1-\gamma)^2}\frac{24 \,\log(2/\delta)+3 |\mathcal{S}||\mathcal{A}| }{\,(k+1)} \left(\frac{1}{(1-\gamma)\rho_b}+ \mathcal{O}\left(\frac{1}{k^{1/4}}\right)\right)^2 + \tilde{\mathcal{O}}\left(\frac{1}{k^{5/4}}  \right) \mathcal{O}(\log(1/\delta))\nonumber\\
&\leq\frac{4R_{\max}^2}{(1-\gamma)^4\rho_b^2}\frac{24 \,\log(2/\delta)+3 |\mathcal{S}||\mathcal{A}| }{k+1}  + \tilde{\mathcal{O}}\left(\frac{1}{k^{5/4}}  \right) \mathcal{O}(\log(1/\delta)).\nonumber
\end{align}

\subsection{Proof of Theorem \ref{thm:TDLFA-off}} \label{proof:TDLFA-off}
     The update of off-policy TD$(n)$ can be written as follows:
        \begin{align*}
        v_{k+1}&=v_k+\alpha_k \phi(S_k^0)\sum_{l=0}^{n-1}\gamma^{l}\left[ \prod_{j=0}^{l} \frac{\pi(A_k^j|S_k^j)}{\pi_b(A_k^j|S_k^j)} \right] \left[\mathcal{R}(S_k^l,A_k^l)\!+\!\gamma  \phi(S_k^{l\!+\!1})^\top v_k\!-\!\phi(S_k^l)^\top v_k\right]\\
        &=v_k+\alpha_k \phi(S_k^0)\sum_{l=0}^{n-1}\gamma^{l}\left[ \prod_{j=0}^{l} \frac{\pi(A_k^j|S_k^j)}{\pi_b(A_k^j|S_k^j)} \right] \left[\!\gamma  \phi(S_k^{l\!+\!1})^\top \!-\!\phi(S_k^l)^\top \right]v_k\\
        &\quad~+\alpha_k \phi(S_k^0)\sum_{l=0}^{n-1}\gamma^{l}\left[ \prod_{j=0}^{l} \frac{\pi(A_k^j|S_k^j)}{\pi_b(A_k^j|S_k^j)} \right] \mathcal{R}(S_k^l,A_k^l)\\
        &=(1-\bar{\alpha}_k)v_k+\bar{\alpha}_k(A_k v_k + b_k),
        \end{align*}
        where
        \begin{align*}
            A_k & =I+\frac{1}{\xi}\phi(S_k^0)\sum_{l=0}^{n-1}\gamma^{l}\left[ \prod_{j=0}^{l} \frac{\pi(A_k^j|S_k^j)}{\pi_b(A_k^j|S_k^j)} \right] \left[\!\gamma  \phi(S_k^{l\!+\!1})^\top \!-\!\phi(S_k^l)^\top \right], \\
            b_k&= \frac{1}{\xi}\phi(S_k^0)\sum_{l=0}^{n-1}\gamma^{l}\left[ \prod_{j=0}^{l} \frac{\pi(A_k^j|S_k^j)}{\pi_b(A_k^j|S_k^j)} \right] \mathcal{R}(S_k^l,A_k^l),   
        \end{align*} 
        and $\bar{\alpha}_k=\xi\alpha_k$. Hence, off-policy TD$(n)$ with linear function approximation can be written in the form of Equation \eqref{eq:SA_rec} with $F(v_k,w_{k+1}) = A_kv_k+b_k$, where $w_{k+1} = [(S_k^i,A_k^i)]_{0\leq i\leq n}$. 
        
        Next, we verify that the assumptions of Theorem \ref{thm:multiplicative_SA} are satisfied
        \begin{itemize}
        \item \textbf{Assumption \ref{ass:contraction}:} As shown in \cite[Proposition 3.1]{chen2021finite} we have
        \[ \bar{F}(v_k)=v_k+\frac{1}{\zeta}\Phi^\top \mathcal{K}^{\pi_b} ((\gamma P^\pi)^n\Phi v_k - \Phi v_k), \]
        and for large enough $n$, we have that
        \[ \Phi^\top \mathcal{K}^{\pi_b} ((\gamma P^\pi)^n\Phi  - \Phi ) \]
        is a Hurwitz matrix. Hence, by \cite[Page 46 footnote]{bertsekas1995dynamic2} there exists $\zeta$ large enough such that $\bar{F}(\cdot)$ is a contraction with respect to some weighted 2-norm. We denote this norm as $\|\cdot\|_c$.
        \item \textbf{Assumption \ref{as:multi}:}
        \begin{itemize}
            \item[(i)] As mentioned earlier, $\|\cdot\|_c$ is a weighted 2-norm.
            \item[(ii)] Since $F$ is linear, and the state-action space is bounded, this assumption is satisfied for some $L>0$.
            \item[(iii)] This assumption holds due to \cite[Lemma 9]{tsitsiklis1997analysis}
            \item[(iv)] Since $F$ is linear, and the state-action space is bounded, this assumption is satisfied for some $\tau>0$.
            \item[(v)] Since $F$ is linear, and the state-action space is bounded, this assumption is satisfied for some $\Sigma\succeq0$.
        \end{itemize}
        \item \textbf{Assumption \ref{ass:norm_smoothness}:}
        Since the norm $\|\cdot\|_c^2$ is a weighted $2$-norm, then it is smooth.
        \item \textbf{Assumption \ref{ass:operators}:} Since $\bar{F}$ is a contraction, Banach's fixed point theorem implies it has a unique fixed point, and Lemma \ref{lem:nu_gammac} implies that $J_{\bar{F}}(x^*)-I$ is invertible. Moreover, since $F$ is linear, it is $(0,\infty)$-locally psuedo smooth.
        \item \textbf{Assumption \ref{ass:sub-Gaussian}:} Since the noise is bounded, this assumption is satisfied for some $\bar{\sigma}$ and $\hat{\sigma}$.
    \end{itemize}

Applying Theorem \ref{thm:multiplicative_SA} yields the result.

\section{Technical lemmas}

\begin{lemma}\label{lem:martingale_sub}
    Assume $X_1,X_2,\dots$ is a series of martingale difference random vectors in $\mathbbm{R}^d$. In addition, assume $X_i$ to be $\eta_i^2$-subgaussian conditioned on filtration $\mathcal{F}_{i-1}$ generated by the random variables $\{X_1,X_2,\dots,X_{i-1}\}$. Then $X_1+X_2+\dots+X_n$ is a $\sum_{i=1}^n\eta_i^2$-subgaussian random vector.
\end{lemma}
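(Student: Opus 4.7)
The plan is a standard iterated-conditioning (tower property) argument, exploiting the martingale difference structure. Let $S_n = X_1 + \cdots + X_n$ and fix any deterministic vector $v \in \mathbb{R}^d$ and $\lambda \geq 0$. I would proceed by induction on $n$, with the base case $n=1$ being immediate from the assumed subgaussianity of $X_1$.

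For the inductive step, I would write
\begin{align*}
\mathbb{E}\!\left[\exp(\lambda \langle S_n, v \rangle)\right]
= \mathbb{E}\!\left[\exp(\lambda \langle S_{n-1}, v \rangle) \cdot \mathbb{E}\!\left[\exp(\lambda \langle X_n, v \rangle) \,\middle|\, \mathcal{F}_{n-1}\right]\right],
\end{align*}
using that $S_{n-1}$ (and of course $v$) is $\mathcal{F}_{n-1}$-measurable. The inner conditional expectation is controlled by the conditional $\eta_n^2$-subgaussianity hypothesis on $X_n$, yielding the bound $\exp(\lambda^2 \eta_n^2 \|v\|_c^2 / 2)$. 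Pulling this (deterministic) factor out and applying the inductive hypothesis to $S_{n-1}$ gives
\begin{align*}
\mathbb{E}\!\left[\exp(\lambda \langle S_n, v \rangle)\right]
\leq \exp\!\left(\frac{\lambda^2 \|v\|_c^2}{2} \sum_{i=1}^n \eta_i^2\right),
\end{align*}
which is exactly the claimed subgaussianity of $S_n$ with parameter $\sum_{i=1}^n \eta_i^2$.

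There is no real obstacle here; the argument is essentially a bookkeeping exercise in the tower property. The only mildly delicate point is that the subgaussianity definition used in Assumption \ref{ass:sub-Gaussian} is stated for $\mathcal{F}_k$-measurable test vectors $v$, so to be fully rigorous one should note that the bound is then stable under taking $v$ deterministic (or $\mathcal{F}_0$-measurable), which is what we need here. The martingale difference property is implicitly used via the fact that conditional subgaussianity with mean $0$ (a consequence of the standard subgaussian definition) ensures the recursion closes without drift terms.
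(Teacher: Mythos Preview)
Your proposal is correct and essentially identical to the paper's proof: both proceed by induction on $n$, apply the tower property to condition on $\mathcal{F}_{n-1}$, pull out the $\mathcal{F}_{n-1}$-measurable factor $\exp(\lambda\langle S_{n-1},v\rangle)$, bound the inner conditional expectation via the conditional subgaussianity of $X_n$, and then invoke the inductive hypothesis. Your additional remark about deterministic versus $\mathcal{F}_k$-measurable test vectors is a reasonable clarification but not something the paper addresses explicitly.
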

\begin{proof}[Proof:]
We prove this by induction in the number of terms $n$. The case $n=1$ is immediate. Now suppose that, for any $d$-dimensional vector $v$, we have
\begin{equation}
    \E\left[ \exp\left(\lambda\left\langle \sum_{i=1}^{n-1} X_i, v\right\rangle\right)\right] \leq \exp\left(\lambda^2\|v\|^2_c\sum_{i=1}^{n-1} \frac{\eta_i^2}{2} \right). \label{eq:induction_hyp}
\end{equation}
Then,
    \begin{align*}
        \E\left[ \exp\left(\lambda\left\langle \sum_{i=1}^n X_i, v\right\rangle\right)\right] &=\E\left[\E\left[\exp\left(\lambda\left\langle \sum_{i=1}^n X_i,v\right\rangle\right)\Bigg| \mathcal{F}_{n-1}\right]  \right]\\
        &=\E \left[\exp\left(\lambda\left\langle \sum_{i=1}^{n-1} X_i,v\right\rangle\right)\E\left[\exp\left(\lambda\left\langle  X_n,v\right\rangle\right)\Bigg| \mathcal{F}_{n-1}\right]  \right]\\
        &\leq\E \left[\exp\left(\lambda\left\langle \sum_{i=1}^{n-1} X_i, v\right\rangle\right) \right] \exp\left(\lambda^2 \|v\|^2_c \frac{\eta_n^2}{2} \right) \tag{$X_n$ is subgaussian}\\
        &\leq\exp\left(\lambda^2\|v\|^2_c\sum_{i=1}^n \frac{\eta_i^2}{2} \right). \tag{Inductive hypothesis}
    \end{align*}
\end{proof}

\begin{lemma}\label{lem:subgaussian}
    Suppose we have a random vector $X\in \mathbbm{R}^d$ which is $\eta^2$-subgaussian. Then, for all $\lambda< 1/(2\eta^2 u_{c2}^2)$, we have
    \begin{align*}
        \E\left[ \exp\left(\lambda\|X\|^2_2 \right)\right] \leq \frac{1}{(1-2\lambda\eta^2 u_{c2}^2)^{d/2}}
    \end{align*}
\end{lemma}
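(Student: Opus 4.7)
The plan is to use the standard Gaussian integration (a.k.a.\ decoupling) trick to linearize the quadratic form $\|X\|_2^2$ inside the exponential, so that the subgaussian assumption on $X$ can be applied directly. Concretely, let $Z\sim\mathcal{N}(0,I_d)$ be independent of $X$; then for any $\lambda\geq 0$ the identity
\[ \exp\!\left(\lambda\|X\|_2^2\right) \;=\; \mathbb{E}_Z\!\left[\exp\!\left(\sqrt{2\lambda}\,\langle X,Z\rangle\right)\,\Big|\,X\right] \]
holds, since the conditional MGF of $\sqrt{2\lambda}\langle X,Z\rangle$ given $X$ equals $\exp(\lambda\|X\|_2^2)$.

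Taking the expectation in $X$ and swapping the order of integration via Fubini (everything is positive), the problem reduces to bounding $\mathbb{E}_X[\exp(\sqrt{2\lambda}\langle X,Z\rangle)\mid Z]$. Conditional on $Z=v$, the vector $v$ is a deterministic argument and the subgaussianity hypothesis yields
\[ \mathbb{E}_X\!\left[\exp\!\left(\sqrt{2\lambda}\,\langle X,v\rangle\right)\right] \;\leq\; \exp\!\left(\lambda\eta^2\|v\|_c^2\right). \]
Substituting $v=Z$ and using the norm comparison $\|Z\|_c\leq u_{c2}\|Z\|_2$ then gives
\[ \mathbb{E}\!\left[\exp\!\left(\lambda\|X\|_2^2\right)\right] \;\leq\; \mathbb{E}_Z\!\left[\exp\!\left(\lambda\eta^2 u_{c2}^2\,\|Z\|_2^2\right)\right]. \]

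Finally, since $\|Z\|_2^2\sim\chi^2_d$, its moment generating function evaluated at $t=\lambda\eta^2 u_{c2}^2$ is exactly $(1-2t)^{-d/2}$, valid for $t<1/2$, which is precisely the stated range $\lambda<1/(2\eta^2 u_{c2}^2)$. Assembling these inequalities delivers the desired bound. The only subtle point to verify carefully is the use of Fubini and the legitimacy of treating $Z$ as the ``test vector'' in the subgaussian definition; this is justified by the independence of $X$ and $Z$, so conditioning on $Z$ freezes it into a fixed deterministic vector to which the assumption applies.
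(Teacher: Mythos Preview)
Your proof is correct and rests on the same core idea as the paper's: linearize $\exp(\lambda\|X\|_2^2)$ via a Gaussian integral, apply the subgaussian bound to the resulting linear form $\langle X,v\rangle$, and then evaluate the remaining Gaussian expectation. The paper carries this out analytically by integrating both sides of the subgaussian inequality over $v\in\mathbb{R}^d$ coordinate-by-coordinate, whereas you package the identical computation probabilistically via an auxiliary $Z\sim\mathcal{N}(0,I_d)$ and the $\chi^2_d$ MGF; the two arguments are equivalent.
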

\begin{proof}[Proof:]
Since $X$ is subgaussian, we have
\begin{align*}
    \E[\exp(\langle X,v\rangle)] \leq \exp\left( \frac{\eta^2 \|v\|^2_c}{2}\right) \leq \exp\left(\frac{\eta^2 u_{c2}^2\|v\|^2_2}{2}\right).
\end{align*}
Hence, for any $\lambda\in (0,1)$, we have
\begin{align}\label{eq:initial}
    \E\left[\exp\left( \frac{\langle X,v\rangle-u_{c2}^2\|v\|^2\eta^2}{2\lambda}\right)\right] \leq \exp\left(\frac{\eta^2 u_{c2}^2\|v\|^2(\lambda-1)}{2\lambda}\right).
\end{align}
We now integrate both sides with respect to the vector $v$. Integrating the left-hand side, we get
\begin{align}
    &\int\limits_{-\infty}^\infty \int\limits_{-\infty}^\infty\dots\int\limits_{-\infty}^\infty \E\left[\exp\left( \frac{\langle X,v\rangle-u_{c2}^2\|v\|^2_2\eta^2}{2\lambda}\right)\right] dv_d dv_{d-1}\dots dv_{1} \nonumber \\
    &= \E\left[\int\limits_{-\infty}^\infty \int\limits_{-\infty}^\infty\dots\int\limits_{-\infty}^\infty \exp\left( \frac{\langle X,v\rangle-u_{c2}^2\|v\|^2_2\eta^2}{2\lambda}\right) dv_d dv_{d-1}\dots dv_{1}\right] \tag{Fubini's theorem} \nonumber\\
    &= \E\left[\left\{\int\limits_{-\infty}^\infty \exp\left( \frac{X_1 v_1-u_{c2}^2v_1^2\eta^2}{2\lambda}\right) dv_1\right\} \dots \left\{\int\limits_{-\infty}^\infty \exp\left( \frac{X_d v_d -u_{c2}^2v_d^2\eta^2}{2\lambda}\right) dv_d \right\} \right] \nonumber\\
    &= \E\left[\left\{\frac{\sqrt{2\pi\lambda}}{\eta u_{c2}} \exp\left(\frac{\lambda X_{1}^2}{2\eta^2u_{c2}^2}\right)\right\} \dots \left\{\frac{\sqrt{2\pi\lambda}}{\eta u_{c2}} \exp\left(\frac{\lambda X_{d}^2}{2\eta^2u_{c2}^2}\right)\right\} \right] \nonumber\\
    &= \left(\frac{\sqrt{2\pi\lambda}}{\eta u_{c2}}\right)^d \E\left[ \exp\left( \frac{\lambda\|X\|^2_2}{2\eta^2u_{c2}^2}\right) \right]. \label{eq:left-hand_side}
\end{align}
Integrating the right-hand side of Equation \eqref{eq:initial}, we get
    \begin{align}
    &\int\limits_{-\infty}^\infty \int\limits_{-\infty}^\infty\dots\int\limits_{-\infty}^\infty \exp\left( \frac{u_{c2}^2\eta^2 \|v\|^2_2(\lambda-1)}{2\lambda}\right) dv_d dv_{d-1}\dots dv_{1} \nonumber\\
    &=\left\{\int\limits_{-\infty}^\infty  \exp\left( \frac{u_{c2}^2\eta^2 v_1^2(\lambda-1)}{2\lambda}\right) dv_1\right\} \dots \left\{\int\limits_{-\infty}^\infty  \exp\left( \frac{u_{c2}^2\eta^2 v_d^2(\lambda-1)}{2\lambda}\right) dv_d\right\}\nonumber\\
    &= \left(\frac{\sqrt{2\pi\lambda}}{\eta u_{c2}}\right)^d \frac{1}{(1-\lambda)^{d/2}}.\label{eq:right_hand_side}
    \end{align}
    Combining equations \eqref{eq:left-hand_side}, \eqref{eq:right_hand_side}, and  \eqref{eq:initial}, we obtain
    \begin{align*}
        \E\left[ \exp\left(\frac{\lambda\|X\|^2_2}{2\eta^2u_{c2}^2} \right) \right] \leq  \frac{1}{(1-\lambda)^{d/2}}.
    \end{align*}
    By a change of variable, we get the result. 
\end{proof}

\begin{lemma}\label{lem:nu_gammac}
    Consider a differentiable operator $\bar{F}:\mathbb{R}^d\rightarrow \mathbb{R}^d$ that is a $\gamma_c$-pseudo-contraction with respect to some norm $\|\cdot\|_c$, with fixed point $x^*$. Then, we have
    \begin{align*}
        \nu = \min_{z\in\mathbb{R}^d} \left\{ \frac{\|(J_{\bar{F}}(x^*)-I)z\|_c}{\|z\|_c} \right\} \geq 1-\gamma_c.
    \end{align*}
\end{lemma}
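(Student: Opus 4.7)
The plan is to reduce the claim to showing $\|J_{\bar{F}}(x^*)z\|_c \leq \gamma_c \|z\|_c$ for all $z$, and then combine this with the reverse triangle inequality. The first step is the algebraic observation
\[
\|(J_{\bar{F}}(x^*)-I)z\|_c = \|z - J_{\bar{F}}(x^*)z\|_c \geq \|z\|_c - \|J_{\bar{F}}(x^*)z\|_c,
\]
so the whole inequality $\nu \geq 1-\gamma_c$ would follow from the Jacobian bound above (by dividing through by $\|z\|_c$ and taking the minimum over $z \neq 0$).

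To establish $\|J_{\bar{F}}(x^*)z\|_c \leq \gamma_c\|z\|_c$, I would fix an arbitrary $z\in\mathbb{R}^d$, set $x = x^* + t z$ for $t>0$, and use the pseudo-contraction assumption together with $\bar{F}(x^*)=x^*$ to write
\[
\|\bar{F}(x^* + tz) - x^*\|_c \leq \gamma_c\, t\, \|z\|_c.
\]
Then, differentiability of $\bar{F}$ at $x^*$ gives the first-order expansion
\[
\bar{F}(x^* + tz) = x^* + t\, J_{\bar{F}}(x^*)z + r(t),
\]
with $\|r(t)\|_c = o(t)$. Substituting and dividing by $t$ yields
\[
\left\| J_{\bar{F}}(x^*)z + r(t)/t \right\|_c \leq \gamma_c\,\|z\|_c,
\]
so passing to the limit $t\to 0^+$ gives the desired Jacobian bound.

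The proof is essentially mechanical once one recognizes that the pseudo-contraction inequality at the fixed point is exactly the statement that the directional derivative of $\bar{F}$ at $x^*$ is bounded in $\|\cdot\|_c$ by $\gamma_c$. There is no real obstacle; the only thing to be mildly careful about is that the norm $\|\cdot\|_c$ is continuous (hence one can pass to the limit inside the norm) and that the remainder $r(t)$ is controlled in $\|\cdot\|_c$, which follows because all norms on $\mathbb{R}^d$ are equivalent.
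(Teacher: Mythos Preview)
Your proposal is correct and follows essentially the same approach as the paper: both argue via the reverse triangle inequality that it suffices to bound the operator norm of $J_{\bar{F}}(x^*)$ by $\gamma_c$, and both obtain this bound by combining the pseudo-contraction inequality at $x^*$ with the first-order expansion of $\bar{F}$. Your version is in fact slightly cleaner, since you fix $z$ and send the scalar $t\to 0^+$ directly, whereas the paper takes a supremum over $y$ first and then constructs a sequence $y_n=\tilde{y}/n$ to kill the $o(\|y\|_c)/\|y\|_c$ remainder.
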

\begin{proof}[Proof:]
We have
\begin{align*}
    \gamma_c &\geq \left\| \frac{\bar{F}(x^*+y)-\bar{F}(x^*)}{\|y\|_c} \right\|_c \tag{Pseudo-contraction property} \\
    &= \left\| J_{\bar{F}}(x^*) \frac{y}{\|y\|_c} + \tilde{h}(y) \right\|_c  \tag{definition of Jacobian matrix} \\
    &\geq \left\| J_{\bar{F}}(x^*) \frac{y}{\|y\|_c} \right\|_c - \frac{o(\|y\|_c)}{\|y\|_c},\tag{triangle inequality}
\end{align*}
where by the definition of the Jacobian, we used the fact that  $\|\tilde{h}(y)\|_c = o(\|y\|_c)/\|y\|_c$. By taking the supremum over all vectors $y\in\mathbb{R}^d$, we have
\begin{align*}
    \gamma_c \geq \sup_{y\in\mathbb{R}^d} \left\{ \left\| J_{\bar{F}}(x^*) \frac{y}{\|y\|_c} \right\|_c - \frac{o(\|y\|_c)}{\|y\|_c} \right\}.
\end{align*}
Note that 
\[ \left\| J_{\bar{F}}(x^*) \frac{y}{\|y\|_c} \right\|_c - \frac{o(\|y\|_c)}{\|y\|_c}  \leq \max_{y\in\mathbb{R}^d} \left\{ \left\| J_{\bar{F}}(x^*) \frac{y}{\|y\|_c} \right\|_c \right\}, \]
which is attained at some $\tilde{y}$ (because it is equivalent to maximizing a continuous function over a compact set). Moreover, note that for every constant $r\neq 0$, the vector $r\tilde{y}$ also attains the maximum above. Let $y_n = \tilde{y}/n$. Then, 
\begin{align*}
    \lim\limits_{n\to\infty} \left\| J_{\bar{F}}(x^*) \frac{y_n}{\|y_n\|_c} \right\|_c - \frac{o(\|y_n\|_c)}{\|y_n\|_c} &= \max_y \left\{ \left\| J_{\bar{F}}(x^*) \frac{y}{\|y\|_c} \right\|_c \right\} - \lim\limits_{n\to\infty} \frac{o(1/n)}{1/n} \\
    &= \max_{y\in\mathbb{R}^d} \left\{ \left\| J_{\bar{F}}(x^*) \frac{y}{\|y\|_c} \right\|_c \right\}.
\end{align*}
It follows that
\[ \sup_{y\in\mathbb{R}^d} \left\{ \left\| J_{\bar{F}}(x^*) \frac{y}{\|y\|_c} \right\|_c - \frac{o(\|y\|_c)}{\|y\|_c} \right\} = \max_{y\in\mathbb{R}^d} \left\{ \left\| J_{\bar{F}}(x^*) \frac{y}{\|y\|_c} \right\|_c \right\}, \]
and thus
\begin{align}\label{eq:gamma_c}
\gamma_c \geq \max_{y\in\mathbb{R}^d} \left\{ \left\| J_{\bar{F}}(x^*) \frac{y}{\|y\|_c} \right\|_c \right\}. 
\end{align} 
Hence,
\begin{align*}
\nu &= \min_{z\in\mathbb{R}^d} \left\{\frac{\|(J_{\bar{F}}(x^*)-I)z\|_c}{\|z\|_c} \right\} \\
&\geq \min_{z\in\mathbb{R}^d} \left\{ \frac{\|z\|_c - \|J_{\bar{F}}(x^*)z\|_c}{\|z\|_c} \right\} \tag{triangle inequality}\\
& = 1- \max_{z\in\mathbb{R}^d} \left\{ \frac{\|J_{\bar{F}}(x^*)z\|_c}{\|z\|_c} \right\} \\
& \geq 1- \gamma_c.\tag{Eq. \eqref{eq:gamma_c}}
\end{align*}
\end{proof}

\begin{lemma}\label{lem:smoothness_of_p_norm}
Let \(p\ge 2\), \(q=p/(p-1)\), and consider the weights \(w=(w_1,\dots,w_d)\in\mathbb{R}_{>0}^d\).
Define the weighted norms
\[  \|x\|_{p,w}:=\left(\sum_{i=1}^d w_i|x_i|^{p}\right)^{1/p}, \qquad \text{and} \qquad 
  \|u\|_{q,w^{\#}}:=\left(\sum_{i=1}^d w_i^{\#}|u_i|^{q}\right)^{1/q},
\]
where $w_i^{\#}:=w_i^{-1/(p-1)}$. Let \(f(x)=\tfrac12\|x\|_{p,w}^{2}\). Then, for all \(x,y\in\mathbb{R}^d\), we have
\[
  \|\nabla f(x)-\nabla f(y)\|_{q,w^{\#}}
  \;\le\;(p-1)\,\|x-y\|_{p,w}.
\]
\end{lemma}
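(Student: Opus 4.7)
The plan is to reduce the weighted statement to the classical unweighted fact that $\tfrac12\|\cdot\|_p^2$ is $(p-1)$-smooth with respect to $\|\cdot\|_p$ for $p\ge 2$, via a carefully chosen diagonal rescaling. I would first introduce the change of variables $y_i := w_i^{1/p}\,x_i$ and $y'_i := w_i^{1/p}\,x'_i$, so that
\[
  \|x-x'\|_{p,w}=\left(\sum_i w_i|x_i-x'_i|^p\right)^{1/p}=\|y-y'\|_p,
\]
and therefore $f(x)=\tfrac12\|y\|_p^2=:g(y)$. Setting $F(y)=\nabla g(y)$, the chain rule gives $\nabla f(x)_i = w_i^{1/p} F(y)_i$.

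Next I would compute the dual-weighted norm of the gradient difference and check that the weights cancel. Since $q=p/(p-1)$, we have $q/p = 1/(p-1)$, and the dual weight is $w_i^{\#}=w_i^{-1/(p-1)}$, so
\[
  \|\nabla f(x)-\nabla f(x')\|_{q,w^{\#}}^{q}
  = \sum_i w_i^{-1/(p-1)}\,w_i^{\,q/p}\,|F(y)_i-F(y')_i|^{q}
  = \sum_i |F(y)_i-F(y')_i|^{q},
\]
because $-\tfrac{1}{p-1}+\tfrac{q}{p}=0$. Therefore
\[
  \|\nabla f(x)-\nabla f(x')\|_{q,w^{\#}} = \|\nabla g(y)-\nabla g(y')\|_{q}.
\]

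At this point the weighted claim reduces to $\|\nabla g(y)-\nabla g(y')\|_{q}\le (p-1)\,\|y-y'\|_p$, the classical $(p-1)$-smoothness of $\tfrac12\|\cdot\|_p^2$ with respect to $\|\cdot\|_p$ for $p\ge 2$. I would invoke this as a standard fact (e.g. from Ball--Carlen--Lieb, or \cite{shalev2012online}), noting that its usual proof proceeds via Hanner's inequality or via a direct scalar two-point estimate combined with H\"older's inequality. Combining this bound with the two identities above yields the claim.

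The only delicate step is the bookkeeping of exponents in the weight cancellation, which is why I would highlight the computation $-1/(p-1)+q/p=0$ explicitly; beyond that, the argument is essentially a change of variables to the unweighted case, and the genuinely nontrivial content (the Lipschitz constant $p-1$) is deferred to the standard unweighted result.
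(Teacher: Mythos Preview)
Your proposal is correct and follows essentially the same approach as the paper: reduce to the unweighted case via the diagonal rescaling $y_i=w_i^{1/p}x_i$, verify that the dual weights cancel so that $\|\nabla f(x)-\nabla f(x')\|_{q,w^{\#}}=\|\nabla g(y)-\nabla g(y')\|_q$, and then invoke the standard $(p-1)$-smoothness of $\tfrac12\|\cdot\|_p^2$ (the paper cites \cite[Example~5.11]{beck2017first} rather than Ball--Carlen--Lieb or \cite{shalev2012online}, but the content is the same). The only cosmetic difference is that the paper packages the rescaling as a linear map $T=W$ and states the dual identity as $\|u\|_{q,w^{\#}}=\|T^{-1}u\|_q$, whereas you do the exponent bookkeeping $-1/(p-1)+q/p=0$ explicitly.
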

\begin{proof}[Proof:] 
Introduce the diagonal map $T:\mathbb{R}^d \to \mathbb{R}^d$ such that $T(x):=W x$, 
where $W$ is a diagonal matrix with that $i$-th diagonal entry equal to $w^{1/p}_i$. Then, we have
\begin{align}\label{eq:*}    
  \|x\|_{p,w}=\|T(x)\|_{p}, \qquad \text{and} \qquad  \|u\|_{q,w^{\#}}=\|T^{-1}(u)\|_{q}.
\end{align} 
Let \(g(z)= \tfrac12\|z\|_{p}^{2}\). Since \(f=g\circ T\) and \(W\) is diagonal, by the chain rule, we have
\begin{align}\label{eq:**}
  \nabla f(x)=T\bigl(\nabla g\bigr)\!\bigl(T(x)\bigr).
\end{align} 
On the other hand, as shown in \cite[Example 5.11]{beck2017first}, for \(p\ge 2\) we have
\begin{align}\label{eq:***}
  \|\nabla g(z)-\nabla g(z')\|_{q}
    \;\le\;(p-1)\|z-z'\|_{p},\qquad\forall z,z'\in\mathbb{R}^d.
\end{align}
Fix \(x,y\), and set \(z=T(x)\), \(z'=T(y)\). We have
\begin{align}
  \|\nabla f(x)-\nabla f(y)\|_{q,w^{\#}}
    &=\|T\!\bigl[\nabla g(z)-\nabla g(z')\bigr]\|_{q,w^{\#}}\tag{Eq. \eqref{eq:**}}\\
    &=\|\nabla g(z)-\nabla g(z')\|_{q}
       \tag{Eq. \eqref{eq:*}}\\
    &\le(p-1)\|z-z'\|_{p}
       \tag{Eq. \eqref{eq:***}}\\
    &=(p-1)\|x-y\|_{p,w}.
       \tag{Eq. \eqref{eq:*}}
\end{align} 
The chain above yields the desired Lipschitz constant \(L=p-1\):
\[  \|\nabla f(x)-\nabla f(y)\|_{q,w^{\#}} \leq (p-1)\|x-y\|_{p,w}. \]
\end{proof}
\end{document}